\theoremstyle{plain}
\newtheorem{theorem}{Theorem}
\newtheorem{conjecture}{Conjecture}
\newtheorem{definition}{Definition}[section]
\newtheorem{lemma}[definition]{Lemma}
\newtheorem{corollary}[definition]{Corollary}
\newtheorem{proposition}[definition]{Proposition}
\newtheorem*{theorem*}{Theorem}
\newtheorem*{theoremA}{Theorem~\ref{thm:supfundimbounds}}
\newtheorem*{conjectureA}
{Conjecture~\ref{c:pPdimequalsfundim}}
\theoremstyle{definition}
\newtheorem{example}[definition]{Example}
\newtheorem{question}[definition]{Question}
\theoremstyle{remark}
\newtheorem{remark}[definition]{Remark}
\newcommand{\DTheta}{\mathcal{D}_{\theta_0}}
\newcommand{\bFD}{dim_{ba.fun}}
\newcommand{\FD}{dim_{fun}}
\newcommand{\DepsilonTheta}{\mathcal{D}_{\theta_0,\epsilon}}
\newcommand{\VCD}{dim_{VC}}
\newcommand{\pVCD}{dim_{VC\Delta}}
\newcommand{\ppVCD}{dim_{p.{VC\Delta}}}
\title{On Functional Dimension and Persistent Pseudodimension}
\author{J. Elisenda Grigsby\footnote{Equal contribution} \ and Kathryn Lindsey\footnotemark[1]\\
Department of Mathematics, Boston College\\
\texttt{grigsbyj@bc.edu, lindseka@bc.edu}
}
\date{\today}
\begin{document}

\maketitle

\begin{abstract}
For any fixed feedforward ReLU neural network architecture, it is well-known that many different parameter settings can determine the same function. It is less well-known that the degree of this redundancy is inhomogeneous across parameter space. In this work, we discuss two locally-applicable complexity measures for ReLU network classes and what we know about the relationship between them: (1) the local functional dimension \cite{GLMW,BP1}, and (2) a local version of VC dimension that we call {\em persistent pseudodimension}. The former is easy to compute on finite batches of points; the latter should give local bounds on the generalization gap, which would inform an understanding of the mechanics of the double descent phenomenon \cite{Belkin}. \end{abstract}

\section{Introduction}
Recall that a {\em labeled training set} for a regression task is a finite set \[\mathcal{D} = \left\{\left(x^{(i)},y^{(i)}\right) \in \mathbb{R}^{n_0} \times \mathbb{R}^{n_d} \right\}_{i=1}^m\] of pairs drawn i.i.d. from an unknown data-generating probability distribution \[\mathcal{P}: \mathbb{R}^{n_0} \times \mathbb{R}^{n_d} \rightarrow [0, \infty).\]
The goal of any supervised learning algorithm is to effectively predict the outputs from the inputs on {\em unseen} data drawn from $\mathcal{P}$. One typically does this by running an optimization algorithm on the parameter space, $\Omega$, of a fixed parameterized function class $\mathcal{F}: \Omega \times \mathbb{R}^{n_0} \rightarrow \mathbb{R}^{n_d}$ to find a function $F_\theta$ minimizing the empirical mean squared error on a training set $\mathcal{D}$:
	\[\mathcal{L}_\mathcal{D}(\theta) := \frac{1}{2m} \sum_{i=1}^m \left(F_\theta(x^{(i)}) - y^{(i)}\right)^2.\] The classical bias-variance trade-off suggests that  interpolating parameters, i.e., those $\theta \in \Omega$ for which $\mathcal{L}_\mathcal{D}(\theta) = 0$, should correspond to functions that overfit $\mathcal{D}$, hence generalize poorly. But it has been observed empirically and proved theoretically in certain settings that 
many modern heavily-overparameterized function classes admit high-dimensional spaces of interpolating solutions, and moreover those solutions found by optimization algorithms like stochastic gradient descent perform well on test data unobserved during training. Why?

Any hypothesis one might use to explain the observed phenomena requires that the generalization behavior of an overparameterized function class is inhomogeneous within the class, and that stochastic gradient descent has an implicit bias towards functions that generalize better, cf. \cite{Belkin, Cooper}, the introductory sections of \cite{BMR,MM,BP2}, and \cite{BGKP} for a recent survey, which includes a bibilography of theoretical and experimental results in this direction. 

Note that this circle of ideas is closely connected to the questions of {\em redundancy} and {\em symmetry} in modern parameterized function classes. Unlike classical function classes like polynomials, modern parameterized function classes such as deep neural networks with ReLU activation do not admit a one-to-one correspondence between parameter settings and functions, cf. \cite{RolnickKording, PhuongLampert}.

In \cite{GLMW}, we and collaborators defined a local notion of complexity for parameterized ReLU neural network function classes called {\em functional dimension}.  It is a notion of complexity assigned to each {\em parameter} $\theta$ within a function class, and not to the function class itself. We denote it by \[\FD(\theta).\] This notion was independently discovered and studied in \cite{BP1, BP2}. In the latter work, the authors moreover proved that the local functional dimension gives a lower bound for a local version of the {\em fat-shattering dimension}, a classical measure of generalization behavior that one can view as a variant of the VC dimension. In \cite{GLR}, we and collaborators empirically investigated the distribution of (an approximation of) functional dimension, and found that this approximation is indeed inhomogeneous across parameter space. That is, different parameters have different local functional dimensions, and the functional dimension appears to be a better measure of complexity than the parametric dimension.

The primary purpose of the present work is to relate functional dimension to another refined local version of VC dimension for piecewise polynomial function classes, which we call the {\em persistent pseudodimension}.  Note that -- as is the case for the local functional dimension defined in \cite{GLMW} -- the persistent pseudodimension is an invariant of a {\em parameter} $\theta$ in a function class and not of the function class itself. We denote it by \[\ppVCD(\theta).\]We believe this notion is natural, and our goal in the sequel is to establish generalization bounds that are inhomogeneous in parameter space.

As in \cite{BP2}, we prove that the local functional dimension is a lower bound on the persistent pseudodimension.  Here we take the additional step of establishing explicit connections between the persistent pseudodimension and two different notions of rank for matrices over the polynomial ring $R = \mathbb{R}[\vec{\theta}]$ of formal parameters for the  function class: the row rank with respect to $R$, and the row rank with respect to $\mathbb{R}$. This allows us to establish both lower and upper bounds on the persistent pseudodimension of every parameter $\theta \in \Omega$ away from a Lebesgue measure zero set: 

\begin{theoremA}
    Let $\mathcal{F}$ be a  parameterized family of ReLU neural network functions of fixed architecture and output dimension $1$ and parameter space $\Omega = \mathbb{R}^D$. For almost all $\theta \in \Omega$, \[\FD(\theta) \leq \ppVCD(\theta) \leq \sup_{Z \in {\bf Z}} \left\{r_\mathbb{R}({\bf J}E^R_Z(\theta))\right\} .\]
\end{theoremA}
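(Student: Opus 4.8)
The plan is to prove the two inequalities separately, exploiting a common geometric picture. For a batch $Z$ (with a fixed activation pattern) write $\psi_Z := E^R_Z$ for the associated formal evaluation map $\theta' \mapsto (F_{\theta'}(z))_{z\in Z}$, so that $\psi_Z \colon \mathbb{R}^D \to \mathbb{R}^{|Z|}$ is a single polynomial map on the parameter cell carrying that pattern. For any fixed $Z$ the parameters placing a point of $Z$ on a wall of an activation region form finitely many hypersurfaces in $\Omega$, hence a null set, so I will work with $\theta$ in the interior of its cell; on a small neighborhood $U$ of such a $\theta$ the local class $\{F_{\theta'}:\theta'\in U\}$ is realized by the single polynomial map $\psi_Z|_U$. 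The key translation is that $(Z,t)$ is persistently pseudo-shattered at $\theta$ precisely when, for every neighborhood $U$, the image $\psi_Z(U)$ meets all $2^{|Z|}$ open orthants cut out by the hyperplanes $\{x_i=t_i\}$. Shrinking $U$ forces $\psi_Z(\theta)=t$ (since $\bigcap_s \overline{O_s}=\{t\}$), so $t$ is the common corner and the question becomes whether the local image of $\psi_Z$ \emph{surrounds} its basepoint in all coordinate directions.

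For the lower bound $\FD(\theta)\le \ppVCD(\theta)$ I would follow \cite{BP2}. By definition $\FD(\theta)$ is the rank $k$ of ${\bf J}E^R_Z(\theta)$ for a batch $Z$ realizing it (with the true activation pattern at $\theta$). Choosing $k$ points whose Jacobian rows are independent, the corresponding map is a submersion at $\theta$; since a submersion is an open map, its image contains a Euclidean neighborhood of $\psi_Z(\theta)$. Taking thresholds $t=\psi_Z(\theta)$, this neighborhood meets all $2^k$ open orthants, and it does so inside $\psi_Z(U)$ for every $U$. Hence $(Z,t)$ is persistently pseudo-shattered and $\ppVCD(\theta)\ge k=\FD(\theta)$.

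For the upper bound I would argue directly from the surrounding property. The linear core is the elementary lemma that a subspace $L\subseteq \mathbb{R}^k$ meeting every open orthant must equal $\mathbb{R}^k$: if $0\ne a\perp L$, then every $v$ in the open orthant with sign pattern $\operatorname{sign}(a)$ satisfies $a\cdot v>0$, so that orthant avoids $L$. Applied to $L=\operatorname{im}\big(d(\psi_Z)_\theta\big)$, whose dimension is exactly $r_\mathbb{R}({\bf J}E^R_Z(\theta))$, this shows that if the surrounding were already visible \emph{to first order} (the tangent space of the local image meets all orthants) then $|Z|=r_\mathbb{R}({\bf J}E^R_Z(\theta))$, giving the bound. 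The difficulty is that surrounding can be produced purely by curvature: the graph of $(a_1,a_2)\mapsto a_1^2-a_2^2$ is a smooth rank-$2$ image in $\mathbb{R}^3$ whose tangent plane lies in the coordinate hyperplane $\{x_3=0\}$ yet which meets all $8$ orthants arbitrarily near the saddle. My plan is to show that a center $\theta$ at which surrounding is achieved with $|Z|>r_\mathbb{R}({\bf J}E^R_Z(\theta))$ is necessarily degenerate — the tangent space $L$ collapses into a coordinate hyperplane $\{x_i=0\}$, equivalently a Jacobian row $\nabla_\theta F_\theta(z_i)$ vanishes — and that for each fixed combinatorial type this confines $\theta$ to a proper, hence null, subvariety. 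Passing between the generic rank $r_R({\bf J}E^R_Z)$ and its specialization $r_\mathbb{R}({\bf J}E^R_Z(\theta))$, which agree off a null set, then yields the stated bound for almost all $\theta$.

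The step I expect to be the main obstacle is controlling these exceptional null sets \emph{uniformly} over the batch set ${\bf Z}$, which is uncountable because the points of each $Z$ vary continuously in $\mathbb{R}^{n_0}$. The plan is to organize the argument by the finitely many activation patterns at each fixed batch size, to view the bad centers as the projection to $\Omega$ of a semialgebraic incidence locus in $(\theta,Z,t)$-space cut out by the vanishing of the relevant Jacobian rows together with the surrounding condition, and to invoke Tarski--Seidenberg with a Sard-type argument to conclude that this projection is Lebesgue-null. Making precise the implication ``curvature-only surrounding forces coordinate-hyperplane tangency'' — ideally through a real-analytic curve selection lemma applied inside each orthant — is the technical heart, and is exactly where the polynomiality of $\psi_Z$ is indispensable: mere $C^1$ smoothness admits oscillatory counterexamples such as the graph of $a^3\sin(1/a)$, a rank-$1$ curve meeting all four quadrants near the origin, which real-analyticity excludes.
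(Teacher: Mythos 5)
Your lower bound is essentially the paper's argument (Proposition \ref{p:lowerbound}): restrict to a subbatch on which the Jacobian rows are independent, note that the evaluation map is a submersion there, and use openness of the image to realize all sign patterns of the differences on every neighborhood of $\theta$. The upper bound, however, has a genuine gap, and it begins with a misidentification: the dimension of $L=\operatorname{im}\bigl(d(E_Z)_\theta\bigr)$ is $\textrm{rank}\bigl(\mathbf{J}E_Z(\theta)\bigr)$, the rank of the \emph{real} Jacobian, not $r_\mathbb{R}\bigl(\mathbf{J}E^R_Z(\theta)\bigr)$. The latter counts rows of the \emph{formal} Jacobian admitting no constant-coefficient dependence that holds identically as polynomials, and it can strictly exceed the pointwise rank: in the paper's Example \ref{ex:ElisExample} one has $\textrm{rank}=r_R=3$ but $r_\mathbb{R}=4$, and the batch of size $4$ \emph{is} persistently pseudo-shattered at the origin by exactly the saddle mechanism you describe, with no vanishing Jacobian row. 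So ``curvature-only surrounding'' is not a degeneracy to be excluded on a null set; it is a genuine phenomenon compatible with the theorem, and the dichotomy you propose (surrounding visible to first order, or else the tangent space collapses into a coordinate hyperplane) is not the right one. Relatedly, sweeping per-batch exceptional sets under a Tarski--Seidenberg/Sard argument is both unnecessary and hazardous: ${\bf Z}$ is uncountable, so a union of per-batch null sets need not be null, and the theorem's exceptional set must come from conditions on $\theta$ alone (genericity and supertransversality).

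The paper's upper bound (Lemma \ref{l:upperbound}) avoids all of this with the one observation your plan is missing: an $\mathbb{R}$-linear dependence among the rows of $\mathbf{J}E^R_Z(\theta_0)$ is an identity of polynomials, hence holds at every parameter in a neighborhood $U$ on which the formal representation is constant, and therefore \emph{integrates} to an exact affine relation $E_{z_1}(\theta)=c_0+\sum_{i\ge 2}c_iE_{z_i}(\theta)$ valid on all of $U$. Subtracting its value at $\theta_0$ yields a linear relation among the differences that forbids the single sign pattern $\bigl(-1,\textrm{sgn}(c_2),\ldots,\textrm{sgn}(c_{|Z|})\bigr)$, so $Z$ is not persistently pseudo-shattered. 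Because the relation is exact on $U$ rather than first-order at $\theta_0$, no curvature analysis is needed, and the bound holds for \emph{every} $\theta_0$ in the algebraically stable set for $Z$ --- no further exceptional set accumulates over batches. Finally, to pass from parametrically smooth batches to arbitrary ones in the supremum defining $\ppVCD(\theta_0)$, the paper perturbs the \emph{batch} rather than the parameter (Proposition \ref{prop:ParSmooth}, Corollary \ref{cor:ParSmoothPsi}): for generic, supertransversal $\theta_0$, any persistently pseudo-shattered set can be replaced point by point with a parametrically smooth one realizing the same sign patterns. Your opening reduction removes a null set of parameters for each fixed $Z$, which does not control a supremum taken over all $Z$ at a fixed $\theta$.
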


\noindent Here, $E^R_Z(\theta)$ is the \emph{algebraic evaluation map} (Def. \ref{def:algebraicversions}), which outputs a vector whose $i$th entry is the polynomial expression for $F_\theta(z_i)$ (as a function of $\theta$), and $\mathbf{J}$ denotes taking the Jacobian derivative of this map;  thus,  ${\bf J}E^R_Z(\theta))$ is a matrix whose $i$th row is the gradient with respect to the parameters.  Applying $r_\mathbb{R}$ computes the real row rank (Def. \ref{def:realRowRank}) of this matrix, and the supremum is taken over all finite batches of points $Z = \{z_1, \ldots, z_m \}$ in the domain.

Since the polynomials for the piecewise polynomial class of ReLU neural network functions have combinatorial interpretations in terms of paths in a computational graph, there are serious constraints on the types of $\mathbb{R}$--linear dependence relations that can arise among the rows of the Jacobian. Indeed, we prove (Proposition \ref{p:multirankupperbound}) that for almost all batches $Z$ and parameters $\theta$: \[r_\mathbb{R}({\bf J}E^R_Z(\theta)) \leq \mbox{rk}(\alpha(\theta,Z)).\] The matrix $\alpha(\theta,Z)$ is  called the {\em activation matrix} for $\theta, Z$. It was originally defined in \cite{GS,BP1} and incorporates information about the active paths in the computational graph for the architecture in a very natural way. See Sections \ref{sec:CompGraph} and \ref{sec:ActMatrix} for more details.

These considerations, taken together, lead us to conjecture that the upper and lower bounds on the persistent pseudodimension of a parameter in a ReLU neural network class are equal.

\begin{conjectureA} Let $\mathcal{F}$ be a parameterized family of ReLU neural network functions of fixed architecture, output dimension $1$, and paramater space $\Omega = \mathbb{R}^D$. For almost all $\theta \in \Omega$,
\[\mbox{dim}_{fun}(\theta) = \ppVCD(\mathcal{F},\theta).\] 
\end{conjectureA}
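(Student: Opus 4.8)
The plan is to squeeze the persistent pseudodimension between two bounds that we show coincide for almost all $\theta$. Theorem~\ref{thm:supfundimbounds} already gives $\FD(\theta) \leq \ppVCD(\theta) \leq \sup_{Z} r_\mathbb{R}(\mathbf{J}E^R_Z(\theta))$, so it suffices to prove the reverse inequality $\sup_{Z} r_\mathbb{R}(\mathbf{J}E^R_Z(\theta)) \leq \FD(\theta)$ off a Lebesgue measure-zero set. My first step is to re-express the functional dimension algebraically. Fixing a batch $Z$, parameter space is stratified by activation pattern, and on the open stratum containing a generic $\theta$ the algebraic evaluation map $E^R_Z$ is a single polynomial map agreeing with the true evaluation map. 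Since a polynomial matrix drops rank only on the common vanishing locus of its maximal nonzero minors, the real rank of the Jacobian at a generic real point $\theta$ equals the rank of $\mathbf{J}E^R_Z$ over the fraction field of $R$, that is, the $R$-row rank $r_R(\mathbf{J}E^R_Z(\theta))$. Taking the supremum over batches and invoking the characterization of $\FD$ as the generic maximal Jacobian rank (\cite{GLMW,BP1}), I obtain $\FD(\theta) = \sup_{Z} r_R(\mathbf{J}E^R_Z(\theta))$ for almost all $\theta$.

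With this reformulation the chain of inequalities becomes $\sup_{Z} r_R \leq \ppVCD \leq \sup_{Z} r_\mathbb{R}$, and since $r_R(M) \leq r_\mathbb{R}(M)$ for every matrix $M$ over $R$ (an $R$-linear relation being more general than an $\mathbb{R}$-linear one), the conjecture is equivalent to the purely algebraic claim \[\sup_{Z} r_R(\mathbf{J}E^R_Z(\theta)) = \sup_{Z} r_\mathbb{R}(\mathbf{J}E^R_Z(\theta)) \quad \text{for almost all } \theta.\] Equivalently, I must exhibit, near a generic $\theta$, a batch attaining the maximal real row rank that also attains it as an $R$-row rank, so that no \emph{accidental} $\mathbb{R}$-linear independence among the gradient rows survives beyond the independence already detected over the ring.

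To control this gap I would exploit the path-monomial structure of the rows. Each row of $\mathbf{J}E^R_Z$ is the parameter-gradient of $F_\theta(z_i)$, whose entries are, on a fixed activation stratum, sums of monomials indexed by active paths through the computational graph, with the input coordinates of $z_i$ entering as scalar coefficients. Proposition~\ref{p:multirankupperbound} already bounds $r_\mathbb{R}(\mathbf{J}E^R_Z(\theta)) \leq \mbox{rk}(\alpha(\theta,Z))$ through the activation matrix, which packages exactly this path data. The strategy is to construct a single rich batch $Z$, with points spread across activation regions and input coordinates chosen generically, for which the input-dependent coefficients separate the rows enough that their $\mathbb{R}$-span and their $R$-span have equal dimension, while simultaneously $r_R$ meets the activation-matrix bound $\mbox{rk}(\alpha)$. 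Squeezing $\sup_{Z} r_R$ up to $\mbox{rk}(\alpha)$ and $\sup_{Z} r_\mathbb{R}$ down to it would close the gap.

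The main obstacle is precisely this last point: ruling out accidental real-linear dependence among the path-monomial gradient vectors that is invisible over the field of fractions but nonetheless persists under specialization at a generic real $\theta$. Because the monomials are highly structured, being products of weights along paths that share initial and terminal segments, genuine $\mathbb{R}$-linear relations could in principle arise from cancellations that a generic parameter does not destroy. I expect this to require an inductive argument on network depth, peeling off one layer at a time and tracking how the space of $\mathbb{R}$-linear relations among gradients propagates, combined with a simultaneous genericity argument over both the batch $Z$ and the parameter $\theta$ to guarantee that the relations surviving specialization are exactly the field-rank relations. Establishing that these structured monomial systems admit no extra persistent real dependence, equivalently that the activation-matrix rank is generically realized already by the $R$-row rank, is the crux of the conjecture.
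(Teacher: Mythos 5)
This statement is labeled as a \emph{conjecture} in the paper: the authors do not prove it, and your proposal does not prove it either. What you have written is a correct account of why the conjecture is plausible, and it mirrors the paper's own framing almost exactly --- the squeeze $\FD(\theta) \leq \ppVCD(\theta) \leq \sup_Z r_\mathbb{R}(\mathbf{J}E^R_Z(\theta))$ is Theorem~\ref{thm:supfundimbounds}, the identification of $\FD$ with $\sup_Z r_R$ off a null set comes from Lemma~\ref{l:RRstabfullinalgstab} and Corollary~\ref{c:inequalitystring}, and the activation-matrix bound is Proposition~\ref{p:multirankupperbound}. The reduction of the conjecture to the purely algebraic claim $\sup_Z r_R(\mathbf{J}E^R_Z(\theta)) = \sup_Z r_\mathbb{R}(\mathbf{J}E^R_Z(\theta))$, equivalently to an affirmative answer to Question~\ref{q:OtherBound}, is precisely where the paper stops. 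Your final paragraph concedes that ruling out ``accidental'' $\mathbb{R}$-linear dependence among the path-monomial gradient rows is ``the crux of the conjecture'' and offers only a hoped-for induction on depth that is never carried out. That concession is honest, but it means the proposal is a research plan, not a proof: the one step that is actually open is the one you defer.

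Two further cautions. First, Example~\ref{ex:ElisExample} shows that for general piecewise-polynomial families the rank gap $r_R < r_\mathbb{R}$ genuinely occurs and a batch can be persistently pseudo-shattered beyond the $R$-rank, so no argument at the level of generality of your second paragraph can succeed; any proof must exploit the specific combinatorics of path monomials, and you would need to actually exhibit the inductive mechanism you gesture at. Second, your identity $\FD(\theta) = \sup_Z r_R(\mathbf{J}E^R_Z(\theta))$ ``for almost all $\theta$'' quietly interchanges a supremum over uncountably many batches $Z$ with an ``almost every $\theta$ for each fixed $Z$'' statement (Proposition~\ref{p:stabRrksetbig} is per-batch); the union of the exceptional null sets over all finite $Z \subset \mathbb{R}^{n_0}$ need not be null, so even this preliminary reformulation needs an argument (e.g., reducing to finitely many combinatorial types of batch, or restricting to full batches) that you do not supply.
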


A key step in proving the inequality $\textrm{dim}_{\textrm{fun}}(\theta) \leq \ppVCD(\theta)$ of Theorem \ref{thm:supfundimbounds} is understanding how the parameter space is structured locally, in terms of sets of parameters that determine the same function on a finite batch of inputs. We prove that for almost all parameters, these \emph{batch fibers} look as expected:

\begin{theorem*}[Informal version of Theorem \ref{t:batchfiberfoliation}, Batch Fiber Product Structure]
Fix a finite set $Z$ of points in input space $\mathbb{R}^{n_0}$.  For any architecture of feedforward ReLU neural networks $\mathbb{R}^{n_0} \to \mathbb{R}$, locally near almost every parameter $\theta$, the parameter space $\Omega \cong \mathbb{R}^D$ is (diffeomorphic to) a product of a $\textrm{dim}_{\textrm{ba.fun}}(\theta,Z)$-dimensional submanifold consisting of parameters that all determine different functions on $Z$ and a $(D- \textrm{dim}_{\textrm{ba.fun}}(\theta,Z))$-dimensional subspace consisting of directions in which perturbing the parameter does not change the function on $Z$. 
\end{theorem*}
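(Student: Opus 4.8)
The plan is to realize the batch fibers as the level sets of the \emph{batch evaluation map} and then to apply the Constant Rank Theorem at a full-measure set of parameters. Fix $Z = \{z_1, \dots, z_m\}$ and define $E_Z \colon \Omega \to \mathbb{R}^m$ by $E_Z(\theta) = (F_\theta(z_1), \dots, F_\theta(z_m))$; this is the real analogue of the algebraic evaluation map of Definition~\ref{def:algebraicversions}. Two parameters determine the same function on $Z$ precisely when they lie in the same level set of $E_Z$, so the content of Theorem~\ref{t:batchfiberfoliation} is that, generically, these level sets assemble into a product decomposition whose complementary slice injects into $\mathbb{R}^m$. Accordingly, I would identify $\bFD(\theta,Z)$ with the rank of the Jacobian $\mathbf{J}E_Z(\theta)$ and aim to put $E_Z$ into a local normal form.

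First I would isolate the measure-zero set of \emph{bad} parameters. For each neuron $\nu$ and each input $z_i$, the pre-activation of $\nu$ on $z_i$ is a piecewise-polynomial function of $\theta$, so its zero locus is a finite union of pieces of real-analytic hypersurfaces and therefore has Lebesgue measure zero. Let $\mathcal{B}_Z$ be the union of these loci over all pairs $(\nu, z_i)$. On the complement $\Omega \setminus \mathcal{B}_Z$ every neuron has a locally constant activation pattern across all of $Z$, so near each such $\theta$ the map $E_Z$ agrees with a fixed polynomial map $E_Z^P$ determined by the global activation pattern $P$; in particular $E_Z$ is smooth there, and $\bFD(\theta,Z) = \mathrm{rank}\,\mathbf{J}E_Z(\theta)$.

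Next I would show that the Jacobian rank is locally constant off a measure-zero set. Because there are only finitely many activation patterns on $Z$, the map $E_Z$ realizes only finitely many distinct polynomial expressions $\{E_Z^P\}_P$. For each $P$, the rank of the polynomial matrix $\mathbf{J}E_Z^P$ is lower semicontinuous and attains its generic value $k_P$ away from the proper algebraic subvariety $V_P$ cut out by the vanishing of all size-$k_P$ minors; this $V_P$ has measure zero. On the open region where pattern $P$ holds, $E_Z$ coincides with $E_Z^P$, so the rank of $\mathbf{J}E_Z$ is locally constant there, except possibly on $V_P$. Setting $\mathcal{N}_Z = \mathcal{B}_Z \cup \bigcup_P V_P$, a finite union of measure-zero sets, the rank of $\mathbf{J}E_Z$ is constant on a neighborhood of every $\theta \notin \mathcal{N}_Z$. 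I expect this step to be the main obstacle: the care lies in combining the semialgebraic, piecewise structure of $E_Z$ with the algebraic genericity argument, so that the finitely many per-pattern rank-drop loci together with the activation boundaries $\mathcal{B}_Z$ genuinely remain measure zero, and so that local rank constancy (not merely the almost-everywhere maximality of rank) holds on the complement.

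Finally I would invoke the Constant Rank Theorem. At $\theta \notin \mathcal{N}_Z$ with $k = \bFD(\theta,Z) = \mathrm{rank}\,\mathbf{J}E_Z(\theta)$, there exist a local diffeomorphism $\phi$ from a neighborhood $U \ni \theta$ onto $\mathbb{R}^{k} \times \mathbb{R}^{D-k}$ and a diffeomorphism $\psi$ of the target for which $\psi \circ E_Z \circ \phi^{-1}$ is the projection $(u,v) \mapsto u$. This produces the asserted product structure $U \cong \mathbb{R}^{k} \times \mathbb{R}^{D-k}$: the slices $\{u\} \times \mathbb{R}^{D-k}$ are exactly the batch fibers, on which $E_Z$ is constant, so perturbing $\theta$ along them does not change the function on $Z$, yielding the $(D-k)$-dimensional subspace factor; and the complementary slice $\mathbb{R}^{k} \times \{v\}$ is a $k$-dimensional submanifold on which $E_Z$ is injective, so its points determine pairwise-distinct functions on $Z$. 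Once local rank constancy is secured in the previous step, the Constant Rank Theorem supplies this decomposition essentially for free.
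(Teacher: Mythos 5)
Your proposal is correct and follows essentially the same route as the paper: the paper's Proposition~\ref{localdiffeo} is precisely a hand-rolled constant-rank normal form (built from a nonsingular $r\times r$ minor via the Inverse Function Theorem, because the explicit coordinates are needed later for Proposition~\ref{p:lowerbound}), and your sets $\mathcal{B}_Z$ and $\bigcup_P V_P$ correspond to the complement of the algebraically stable set and the rank-drop locus of Lemma~\ref{l:RRstabfullinalgstab}, with Proposition~\ref{p:stabRrksetbig} supplying the full-measure claim. The only point you gloss over is why each pre-activation is a \emph{nonzero} polynomial on each piece (so that its zero locus really is measure zero); the paper handles this by observing that the bias monomial always appears as an additive term.
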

\noindent (Here $\textrm{dim}_{\textrm{ba.fun}}$ denotes \emph{batch functional dimension}, see Def. \ref{def:batchfundim}.)   A consequence of this theorem is that locally near a full measure set of parameters $\theta$, level sets of the empirical loss landscape with respect to the training set $Z$ are unions of these $(D- \textrm{dim}_{\textrm{fun}}(\theta))$-dimensional submanifolds, and gradient vectors for the loss function are constrained to lie in directions orthogonal to these submanifolds. The batch functional dimension can therefore be viewed as the local number of {\em effective trainable parameters} for the batch. 

We note that a special case of the theorem above is when the batch $Z$ is ``big enough'' and ``well-chosen enough'' that specifying the values of $F_{\theta}(z_i)$ uniquely determines the function $F_\theta$ (at least locally, meaning that there is some open neighborhood $U$ of $\theta$ such that for any parameter $\theta' \in U$, $F_{\theta'}(z_i) = F_{\theta}(z_i)$ for all $\theta$ implies $\theta' = \theta$).  In this case,  $\textrm{dim}_{\textrm{ba.fun}}(\theta,Z) = \textrm{dim}_{\textrm{fun}}(\theta)$, and we say that the functional dimension is attained on the batch $Z$.  
Proposition 5.16 of \cite{GLMW} proved that for a generic, transversal, combinatorially stable parameter $\theta$, functional dimension is attained on any \emph{decisive set} $Z$, and gave an explicit definition of a decisive set for such a parameter.

The paper is organized as follows. 
\begin{itemize}
    \item In Section 2, we establish background on parameterized families of functions, focusing on feedforward ReLU neural network families and other piecewise-polynomial families. In Section \ref{sec:CompGraph} we remind the reader how to understand the piecewise-polynomial structure of a ReLU network class via weighted paths in a computational graph. In Section \ref{sec:Rkpoly}, we recall some classical algebraic results about matrices over polynomial rings.
    \item In Section 3, we recall the classical notions of VC dimension and pseudodimension for parameterized function classes and then define the persistent pseudodimension. We close the section by proving a local version of the Perles-Sauer-Shelah lemma, as well as the important technical result (Proposition \ref{prop:ParSmooth}) that a batch on which the persistent pseudodimension is realized can be chosen so that the Jacobian matrix is well-defined on the batch.
    \item In Section \ref{sec:BatchFiber}, we prove Theorem \ref{t:batchfiberfoliation}.
    \item In Section \ref{sec:RankGap}, we establish  upper and lower bounds (Theorem \ref{t:inequalitystring}) on the batch persistent pseudodimension, in terms of the row ranks of the Jacobian matrix with respect to $R=\mathbb{R}[\vec{\theta}]$ and $\mathbb{R}$, as  introduced in Section \ref{sec:Rkpoly}. The lower bound agrees with the batch functional dimension for almost all parameters.
    \item In Section \ref{sec:mainthm} we take suprema over finite batches to prove Theorem \ref{thm:supfundimbounds}.
    \item In Section \ref{sec:ActMatrix} we describe the activation matrix associated to a parameter and prove that its rank is an upper bound for all other ranks described in this paper.
\end{itemize}

\subsection*{Acknowledgments}
Many thanks to Adam Klivans, Fran{c}ois Malgouyres, Joachim Bona-Pellissier, and Matus Telgarsky for interesting conversations. We also thank Francois Malgouyres for valuable comments on an earlier version of the paper. JEG and KL were partially supported by NSF grant \#2133822 through the SCALE MoDL program. JEG was partially supported by Simons grant \#635578.
\section{Parameterized families and associated maps}

\subsection{Preliminary definitions}
\begin{definition} [parameterized family] \label{def:parameterizedfamily}
A {\em parameterized family} is a tuple $(\Omega, \mathbb{R}^{n_0}, \mathbb{R}^{n_d}, \mathcal{F})$ where
\begin{itemize}
    \item $\Omega = \mathbb{R}^D$ is the {\em parameter space},
    \item $\mathbb{R}^{n_0}$ is the {\em input space} or {\em domain},
    \item $\mathbb{R}^{n_d}$ is the {\em output space} or {\em codomain},
    \item $\mathcal{F}: \Omega \times \mathbb{R}^{n_0} \rightarrow \mathbb{R}^{n_d}$ is a continuous function, 
\end{itemize}
\end{definition}

\noindent Throughout the paper, for the sake of notational efficiency, we will use $\mathcal{F}$ to refer to the parameterized family $(\Omega, \mathbb{R}^{n_0}, \mathbb{R}^{n_d}, \mathcal{F})$.

\medskip

The definition below gives several useful maps associated to any parameterized family, and the notation we will use for these maps. (In it, we use the notation $C^0(A,B)$ to denote the collection of continuous maps from a topological space $A$ to a topological space $B$.)  

\begin{definition} [function, realization map, evaluation map]
Let $\mathcal{F}$ be a parameterized family. 
\begin{itemize}
\item For each parameter $\theta \in \Omega$, the \emph{function}  
associated to $\theta$ is the map $F_\theta : \mathbb{R}^{n_0} \to \mathbb{R}^{n_d}$ given by $F_\theta(x) =  \mathcal{F}(\theta,x)$. 
\item The \emph{realization map} is the map $\rho: \Omega \rightarrow C^0(\mathbb{R}^{n_0},\mathbb{R}^{n_d})$ given by 
  $\rho(\theta) = F_{\theta}$, i.e. the realization map assigns to each parameter $\theta \in \Omega$ the associated function 
  $F_{\theta}$. 
\item For each point $z \in \mathbb{R}^{n_0}$, the \emph{evaluation map} $E_z: \Omega \to \mathbb{R}^{n_d}$ is given by 
$E_z(\theta) = \mathcal{F}(\theta,z)$. 

\item For any finite set $Z = \{z_1,\ldots,z_m\} \subset \mathbb{R}^{n_0}$, the \emph{evaluation map}  $\mbox{E}_Z: \Omega \rightarrow (\mathbb{R}^{n_d})^m$ is given by
$E_Z(\theta) = (E_{z_1}(\theta),\ldots,E_{z_m}(\theta))$. 
\end{itemize}
\end{definition}

\begin{remark} Later, we will focus on the case where the map $E_z$ is given locally by a polynomial (whose variables are the coordinates of $\theta$); Definition \ref{def:algebraicversions}  gives a variant of the evaluation map, $E_Z^R$, which outputs formal polynomials. \end{remark}
    
We will be interested in how outputs of $\mathcal{F}$ vary under small perturbations of $\theta$; to that end, we define the parameterized family of differences.

\begin{definition}[parameterized family of differences] 
\label{def:familyofdifferences}
Let $\mathcal{F}$ be a parameterized family and fix a parameter $\theta_0 \in \Omega$. 
\begin{itemize}
    \item Define the \emph{parameterized family of differences} to be the map $\mathcal{D}_{\theta_0}: \Omega \times \mathbb{R}^{n_0} \rightarrow \mathbb{R}^{n_d}$ given by 
\[\mathcal{D}_{\theta_0}(\theta,z) = \mathcal{F}(\theta,z) -\mathcal{F}(\theta_0,z).\]
\item For $\epsilon > 0$, the \emph{parameterized family of differences $\epsilon$--close to $\theta_0$}, denoted $\DepsilonTheta$, is the restriction of $\mathcal{D}_{\theta_0}$ to $B_\epsilon(\theta_0) \times \mathbb{R}^{n_0}$, where $B_\epsilon(\theta_0)$ denotes the open ball of radius $\epsilon$ centered at $\theta_0$. 
\end{itemize}
\end{definition}

\subsection{Functional dimension definitions}

Recall from \cite[Sec. 3]{GLMW} that a {\em piece} of a Euclidean space $\mathbb{R}^n$ is a subset $X \subseteq \mathbb{R}^n$ that can be realized as the closure of a non-empty, open, connected subset of $\mathbb{R}^n$.

\begin{definition}[finitely piecewise smooth/polynomial] (cf. \cite[Defn. 3.3]{GLMW}) \label{def:piecewiseSmoothOrPoly}
A continuous function $f: \mathbb{R}^n \rightarrow \mathbb{R}^m$ is said to be {\em finitely piecewise smooth} (resp. finitely piecewise polynomial) if there exist finitely many pieces $X_1, \ldots, X_k \subseteq \mathbb{R}^{n}$ such that $\mathbb{R}^n = \bigcup_{i=1}^k X_i$ and the restriction of $f$ to the interior of each $X_i$ is smooth (resp. polynomial).   
\end{definition}

In particular, the parameterized family of feedforward ReLU neural networks (using the usual parameterization) of any fixed architecture is a finitely piecewise smooth (more specifically, finitely piecewise polynomial) parameterized function class (cf. \cite[Thm. 3.5]{GLMW}). However, all of the following definitions, made in \cite[Sec. 3]{GLMW}, apply more broadly to any finitely piecewise smooth parameterized function class.

\begin{definition} \label{def:parametricallysmooth}
  Let $\mathcal{F}$ be a finitely piecewise smooth parameterized family, and let $\theta \in \Omega$ be a parameter.  
  \begin{enumerate}[(a)]

  \item A point $z \in \mathbb{R}^{n_0}$ in the domain is said to be \emph{parametrically smooth for $\theta$} if $(\theta,z)$ is a smooth point for $\mathcal{F}$. 
  \item   A finite set $Z \subset \mathbb{R}^{n_0}$ is said to be   \emph{parametrically smooth for $\theta$} if every point $z \in Z$ is parametrically smooth  for  $\theta$.

\item A parameter $\theta \in \Omega$  is said to be \emph{ordinary} if there exists at least one point $z \in \mathbb{R}^{n_0}$ that is parametrically smooth for $\theta$.
\end{enumerate}
\end{definition}

Now assume $\mathcal{F}$ is a finitely piecewise smooth parameterized family, let $\theta \in \Omega$ be an ordinary parameter and $Z = \{z_1, \ldots, z_m\} \subset \mathbb{R}^{n_0}$ be a set of $m$ parametrically smooth points for $\theta$. Then the Jacobian matrix, ${\bf J}E_Z|_{\theta}$, is well-defined. 

\begin{definition} \label{def:batchfundim} Let $\theta \in \Omega$ be an ordinary parameter for a finitely piecewise smooth parameterized family $\mathcal{F}$.  
\begin{enumerate}[(a)]
    \item The {\em batch functional dimension} of $\theta$ for a finite batch $Z \subset \mathbb{R}^{n_0}$ of parametrically smooth points for $\theta$ is  \[\bFD(\theta,Z) := \emph{rank}\left({\bf J}E_Z\vert_\theta\right).\]
    \item The {\em (full) functional dimension} of $\theta$, denoted  $\FD(\theta)$, is the supremum of $dim_{ba.fun}(\theta, Z)$ over all finite, parameterically smooth batches $Z$.
\end{enumerate}
\end{definition} 

\begin{remark}
Denoting by $T_{{\theta}}(\Omega) \cong \mathbb{R}^D$ the tangent space to $\Omega  \cong \mathbb{R}^D$  at  the point  $\theta  \in \Omega$,   the rows of the Jacobian matrix ${\bf J}\mbox{E}_{Z}|_{\theta}$ are the gradient vectors with respect to the parameters of the output components of $\mbox{E}_{z_i}$ for $z_i \in Z$.
Accordingly, they span a subspace of $T_{\theta}(\Omega)$, which we denote by $\mathcal{T}_{\theta}(Z)$. It is immediate from the definitions that $\bFD(\theta,Z) = \mbox{dim}(\mathcal{T}_\theta(Z))$.
\end{remark}
\begin{remark}
For all ordinary $\theta$ and parametrically smooth $Z$, we have
 \[ \bFD(\theta, Z) \leq \FD(\theta) \leq D.\] 

\end{remark}

\begin{definition}
Let $\theta \in \Omega$ be an ordinary parameter. A parametrically smooth set, $Z = \{z_1, \ldots, z_k\}$, is said to be a {\em full batch} for $\theta$ if \[\bFD(\theta,Z)= \FD(\theta).\] 
\end{definition}

\subsection{ReLU neural network architectures as piecewise polynomial parameterized function classes}

In the present work, we will be most interested in parameterized families 
consisting of feedforward neural networks with ReLU activation function with fixed architecture, $(n_0,\ldots,n_{d-1} | n_d)$. We will now briefly recall their definition.

An architecture $(n_0,\ldots,n_{d-1} | n_d)$ determines a parameterized family $\mathcal{F}$ as follows.  
The parameter space is $\Omega = \mathbb{R}^D$, where $D \coloneqq \sum_{\ell=1}^d n_{\ell}(n_{\ell-1} +1)$. The map $\mathcal{F}:\Omega \times \mathbb{R}^{n_0} \to \mathbb{R}^{n_d}$ is defined as follows. Let $\sigma: \mathbb{R}^n \rightarrow \mathbb{R}^n$ denote the function that applies the activation function $\mbox{ReLU}(x):= \max\{0,x\}$ component-wise. A parameter $\theta \in \Omega$ determines a collection (indexed by $1 \leq \ell \leq d$) of $n_{\ell} \times n_{\ell-1}$ weight matrices and $n_{\ell} \times 1$ bias vectors $b^\ell$; we think of the vector $\theta$ as  $\theta := (W^1,b^1, \ldots, W^d, b^d) \in \Omega$,  obtained by unrolling the weight and bias matrices in the standard way.  Then we define the associated ReLU neural network function $F_{\theta}$ as a composition of layer maps: 
\begin{equation} \label{eqn:ReLUFunction} F_\theta: \xymatrix{\mathbb{R}^{n_0} \ar[r]^-{F^1} & \mathbb{R}^{n_1} \ar[r]^-{F^2} & \ldots \ar[r]^-{F^d} & \mathbb{R}^{n_d}},\end{equation} with layer maps given by: \begin{equation} \label{eqn:layermap}
F^\ell(x) := \left\{\begin{array}{cl}\sigma(W^\ell x + b^\ell) & \mbox{for $1 \leq \ell < d$}\\
W^\ell x + b^\ell & \mbox{for $\ell = d$}.\end{array}\right.\end{equation}
(That is, each layer map except the last is the composition of an affine-linear map followed by $\sigma$, which is component-wise application of $\mbox{ReLU}$.) Then the map $\mathcal{F}:\Omega \times \mathbb{R}^{n_0} \to \mathbb{R}^{n_d}$ is defined by setting $\mathcal{F}(\theta,x) = F_\theta(x)$.

Given a composition of layer maps as in \eqref{eqn:ReLUFunction}, we will (following \cite[Definition 4]{Masden}) use the notation 

\begin{align} \label{eq:subscriptnotation}
F_{(\ell)} & \coloneqq F^{\ell} \circ \ldots \circ F^1.
\end{align}
It is well-known (cf. \cite{arora2018,HaninRolnick,GrigsbyLindsey, GLMW}) that the class of ReLU neural network functions $\mathcal{F}:\Omega \times \mathbb{R}^{n_0} \to \mathbb{R}^n$ is finitely piecewise-polynomial. 

\subsection{Piecewise polynomial parameterized function families} \label{sec:PPFF}

In this subsection, we focus on parameterized families $\mathcal{F}$ that are not only continuous but also finitely piecewise polynomial (as is the case for ReLU neural network families).
A central theme of the results in this paper is proving relations (e.g. inequalities) between the ranks of various matrices whose entries are polynomial functions of the parameter $\theta$ and/or the input point $x \in \mathbb{R}^{n_0}$.  Typically, the rank (whatever definition one is using of rank) of such a matrix will be constant for most values of $\theta$ and/or $x$, but there will certain values of $\theta$ and $x$ where the rank of the matrix will drop below this value (when, for example, specific values of $x$ and $\theta$ cause a linear dependency to develop between the rows or columns of the matrix).  Thus, 
in order to prove results about the behavior of the parameterized function family in an \emph{open neighborhood} of a point $(x,\theta) \in \mathbb{R}^{n_0} \times \Omega$, we may need to know not only the rank a matrix assumes at that specific $(x,\theta)$ (which could be one of the ``bad'' points where the rank drops), but the ``typical'' rank of the matrix on that neighborhood.  To that end, we must investigate the polynomial expressions that describe the local behavior of the function family, and the matrices which have these formal polynomials as their entries, rather than their real-valued instances at specific points $(x,\theta)$. This becomes particularly important in Section \ref{sec:Rkpoly}.

 In the case of finitely piecewise polynomial families,  
since continuity implies that the pieces of $\Omega \times \mathbb{R}^{n_0}$ (which by definition contain a nonempty open subset) on which $\mathcal{F}$ is a polynomial are defined by polynomial equations; in fact, it is straightforward to show that the pieces are \emph{basic closed semialgebraic sets} (by definition, a basic closed semialgebraic set is a set $S \subset \mathbb{R}^n$ given by finitely many simultaneous polynomial inequalities, $S = \{x \in \mathbb{R}^n \mid p_1(x) \geq 0, p_2(x) \geq 0,\ldots, p_r(x) \geq 0\}$ where $p_i(x) \in \mathbb{R}[x]$, $1 \leq i \leq r$, are polynomials.)  Note that i) both $\mathbb{R}^{n_0}$ and $\Omega$ are basic closed algebraic subsets of $\Omega\times \mathbb{R}^{n_0}$, the domain of $\mathcal{F}$, and ii) a finite intersection  of basic closed semialgebraic sets is basic closed semialgebraic.  Consequently, both every $F_\theta:\mathbb{R}^{n_0} \to \mathbb{R}^{n_d}$ and every $E_z:\Omega \to \mathbb{R}^{n_d}$ are  finitely piecewise-polynomial with pieces that are basic closed semialgebraic sets. 

 Since each piece of $\Omega \times \mathbb{R}^{n_0}$ contains an open set (by the definition of ``piece''), the Identity Theorem for polynomials in several variables guarantees that there is a unique \emph{formal polynomial} -- an element of the ring $\mathbb{R}[\theta_1,\ldots,\theta_D,x_1,\ldots,x_{n_0}]\,$
 the polynomial ring over $D+n_0$ formal variables -- that represents the restriction of $\mathcal{F}$ to the piece. 

Given a formal polynomial $r \in \mathbb{R}[\theta_1,\ldots,\theta_D,x_1,\ldots,x_{n_0}]$, we will sometimes want to ``plug in'' real values for either the variables $\theta_1,\ldots,\theta_D$ or the variables $x_1,\ldots,x_{n_0}$; we now define notation for this. 
For
 $$r \in \mathbb{R}[\theta_1,\ldots,\theta_D,x_1,\ldots,x_{n_0}],$$ $z \in \mathbb{R}^{n_0}$ and $\tilde{\theta} \in \Omega$, we will denote by 
\begin{equation} \label{eq:substitutex} 
r/(x \to z)\end{equation} the element of the polynomial ring 
$$R \coloneqq \mathbb{R}[\theta_1,\ldots,\theta_D]$$ obtained from $r$ by replacing the formal variables $x_1,\ldots,x_n$ with the real numbers that are the corresponding coordinates of $z$;  we will also denote by 
\begin{equation} \label{eq:substitutetheta} r/(\theta \to \tilde{\theta})
\end{equation} the element of the polynomial ring $\mathbb{R}[x_1,\ldots,x_{n_0}]$ obtained by replacing the formal variables $\theta_1,\ldots, \theta_D$ with the real numbers that are the corresponding coordinates of $\tilde{\theta}$.

\begin{definition}[algebraic representation, algebraic evaluation map] \label{def:algebraicversions}
 Let $\mathcal{F}$ be a finitely piecewise polynomial parameterized function family on $\mathbb{R}^{n_0}$ with parameter space $\Omega$.  
 \begin{itemize}
     \item The \emph{algebraic representation of $\mathcal{F}$} is the map 
\begin{multline*}
\textrm{P}: \{(\theta,x) \in \Omega \times \mathbb{R}^{n_0} | (\theta,x) \textrm{ is a smooth point for } \mathcal{F}\} \\
\to \mathbb{R}[\theta_1,\ldots,\theta_D,x_1,\ldots,x_{n_0}]
\end{multline*}
defined by setting $P(\theta,x)$ to be the unique element of $\mathbb{R}[\theta_1,\ldots,\theta_D,x_1,\ldots,x_{n_0}]$ that represents $\mathcal{F}$ on an open neighborhood of $(\theta,x)$ in $\Omega \times \mathbb{R}^{n_0}$. 
\item For any fixed $z \in \mathbb{R}^{n_0}$, the \emph{algebraic evaluation map} for $z$ is the map  
\[E_z^{R}: \{\theta \in \Omega: (\theta,z) \textrm{ is a smooth point for } \mathcal{F} \} \to R\]
given by 
$$E_z^R(\theta) = P(\theta,x)/(x \to z),$$
i.e. it assigns to $\theta$ the formal polynomial (in $\theta_1$, \ldots, $\theta_D$) that represents $\mathcal{F}(-,z)$ near $\theta_0$. 

\item For any finite set $Z = \{z_1,\ldots,z_m\} \subset \mathbb{R}^{n_0}$, the \emph{algebraic evaluation map} for $Z$ is the map  
\[E_Z^{R}: \{\theta \in \Omega: (\theta,z_i) \textrm{ is a smooth point for } \mathcal{F} \textrm{ for all } z_i \in Z\} \to R^m\]
given by 
$$E_Z^R(\theta) = (E^R_{z_1}(\theta),\ldots,E^R_{z_m}(\theta)).$$
 \end{itemize}
\end{definition}

\begin{remark}
    The reason for only defining $P$, (and hence also $E_z^R$) at parameters $\theta$ for which $(\theta,z)$ is a smooth point for $\mathcal{F}$ is that the smoothness implies there is a unique element of $\mathbb{R}[\theta_1,\ldots,\theta_D,x_1,\ldots,x_{n_0}]$ that represents $\mathcal{F}$ near such a point, and so the map is well-defined. 
    
    The reason for defining the map $\theta \mapsto E_Z^R(\theta)$ by starting with a formal polynomial  $P \in \mathbb{R}[\theta_1,\ldots,\theta_D,x_1,\ldots,x_{n_0}]$ and then  substituting in real numbers for formal variables in $P$ is that we want to keep track of how each variable appears in the polynomial, i.e. we need $E_Z^R(\theta)$ to be well-defined.   The issue is that a polynomial in $D+n_0$ variables is not uniquely defined by specifying it's values on a proper (e.g. $D$-dimensional or $n_0$-dimensional) linear subspace. 
\end{remark}

\begin{remark} \label{rem:JacDerivOfAlgebraic}
There is a natural way to compute the ``Jacobian derivative'' of $E_Z^R$ at a point $\theta$ in its domain -- we just apply the usual differentiation rules to the formal polynomials that are the coordinates of $E_Z^R(\theta)$.  We will use $\mathbf{J}E_Z^R(\theta)$ to denote this $|Z| \times D$ matrix over $R$. See Example \ref{ex:algebraicJacobian} below. 
\end{remark}

\begin{example} \label{ex:algebraicJacobian} 
    The parameterized family for the architecture $(1,2|1)$ is the map 
    $\mathcal{F}: \mathbb{R}^7 \times \mathbb{R}^1 \to \mathbb{R}^1$ given by 
    \[((\theta_1,\ldots,\theta_7), x) \mapsto \theta_5 \sigma(\theta_1x + \theta_2) + \theta_6 \sigma(\theta_3x+\theta_4) + \theta_7.
    \]
    
\noindent  Consider the parameter
    $\tilde{\theta} = (1,0,1, -1, 1,1,0),$ and points $z_0 = -1, z_1 =\tfrac{1}{2}, z_2 =\tfrac{3}{2}$. 
    We have
    \begin{align*}     
    P(\tilde{\theta},z_0) & = \theta_7,  \\ 
    P(\tilde{\theta},z_1) & = \theta_5(\theta_1x_1+\theta_2) + \theta_7, \\  
    P(\tilde{\theta},z_2) & = \theta_5(\theta_1x_1 + \theta_2) + \theta_6 (\theta_3x_1+\theta_4) + \theta_7.
    \end{align*}   
    because 
$\sigma(\theta_1x+\theta_2) = \sigma(\theta_3x+\theta_4) = 0$ for all $(\tilde{\theta},x)$ sufficiently close to $(\theta,z_0)$ and 
$\sigma(\theta_3x+\theta_4) = 0$ for all $(\tilde{\theta},x)$ sufficiently close to $(\theta,z_1)$.
Then 
\begin{align*}
    E^R_{z_0}(\tilde{\theta}) &= \theta_7/(x \to z_0) = \theta_7, \\
    \begin{split} E^R_{z_1}(\tilde{\theta}) &= \left( \theta_5(\theta_1x_1+\theta_2) + \theta_7 \right)/(x \to z_1)   \\ 
     & = \theta_5 \left(\frac{\theta_1}{2}+\theta_2\right) + \theta_7, \end{split} \\
 \begin{split} E^R_{z_2}(\tilde{\theta})  & = \left( \theta_5(\theta_1x + \theta_2) + \theta_6 (\theta_3x+\theta_4) + \theta_7 \right)/(x \to z_2) \\
 & =  
 \theta_5 \left(\frac{3\theta_1}{2} + \theta_2 \right) + \theta_6 
 \left(\frac{3\theta_3}{2}+\theta_4\right) + \theta_7.
 \end{split}
\end{align*}
Hence
\[ \mathbf{J}E^R_{\{z_0,z_1,z_2\}}(\tilde{\theta})
= \begin{bmatrix}
    \mathbf{J}E_{z_0}^R(\tilde{\theta}) \\ \mathbf{J}E_{z_1}^R(\tilde{\theta}) \\  \mathbf{J}E_{z_2}^R(\tilde{\theta})
\end{bmatrix} 
 = \begin{bmatrix}
    0 & 0 & 0 & 0 & 0 & 0 & 1 \\
    \frac{\theta_5}{2} & \theta_5 & 0 & 0 & \frac{\theta_1}{2} + \theta_2 & 0 & 1 \\
    \frac{3\theta_5}{2} & \theta_5 & \frac{3\theta_6}{2} & \theta_6 & \frac{3\theta_1}{2} + \theta_2 & \frac{3\theta_3}{2} + \theta_4 & 1
\end{bmatrix} \]
\end{example}

\subsection{ReLU networks: geometric and combinatorial background} \label{ss:GeometricCombinatorialBackground}

It is well-known that if $\mathcal{F}$ is a class of (standardly parameterized) ReLU network functions of fixed architecture $(n_0, \ldots, n_{d-1}|n_d)$, then for every parameter $\theta \in \Omega$, the corresponding function $F_\theta: \mathbb{R}^{n_0} \rightarrow \mathbb{R}^{n_d}$ is piecewise-linear with finitely many pieces, cf. \cite{arora2018, TropGeometry}. The locus of non-linearity for $F_\theta$ is contained in its so-called {\em fold set} or {\em bent hyperplane arrangement}, which is simply the union of the zero sets of the pre-activated hidden neurons in the network, cf. \cite{HaninRolnick,RolnickKording,PhuongLampert}. 

In \cite{GrigsbyLindsey}, we defined the notion of a {\em generic, transversal} 
parameter $\theta \in \Omega$ in order to formalize a condition ensuring that the bent hyperplane arrangement was codimension $1$ (as expected) in the domain. The latter notion was extended by Masden in \cite{Masden} to give a much improved version notion of transversality called {\em supertransversality}. 
We briefly remind readers of the relevant notions here, and refer them to \cite{GrigsbyLindsey, Masden} for more details.

 Let $\theta \in \Omega$ be a fixed parameter for $\mathcal{F}$ the parameterized class of feedforward ReLU networks of architecture $(n_0, \ldots, n_{d-1}|n_d)$, and let $F_\theta: \mathbb{R}^{n_0} \rightarrow \mathbb{R}^{n_d}$ the corresponding function from the input to the output space. Letting $z^\ell_i = \pi_i (W^\ell x + b^\ell)$ denote the $i$th component of the pre-activation output of the $\ell$th layer map $F^\ell:\mathbb{R}^{n_{\ell-1}} \to \mathbb{R}^{n_\ell}$ of $F_\theta$, we denote its zero set by $H^\ell_i := (z^\ell_i)^{-1}\{0\} \subseteq \mathbb{R}^{n_{\ell - 1}}.$ Note that for almost all parameters for $F^\ell$, $H^{\ell}_i$ is an affine hyperplane. Accordingly, we associate to each layer map $F^\ell$ the set \[\mathcal{A}^{\ell} = \{H^\ell_1, \ldots, H^\ell_{n_\ell}\} \subseteq \mathbb{R}^{n_{\ell-1}},\] which for almost all parameters is a hyperplane arrangement (cf.~\cite{Stanley}). 
 $\mathcal{A}^\ell$ is said to be \emph{generic} if for all subsets \[\{H^\ell_{i_1} , \ldots , H^\ell_{i_k}\} \subseteq \mathcal{A}^\ell,\] it is the case that $H^\ell_{i_1} \cap \ldots \cap H^\ell_{i_k}$ is an affine-linear subspace of $\mathbb{R}^{n_{\ell-1}}$ of codimension $k$ (dimension $n_{\ell-1} - k$), where a negative-dimensional intersection is understood to be empty. 
 A layer map $F^\ell$ is said to be \emph{generic} if $\mathcal{A}^\ell$ is generic.  A parameter $\theta$ or the corresponding network map $F_\theta$ is said to be generic if all of its layer maps are generic.
 It is well-established in the hyperplane arrangement literature (cf.~\cite{Stanley}) that generic arrangements are full measure. It follows \cite{GrigsbyLindsey} that generic network maps are full measure in parameter space.

The bent hyperplanes are the pull-backs of these zero sets to the domain:
\[\hat{\mathcal{A}}^\ell = \{\hat{H}^\ell_i\}_{i=1}^{n_\ell} := \left\{F_{(\ell -1)}^{-1}(H_i^\ell)\right\}_{i=1}^{n_\ell} \subseteq \mathbb{R}^{n_0}.\]
(Recall that the notation $F_{(\ell)}$ is defined in \eqref{eq:subscriptnotation}.)

Recalling that a {\em polyhedral set} is an intersection of finitely many closed half spaces, a hyperplane arrangement in $\mathbb{R}^{n_{\ell-1}}$ induces a {\em polyhedral decomposition} of $\mathbb{R}^{n_{\ell -1}}$ into finitely many polyhedral sets. The boundary face structure on these polyhedral sets gives the decomposition the structure of a polyhedral complex. By pulling back these polyhedral complexes to the domain, $\mathbb{R}^{n_0}$, and taking intersections, we inductively obtain the \emph{canonical polyhedral complex} $\mathcal{C}(F_\theta)$, defined in \cite{GrigsbyLindsey} (see also \cite{Masden}), as follows. 
 
 For $\ell \in \{1,\ldots,d\}$, denote by $R^{\ell}$ the polyhedral complex on $\mathbb{R}^{n_{\ell-1}}$ induced by the hyperplane arrangement associated to the $\ell^{\textrm{th}}$ layer map, $F^{\ell}$.  Inductively define polyhedral complexes $\mathcal{C}\left(F_{(1)}\right),\ldots, \mathcal{C}\left(F_{(d)}\right)$   on $\mathbb{R}^{n_0}$ as follows: Set $\mathcal{C}\left(F_{(1)} = F^{1}\right):= R^{1}$ and for $i = 2,\ldots,m$, set
\begin{equation*}
\mathcal{C}(F_{(\ell)})
:= \left \{S \cap F_{(\ell-1)}^{-1}(Y) \mid S \in \mathcal{C}\left(F_{(\ell-1)}\right), Y \in R^{\ell} \right \}.
 \end{equation*} 
Set $\mathcal{C}(F_\theta) := \mathcal{C}(F_\theta = F_{(d)})$.  See \cite{GrigsbyLindsey} and \cite{Masden} for more details.

Recall (cf.~\cite{GuilleminPollack} and Section 4 of \cite{GrigsbyLindsey}) the following important classical notion, which ensures that -- in a sense which can be made precise -- the intersection of two submanifolds of a manifold is itself a manifold whose codimension is the sum of the codimensions of the intersecting manifolds:

\begin{definition} [\cite{GuilleminPollack}] \label{defn:maptransverse} Let $X$ be a smooth manifold with or without boundary, $Y$ and $Z$ smooth manifolds without boundary, $Z$ a smoothly embedded submanifold of $Y$, and $f:X \to Y$ a smooth map. We say that $f$ is {\em transverse} to $Z$ and write $f \pitchfork Z$ if 
\begin{equation} \label{eq:maptransverse}
df_p(T_pX) + T_{f(p)}Z = T_{f(p)}Y
\end{equation} 
for all $p \in f^{-1}(Z)$.
\end{definition}

One important contribution of \cite{GrigsbyLindsey} and \cite{Masden} is the development of a consistent and robust theory of transversality for polyhedral complexes which has good behavior with respect to the combinatorics of ReLU neural network decision regions and boundaries. We refer the reader to the original works \cite{GrigsbyLindsey, Masden} for details, as well as \cite[App.A-B]{GLR} for a more concise exposition. We won't repeat the precise definition of supertransversality here, as it is technical. Instead, we will remind readers of the properties of generic, supertransversal networks that will be most important to us in the current work.

Informally, we can view supertransversality as the right generalization of the genericity condition for hyperplane arrangements to bent hyperplane arrangements. As noted above, a hyperplane arrangement is {\em generic} if every $k$--fold intersection of hyperplanes in the arrangement is an affine linear subspace of dimension $n-k$.  Analogously, it follows from the definitions and results in \cite{Masden} that every $k$--fold intersection of bent hyperplanes associated to a generic, supertransversal network is a (possibly empty) polyhedral complex of dimension $n-k$. 

The crucial result for us, proved in \cite{Masden} (see also Theorem 3 of \cite{GrigsbyLindsey}), is the following:

\begin{proposition}[Lemma 12 of \cite{Masden}] \label{p:genstfullmeasure} For any neural network architecture, the set of parameters associated to generic, supertransversal neural network functions is full measure in parameter space, $\mathbb{R}^D$.
\end{proposition}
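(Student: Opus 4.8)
The plan is to decompose the statement into its two constituent conditions — \emph{genericity} of each layer's hyperplane arrangement and \emph{supertransversality} of the induced bent hyperplane arrangement — show that each holds for almost every $\theta$, and then observe that the intersection of the two full-measure sets is again full measure (the architecture has only finitely many layers, each contributing finitely many hyperplanes and finitely many intersection patterns, so no countability issues arise). It therefore suffices to establish the two conditions separately.

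Genericity is the easier half, and I would dispatch it layer-by-layer using classical arrangement theory. For a fixed layer $\ell$, the arrangement $\mathcal{A}^\ell = \{H^\ell_1,\ldots,H^\ell_{n_\ell}\}$ in $\mathbb{R}^{n_{\ell-1}}$ is determined by the rows of $W^\ell$ and the entries of $b^\ell$. For each index set $\{i_1,\ldots,i_k\}$, the failure of $H^\ell_{i_1}\cap\cdots\cap H^\ell_{i_k}$ to have codimension $k$ is exactly the simultaneous vanishing of all maximal minors of the associated $k\times n_{\ell-1}$ submatrix of $W^\ell$ (cf.~\cite{Stanley}). This is a polynomial condition cutting out a proper algebraic subset of the layer parameters, hence a Lebesgue-null set; taking the finite union over all layers and all index sets keeps the exceptional locus null, so genericity is full measure.

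For supertransversality the natural engine is the Parametric Transversality Theorem, a consequence of Sard's theorem (cf.~\cite{GuilleminPollack}), applied inductively on the layer depth. Working on the open cells of the canonical polyhedral complexes built from the first $\ell-1$ layers, the composite $F_{(\ell-1)}$ is genuinely smooth — in fact affine-linear in the input. The key observation is that the \emph{universal} map $(\theta,x)\mapsto z^\ell_i$, in which the parameters are allowed to vary along with $x$, is a submersion near the zero set: perturbing the entries of $(W^\ell,b^\ell)$ translates and tilts $H^\ell_i$ freely, so the universal family is transverse to $\{0\}$ in the sense of Definition~\ref{defn:maptransverse}. Parametric transversality then yields that for almost every $\theta$ the restriction of $z^\ell_i$ to each cell is transverse to $\{0\}$, so that the bent hyperplane $\hat{H}^\ell_i$ meets the existing skeleton in the expected codimension. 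Iterating the argument up through the layers, one concludes that every $k$-fold intersection of bent hyperplanes has dimension $n_0-k$ for almost every $\theta$.

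The hard part, and the reason this is delicate rather than routine, is the piecewise-linear nature of the maps: the bent hyperplanes are not smooth submanifolds, so transversality must be formulated and checked cell-by-cell on the canonical polyhedral complex, including on the lower-dimensional shared faces where several cells abut, and one must verify that the cell-wise conclusions glue into an honest polyhedral complex of the correct dimension. Compounding this, the very decomposition into cells on which $F_{(\ell-1)}$ is affine \emph{itself depends on $\theta$} through the parameters of the earlier layers; the remedy is to stratify $\Omega$ by the finitely many possible activation patterns, fix the affine formula on each stratum before invoking Sard, and then take the union over strata. Organizing these finitely many Sard applications — one for each stratum, each layer, and each intersection pattern — so that their exceptional sets remain collectively null is the real content of the argument. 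This combinatorial bookkeeping over the canonical polyhedral complex, rather than any single analytic estimate, is precisely what \cite{Masden} packages into the definition of supertransversality and the proof of its Lemma~12.
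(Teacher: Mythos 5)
The paper gives no proof of this proposition: it is quoted verbatim as Lemma~12 of \cite{Masden}, and the authors defer entirely to that reference. So there is no in-paper argument to compare yours against line by line; the honest assessment is that your sketch is a reconstruction of the external result, and you are candid that the genuinely hard bookkeeping is exactly what \cite{Masden} carries out. As a reconstruction, your strategy is the expected one and matches the surrounding discussion in Section~2.5: genericity is a finite union of proper algebraic conditions on the layer parameters, and supertransversality is obtained by parametric transversality (Sard) applied stratum-by-stratum over the finitely many activation patterns, with the universal map a submersion because the bias coordinate alone translates each hyperplane freely.

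Two concrete points to tighten. First, in the genericity half your stated failure condition is incomplete: the vanishing of all maximal minors of the $k\times n_{\ell-1}$ submatrix of $W^\ell$ only governs the case $k\leq n_{\ell-1}$. For $k>n_{\ell-1}$ the generic requirement is that the intersection be \emph{empty}, which is a condition on the augmented matrix $[W^\ell\mid -b^\ell]$ (the overdetermined affine system must be inconsistent), so the biases must enter the algebraic condition, not just the weights; the exceptional locus is still a proper algebraic set, so the conclusion survives, but as written the claim is wrong for large $k$. Second, when you stratify $\Omega$ by activation pattern before invoking Sard, the strata are semialgebraic rather than open, so one must either restrict to their interiors (discarding a null boundary) or work with the smooth locus of each stratum; this is precisely the kind of gluing issue you flag at the end, and it is fair to defer it to \cite{Masden}, but your paragraph currently reads as though fixing the affine formula on each stratum is routine, when making the parametric transversality theorem apply on a non-open parameter set is itself part of the work.
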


The cells of the canonical polyhedral complex, $\mathcal{C}(F_\theta)$, associated to a parameter $\theta \in \Omega$ naturally comes with a ternary labeling, as defined in \cite{Masden}: 
\begin{definition} \label{def:ternaryactivationpattern}
A {\em ternary activation pattern} (aka {\em ternary neural code} or {\em ternary sign sequence}) for a network architecture $(n_0, \ldots, n_{d-1}|n_d)$ with $N$ neurons is a ternary tuple $s \in \{-1,0,+1\}^N$. The \emph{ternary labeling} of a point $x \in \mathbb{R}^{n_0}$ is the sequence of ternary tuples $$s_x := \left(s_x^{1}, \ldots, s_x^{d}\right) \in \{-1,0,+1\}^{n_1 +\ldots + n_d}$$ indicating the sign of the pre-activation output of each neuron of $F_\theta$ at $x$. \end{definition} 

Explicitly, let $x \in \mathbb{R}^{n_0}$ any input vector, and suppose that the pre-activation output $z_{(\ell),i}(x)$ of the $i$th neuron in the $\ell$th layer at $x$ is as in Equation \ref{eqn:preactneuron}. Then  $s_x^{\ell} = \left(s^{\ell}_1, \,\, \ldots \,\,, s^{\ell}_{n_\ell}\right)$ are defined by $s^{\ell}_{x,i} = \mbox{sgn}(z_{(\ell,i)}(x))$ (using the convention $\textrm{sgn}(0) = 0$). 

Moreover, for all parameters $\theta$ it follows immediately from the definitions that the ternary labeling is constant on the interior of each cell of $\mathcal{C}(F_\theta)$, inducing a ternary labeling $s_C$ on each cell $C$ of $\mathcal{C}(F_\theta)$.

If $\theta \in \Omega$ is generic and  supertransversal, the ternary labeling, $s_C$, gives us strong information about a cell $C$. In particular:
\begin{enumerate}[(1)]
    \item If $s_C$ has $k$ $0$s, then the dimension of its associated cell $C$ is $n_0 - k$. 
    \item If the ternary labeling $s_C$ for $C$ has $k$ $0$'s, then there are $2^k$ non-empty higher-dimensional cells in $\mathcal{C}(F_\theta)$ for which $C$ is a face. Moreover, their ternary labelings are obtained from $s_C$ by replacing the $0$s in $s_C$ with $\{\pm 1\}$s.
\end{enumerate}

\begin{definition} \label{def:activationregion}
The \emph{activation region} of $F_\theta$ corresponding to a ternary activation pattern $s$ is a maximal connected component of the set of input vectors $x \in \mathbb{R}^{n_0}$ for which the ternary labeling $s_x$ equals $s$.
\end{definition}

\begin{definition} \label{Def:pmactivationregion}
A \emph{$\pm$-activation pattern} is a ternary activation pattern in which every coordinate is nonzero, and a \emph{$\pm$-activation region} is an activation region associated to a $\pm$-activation pattern. 
\end{definition}
Note that any $\pm$-activation region is an open set.

The important implication for us in the current work is the following. See Definition \ref{def:stablerealrankset} to recall the definition of {\em algebraically stable set} and {\em stable real rank set} for a finite batch of points $Z$ in a finitely piecewise polynomial family:

\begin{proposition} \label{p:stabRrksetbig}
Let $\mathcal{F}$ be a parameterized family of ReLU neural networks of fixed architecture $(n_0, \ldots, n_{d-1}|n_d)$, and let $Z = \{z_1, \ldots, z_m\} \subseteq \mathbb{R}^{n_0}$ be any finite batch of points in the domain. Almost all $\theta \in \Omega$ are in the stable real rank set of $Z$.
\end{proposition}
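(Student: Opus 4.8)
The goal is to show that, for a fixed finite batch $Z = \{z_1, \ldots, z_m\}$, almost all parameters $\theta \in \Omega = \mathbb{R}^D$ lie in the \emph{stable real rank set} of $Z$. Although the precise definitions of \emph{algebraically stable set} and \emph{stable real rank set} (Definition \ref{def:stablerealrankset}) are only forward-referenced in this excerpt, the evident meaning is that a parameter is in these sets when, in some neighborhood, the algebraic representation $P(\theta, z_i)$ is stable (the same formal polynomial persists) and the real row rank $r_\mathbb{R}(\mathbf{J}E_Z^R(\theta))$ achieves its locally maximal value. Let me sketch how I would prove the full-measure statement.

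\begin{proof}[Proof sketch]
The plan is to reduce the claim to two facts, each of which holds away from a Lebesgue-null set, and then intersect. First I would handle the \emph{algebraic stability}: for each input $z_i$, the set of parameters $\theta$ for which $(\theta, z_i)$ is \emph{not} a smooth point for $\mathcal{F}$ is contained in the union of the zero sets of the pre-activation neurons, i.e.\ in $\bigcup_{\ell, j} \{\theta : z^\ell_j(z_i) = 0\}$ evaluated along the computational graph. Since $\mathcal{F}$ is finitely piecewise polynomial, each such locus is a proper algebraic subset of $\Omega$ (for generic $\theta$ the relevant pre-activation is a nonzero polynomial in $\theta$), hence Lebesgue measure zero; taking the finite union over the finitely many neurons and the finitely many points $z_i \in Z$ preserves measure zero. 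On the complement, each $(\theta, z_i)$ is a smooth point, so $P(\theta, z_i)$ and therefore $E^R_{z_i}(\theta)$, $\mathbf{J}E^R_{z_i}(\theta)$ are well-defined and locally constant as formal polynomials in a neighborhood of $\theta$ — this is exactly algebraic stability.

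Second, I would address the \emph{real rank} half. On the algebraically stable locus the entries of $\mathbf{J}E^R_Z(\theta)$ are fixed formal polynomials, so $r_\mathbb{R}(\mathbf{J}E_Z^R(\theta))$ is the ordinary rank of a real matrix whose entries are polynomials in $\theta$. For a matrix over $R = \mathbb{R}[\vec\theta]$, the real rank drops below its maximal value precisely on the common zero set of the maximal-size minors, and since the generic rank is attained somewhere, not all of these minors vanish identically; thus the rank-drop locus is a proper algebraic set, hence measure zero. The subtlety is that the formal polynomial entries can differ across the different activation regions carved out in step one, so I would argue region-by-region: within each of the finitely many $\pm$-activation regions (and their lower-dimensional faces), the entries are given by fixed polynomials, and the rank-drop locus restricted to that region is measure zero. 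Summing over the finitely many combinatorial types keeps the bad set null.

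Finally, I would invoke Proposition \ref{p:genstfullmeasure} to restrict attention to the full-measure set of generic, supertransversal parameters, which guarantees the expected combinatorial and dimensional behavior of the cells of $\mathcal{C}(F_\theta)$ and ensures the pre-activation polynomials appearing above are nondegenerate (nonzero as polynomials in $\theta$), so that the algebraic loci really are proper. The desired full-measure set is the intersection of the generic-supertransversal set, the algebraic-smoothness set from step one, and the maximal-rank set from step two; being a finite intersection of full-measure sets, it is full measure. The main obstacle I anticipate is bookkeeping across activation regions in step two: because the formal polynomial representing $\mathbf{J}E^R_Z$ genuinely changes from region to region, one cannot treat $r_\mathbb{R}(\mathbf{J}E_Z^R(\theta))$ as a single global polynomial-rank function, and care is needed to confirm that ``stable real rank'' is defined as local maximality within the stratum containing $\theta$, so that the measure-zero argument applies stratum-by-stratum rather than globally.
\end{proof}
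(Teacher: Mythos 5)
Your proposal is correct and follows essentially the same route as the paper: the paper likewise reduces to the full-measure set of generic, supertransversal parameters (Proposition \ref{p:genstfullmeasure}), shows the non-parametrically-smooth locus for $Z$ sits inside the zero sets of the (nonzero) pre-activation polynomials and is hence null, and then handles the rank-drop locus via vanishing of a maximal nonsingular minor on each component of the algebraically stable set. The only cosmetic difference is that you re-derive the minor-determinant step inline, whereas the paper packages it as Lemma \ref{l:RRstabfullinalgstab} and cites it.
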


\begin{proof}
Let $\mathcal{G} \subset \Omega$ denote the set of $\theta \in \Omega$ that are generic and supertransversal. Since we know from \cite[Lem. 12]{Masden} (Prop. \ref{p:genstfullmeasure}) that $\mathcal{G}$ is full measure in $\Omega$, and we know from Lemma \ref{l:RRstabfullinalgstab} that the stable real rank set is full measure in the algebraically stable set of any finite batch $Z$, it suffices to show that we can describe the algebraically stable real rank set of $Z$ as $\mathcal{G} \setminus \mathcal{S}$ for some Lebesgue measure zero set $\mathcal{S} \subset \Omega$.

By \cite[Lem. 16, Lem. 19, Thm. 20]{Masden}, we know that the set of generic, supertransversal $\theta \in \mathcal{G} \subseteq \Omega$ for which $Z$ is parametrically smooth is those $\theta \in \mathcal{G}$ with respect to which the ternary sign sequence $s_{z_i}$ has no $0$s, for each $z_i \in Z$. But the condition that $s_{z_i}$ has no $0$'s for $\theta \in \Omega$ is precisely the condition that when we evaluate the polynomial associated to each pre-activated neuron at $(\theta, z_i)$, the result is nonzero. Moreover, as long as none of the biases is 0 (a polynomial condition in the parameters), each of the polynomials associated to the pre-activated neurons in the network is a nonzero polynomial. 

We conclude that any finite batch $Z \subseteq \mathbb{R}^{n_0}$ is parametrically smooth in $\mathcal{G} \setminus \mathcal{S}$ for $\mathcal{S}$ an algebraic set. The result follows.
\end{proof}

\begin{remark}
As noted in the proof above, the algebraically stable set for $Z$ also has full Lebesgue measure in $\Omega$.
\end{remark}

\subsection{Encoding algebraic representations via computational graphs}

\label{sec:CompGraph}
In this section, we remind the reader of the relationship between the unique formal polynomial(s) described in Definition \ref{def:algebraicversions}
and active/inactive neurons of the computational graph of the architecture. 

Explicitly, the formal polynomial associated to a $\pm$ activation region is a sum of formal monomials in the input variables $x_1, \ldots, x_{n_0}$, weight parameters, and bias parameters, where each monomial is specified uniquely by a path in the augmented computational graph for the architecture. See \cite[Lem. 8]{HaninRolnick} and \cite[App. D]{GLR}. We give more details below.

\begin{definition} \label{defn:weightedcomputationalgraph} The {\em augmented} computational graph $\tilde{G}$ for the feedforward ReLU network architecture $(n_0, \ldots | n_d)$ is the graded oriented graph: 
\begin{itemize}
    \item with $n_\ell$ ordinary vertices and $1$ distinguished vertex of grading $\ell$ for $\ell = 0, \ldots, d-1$, and $n_d$ ordinary vertices of grading $d$,
    \item for every $\ell = 0, \ldots, d-1$, every vertex of grading $\ell$ is connected by a single oriented edge to every ordinary vertex of grading $\ell+1$, oriented toward the vertex of grading $\ell+1$.
\end{itemize} 
\end{definition}

In particular, note that one obtains the augmented computational graph for an architecture from the standard computational graph for the architecture by adding an extra marked vertex for each non-output layer, whose purpose is to record the bias parameter in each affine-linear map. Accordingly, one obtains a labeling of the edges of the augmented computational graph by formal weight and bias variables as follows:
\begin{itemize}
    \item the edge from the distinguished vertex of layer $\ell$ to the $k$th ordinary vertex of layer $\ell+1$ is labeled with $b^{\ell+1}_k$, the $k$th  component of the bias vector for $F^{\ell+1}$,
    \item the edge from the $i$th ordinary vertex of layer $\ell-1$ to the $j$th ordinary vertex of layer $\ell$ is labeled with $W^{\ell}_{ji}$.
\end{itemize}

Associated to every oriented path $\gamma$ is a corresponding formal monomial, $m(\gamma)$, in the weight and bias variables obtained by taking the product of the parameters on the edges traversed along $\gamma$. See Figure \ref{fig:AugmentedCompGraph}.

Recall the notation $F_{(\ell)} := F^\ell \circ \ldots \circ F^1$ from \eqref{eq:subscriptnotation}.
We refer to the components of $F_{(\ell)}$ as the {\em neurons} in the $\ell$th layer. The {\em pre-activation} map $y_{(\ell),i}: \mathbb{R}^{n_0} \rightarrow \mathbb{R}$ associated to the $i$th neuron in the $\ell$th layer is given by: 
\begin{equation} \label{eqn:preactneuron}
 y_{(\ell),i}(x) = \pi_i\left(W^\ell(F_{(\ell-1)}(x)) + b^\ell\right),
 \end{equation} where $\pi_i: \mathbb{R}^{n_\ell} \rightarrow \mathbb{R}$ denotes the projection onto the $i$th component.

The $i$th neuron in layer $\ell$ is said to be {\em on} or {\em active} (resp, {\em off} or {\em inactive}) at $x$ if $y_{(\ell),i}(x) > 0$ (resp., if $y_{(\ell),i}(x) \leq 0$). Let $A_x^{\ell} \subseteq \{1, \ldots, n_\ell\}$ denote the subset of neurons in layer $\ell$ that are active at $x$.

\begin{definition}
\label{defn:opencompletepath}
Let $\tilde{G}$ be the augmented computational graph for the ReLU network architecture $(n_0, \ldots, n_d)$, with $n_d=1$. A path $\gamma$ is said to be {\em complete} if it ends at a vertex in the output layer and begins at either a vertex of the input layer or at one of the distinguished bias vertices in a non-input layer. We will denote by  $\Gamma$ the set of complete paths of $\tilde{G}$. For $\theta \in \Omega$ and $x \in \mathbb{R}^{n_0}$, a path $\gamma \in \Gamma$ is said to be {\em open} at $x$ if all neurons along $\gamma$ are active. For $\theta \in \Omega$ we will denote by:  
\begin{itemize} 
    \item  $\Gamma_{x}^{\theta} \subseteq \Gamma$ the subset of complete paths that are open at $x$ for the parameter $\theta$. 
    \item $\Gamma_{x,i}^{\theta} \subseteq \Gamma_x^{\theta}$ the subset of $\Gamma_x^{\theta}$ beginning at input node $i$, and 
    \item $\Gamma_{x,*}^{\theta}\subseteq \Gamma_x^{\theta}$ the subset of $\Gamma_x^{\theta}$ beginning at one of the distinguished bias vertices.
\end{itemize}  
\end{definition}

Note that \[|\Gamma| = \sum_{j=0}^{d-1} \left(\prod_{\ell=j}^d n_\ell\right),\] and \[\left|\Gamma_x^\theta\right| = \sum_{j=0}^{d-1}\left(\prod_{\ell=j}^d |A^\ell_x|\right).\]

A version of the following lemma is well-known to the experts (cf. Lemma 8 of \cite{HaninRolnick}). We state it here in the language of Section \ref{sec:PPFF} and Definition \ref{def:algebraicversions}.

\begin{lemma} \label{lem:poly}
Let $\mathcal{F}$ be a parameterized family of ReLU neural networks of architecture $(n_0, \ldots, n_{d-1}|n_d)$, $\theta \in \Omega$ an ordinary parameter, and let $x  \in \mathbb{R}^{n_0}$ be a parametrically smooth point in the domain. Then abusing notation to let $\theta_1, \ldots, \theta_D$ denote the standard coordinate monomials of $\Omega = \mathbb{R}^D$ and $x_1, \ldots, x_{n_0}$ the standard coordinate monomials of the domain, the {\em algebraic representation} of $\mathcal{F}$ in a neighborhood of $(\theta,x)$ (Definition \ref{def:algebraicversions}) is computed as follows from the weighted computational graph:
\[P(\theta,x) =  \sum_{\gamma \in \Gamma_{x,*}^{\theta}}  m(\gamma) + \sum_{i=1}^{n_0} x_i\sum_{\gamma \in \Gamma_{x,i}^{\theta}} m(\gamma).\]
\end{lemma}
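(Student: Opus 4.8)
The plan is to prove the formula by induction on the number of layers $d$, unwinding the composition $F_\theta = F^d \circ \cdots \circ F^1$ while tracking which neurons are active. The key observation is that, because $x$ is a parametrically smooth point for the ordinary parameter $\theta$, the point $(\theta, x)$ is a smooth point for $\mathcal{F}$, so $x$ lies in the interior of a $\pm$-activation region: no pre-activation $y_{(\ell),i}$ vanishes at $x$, and hence on a whole neighborhood of $(\theta,x)$ in $\Omega \times \mathbb{R}^{n_0}$ the active set $A^\ell_{x}$ of neurons is locally constant. This means that on this neighborhood, each $\mathrm{ReLU}$ either acts as the identity (on active neurons) or as the zero map (on inactive neurons), so $\sigma$ can be replaced by the corresponding coordinate linear projection $P_{A^\ell_x}$ that kills inactive coordinates and preserves active ones. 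Consequently, the restriction of $\mathcal{F}$ to this neighborhood agrees with the polynomial obtained by composing the affine-linear layer maps with these fixed coordinate projections, and by the Identity Theorem this polynomial is exactly $P(\theta,x)$.

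First I would set up the inductive claim at the level of each neuron: let $\tilde y_{(\ell),i}$ denote the formal polynomial in $\mathbb{R}[\theta_1,\dots,\theta_D,x_1,\dots,x_{n_0}]$ representing the pre-activation $y_{(\ell),i}$ near $(\theta,x)$, and let $\tilde a_{(\ell),i}$ denote the polynomial representing the post-activation value (so $\tilde a_{(\ell),i} = \tilde y_{(\ell),i}$ if $i \in A^\ell_x$ and $\tilde a_{(\ell),i} = 0$ otherwise). The base case records that the input coordinates $x_1,\dots,x_{n_0}$ are exactly the monomials carried by the trivial length-zero paths from the input vertices. For the inductive step, I would expand $\tilde y_{(\ell),i} = \sum_{j=1}^{n_{\ell-1}} W^\ell_{ij}\, \tilde a_{(\ell-1),j} + b^\ell_i$ and substitute the inductive expression for each active $\tilde a_{(\ell-1),j}$ as a sum over open paths through the computational graph ending at the $j$th vertex of layer $\ell-1$. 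Multiplying by $W^\ell_{ij}$ corresponds precisely to extending each such open path by the edge labeled $W^\ell_{ij}$, which is open because neuron $j$ in layer $\ell-1$ is active; the bias term $b^\ell_i$ corresponds to the new length-one path starting at the distinguished bias vertex of layer $\ell-1$. This is the combinatorial heart of the argument: it identifies the distributive expansion of the nested affine maps with the sum over open paths, and the fact that only active neurons contribute matches the condition that a complete path is open exactly when all neurons along it are active.

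Taking $\ell = d$ with the single output neuron (using $n_d = 1$, and noting the output layer has no $\sigma$ so every path to the output is post-activated trivially), the resulting sum over all complete open paths $\gamma \in \Gamma^\theta_x$ factors according to whether $\gamma$ begins at a bias vertex or at input node $i$: paths in $\Gamma^\theta_{x,*}$ contribute $m(\gamma)$ directly, while paths in $\Gamma^\theta_{x,i}$ carry an extra factor of $x_i$ from their initial input edge, yielding exactly
\[
P(\theta,x) = \sum_{\gamma \in \Gamma^{\theta}_{x,*}} m(\gamma) + \sum_{i=1}^{n_0} x_i \sum_{\gamma \in \Gamma^{\theta}_{x,i}} m(\gamma).
\]
The main obstacle I anticipate is not the algebra but the bookkeeping required to justify that the local polynomial expression is genuinely the formal polynomial $P(\theta,x)$ of Definition \ref{def:algebraicversions}, rather than merely a function agreeing with $\mathcal{F}$ on a lower-dimensional slice. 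This is where parametric smoothness is essential: it guarantees the active sets are locally constant in an open neighborhood of $(\theta,x)$ in the \emph{full} space $\Omega \times \mathbb{R}^{n_0}$, so the polynomial identity holds on an open set and the Identity Theorem (invoked in Section \ref{sec:PPFF}) pins down a unique formal polynomial. I would make sure the induction is stated for the post-activation polynomials on this common neighborhood so that each neuron's local linearity is available simultaneously, and then read off the stated path-sum formula at the final layer.
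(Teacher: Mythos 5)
The paper does not actually supply a proof of Lemma \ref{lem:poly}; it is stated as ``well-known to the experts'' with a citation to Lemma 8 of \cite{HaninRolnick}, so there is no in-paper argument to compare yours against. Your layer-by-layer induction is the standard proof of this path-sum formula and is essentially correct: the identification of the distributive expansion of the nested affine maps with concatenation of open paths (weights extending paths between ordinary vertices, biases starting new paths at the distinguished vertices) is exactly the right combinatorial content, and you correctly isolate the role of parametric smoothness as the hypothesis that pins down a unique formal polynomial via the Identity Theorem on an open neighborhood of $(\theta,x)$ in the full product $\Omega \times \mathbb{R}^{n_0}$. One small over-claim: parametric smoothness of $(\theta,x)$ does not literally force every pre-activation $y_{(\ell),i}$ to be nonzero at $x$ --- a neuron whose influence is screened off downstream (e.g.\ because every complete path through it passes through a strictly inactive neuron in a later layer) can have vanishing pre-activation without destroying smoothness of $\mathcal{F}$. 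This does not damage the formula, since no open complete path passes through such a neuron and the convention $\mathrm{sgn}(0)\mapsto$ inactive is consistent with $\sigma(0)=0$; but to be fully rigorous you should either restrict the ``locally constant active set'' claim to neurons lying on paths that reach the output, or argue (as the paper's remark following the lemma sketches in the generic, supertransversal case) that the polynomial attached to the adjacent activation region with the $0$'s replaced by $-1$'s is the one that represents $\mathcal{F}$ near $(\theta,x)$.
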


\noindent (The notation $m(\gamma)$ was defined in the paragraph preceding that containing \eqref{eqn:preactneuron}. See also Figure \ref{fig:AugmentedCompGraph} for an example.) 

\begin{remark} The well-known version of the lemma above (cf. \cite[Lem. 8]{HaninRolnick}, \cite[]{GLR}) gives an algorithm for computing $\mathcal{F}(\theta,x)$ as an element of $\mathbb{R}$ for any $(\theta, x) \in \Omega \times \mathbb{R}^{n_0}$. In the version above, we note that we can analogously extract the algebraic representation $P(\theta,x)$ at smooth points $(\theta,x)$. It is straightforward -- but notationally fiddly -- to give a canonical extension of the assignment of formal polynomials to {\em non-smooth} points $(\theta,x)$ in the case where $\theta$ is generic and supertransversal. We briefly sketch the construction here, since we won't need it in the current work, but it may be of use to others. 

It follows from \cite{GrigsbyLindsey, Masden} that if $\theta$ is generic and supertransversal, and $(\theta,x)$ is a non-smooth point, then $s_\theta(x)$ has at least one entry that is $0$. In fact, one of the main results of \cite{Masden} is that if $k$ is the number of $0$s in $s_\theta(x)$, then $x$ is contained in the interior of a codimension $k$ cell of the canonical polyhedral complex of $F_\theta$. Moreover, since $\theta$ is generic and supertransversal, it follows from the main results of \cite{Masden} that $x$ is in the boundary of the $\pm$ activation region for which all of the $0$s in $s_\theta(x)$ have been replaced with $-1$s. Since a neuron is defined to be {\em inactive} if its pre-activation sign is {\em non-positive}, and the function $F_\theta$ is continuous, it is then natural to define $P(\theta,x)$ in this case to be equal to $P(\theta,x^*)$ where $x^*$ is a point in the $\pm$ activation region described above. Verifying that this algebraic representation is canonical (well-defined and unique in a neighborhood of $(\theta,x) \in \Omega \times V$, where $V \subseteq \mathbb{R}^{n_0}$ is the affine hull of the unique cell whose interior contains $x$) should also be routine.
\end{remark}
\begin{figure}
\begin{center}
\includegraphics[width=2in]{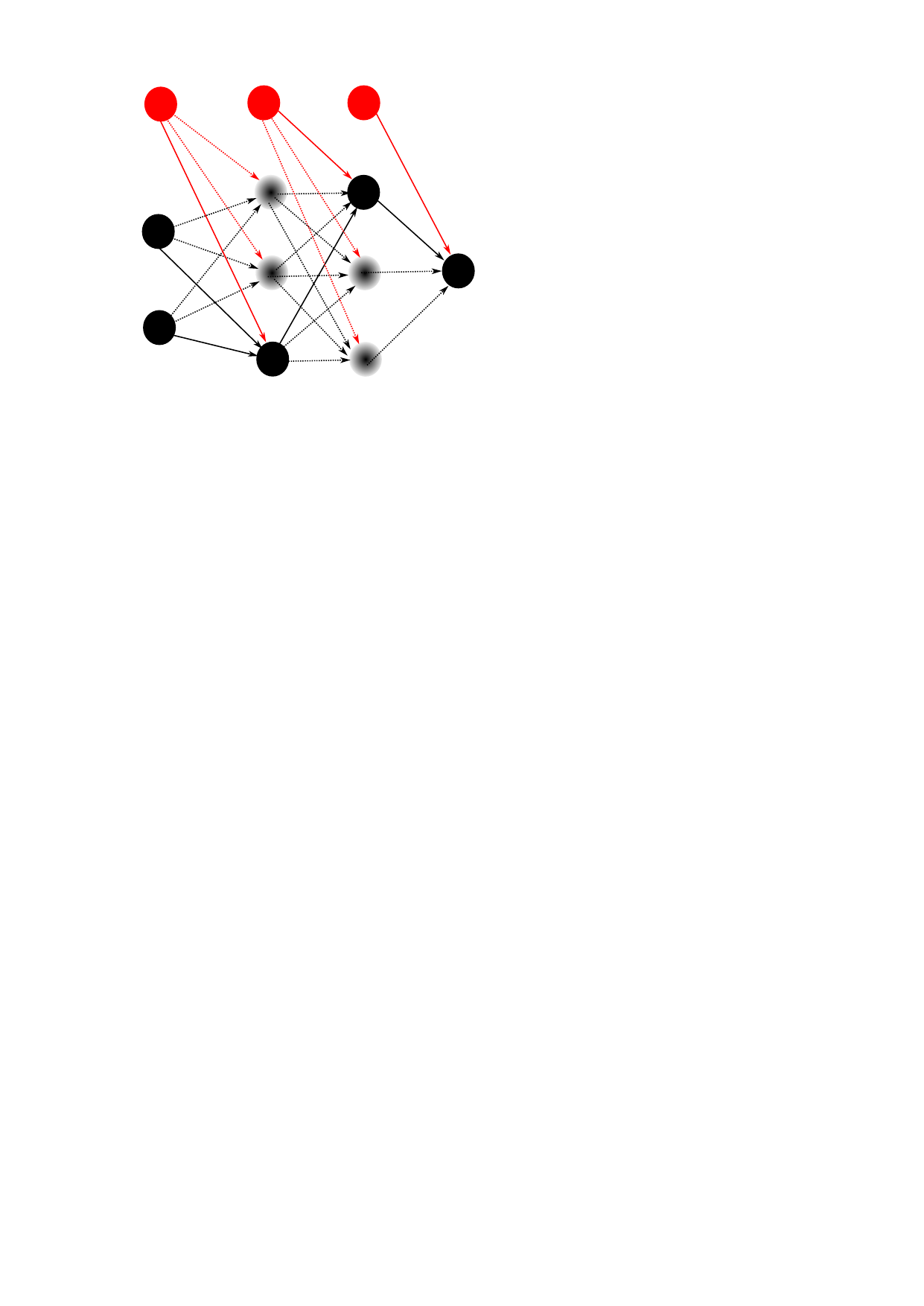}    
\end{center}
  \caption{An augmented computational graph for architecture (2,3,3,1). The ordinary vertices are black, and the distinguished vertices are red. Black edges are labeled with weights, and red edges are labeled with biases. A complete path is one that ends at an output vertex and begins either at an input vertex or at one of the distinguished vertices. In the diagram above, we have blurred out vertices corresponding to inactive neurons associated to an input vector $x$ for some parameter $\theta$. The open paths are then the ones in the diagram above with solid (non-dashed) edges. The reader can check that there are three open, complete paths $\gamma, \gamma', \gamma'' \in \Gamma^\theta_{x,*}$,  whose  monomials are $m(\gamma) = b_3^1W^2_{13}W^3_{11}$, $m(\gamma') = b_1^2W^3_{11}$, and $m(\gamma'') = b_1^3$. There is a unique open, complete path $\gamma_1 \in \Gamma^\theta_{x,1}$, with monomial $m(\gamma_1) = W^1_{31}W^2_{13}W^3_{11}$ and a unique open, complete path $\gamma_2 \in \Gamma^\theta_{x,2}$, with monomial $m(\gamma_2) = W^1_{32}W^2_{13}W^3_{11}$.}
  \label{fig:AugmentedCompGraph}
\end{figure}

\subsection{Rank of matrices over polynomial rings} \label{sec:Rkpoly}

Our investigation of the relationship between 
 the persistent pseudodimension and the local functional dimension of a piecewise polynomial parameterized family will require several different notions of rank for matrices over polynomial rings.

Since we will also encounter matrices over the field $\mathbb{R}$, we will be diligent about staying consistent in our notation. If $M$ is a matrix over $\mathbb{R}$, we will refer to its rank as $\mbox{rank}(M)$. Note that this matches the notation from \cite{GLMW}, see also Definition  \ref{def:batchfundim}.

As before, let $R = \mathbb{R}[\theta_1, \ldots, \theta_D]$ be the real polynomial ring over $D$ formal variables. We have in mind that $\theta_1, \ldots, \theta_D$ represent parameters of a feedforward ReLU network of fixed architecture. In this section, we will appeal to general results about linear algebra over commutative rings with $1$, but note that since $R = \mathbb{R}[u_1, \ldots, u_D]$ is an integral domain (that is, if $r,s \in R$ satisfies $rs = 0$, then $r=0$ or $s=0$), some of the more pathological situations (cf. \cite{McD}) do not occur.

Recall that an ideal $I \subseteq R$ is any non-empty subset of $R$ satisfying  $0 \in I$; $i_1 + i_2 \in I$ for all $i_1, i_2 \in I$; and $ri \in I$ for all $i \in I, r \in R$.
Note that an ideal $I \subseteq R$ is, in particular, a two-sided module over $R$ for which the left and right actions of $R$ by scalar multiplication agree.

The only other modules we will encounter in the present work are free modules, $R^n$, for $n \in \mathbb{N}$, with the standard scalar multiplication. Viewing $v \in R^n$ as a column ``vector", an $m \times n$ matrix $A$ over $R$ naturally gives rise to a module map $R^n \rightarrow R^m$ via left matrix multiplication by $A$: $v \mapsto Av.$ We will use the standard notation $A^T$ to denote the {\em transpose} of $A$, which is the $n \times m$ matrix whose $i$th row is the $i$th column of $A$.

\begin{definition}  \ 
\begin{itemize}  
\item The columns (resp. rows) of $A$ are said to be \emph{linearly independent over $R$} if the unique $v \in R^n$ (resp., the unique $w \in R^m$) for which $Av = 0$ (resp., for which $A^Tw = 0$) is $v = 0$ (resp., $w = 0$). 
\item  The columns (resp. rows) of $A$ are said to be \emph{linearly independent over $\mathbb{R}$} if the unique $v \in \mathbb{R}^n$ (resp., the unique $w \in \mathbb{R}^m$) for which $Av = 0$ (resp., for which $A^Tw = 0$) is $v = 0$ (resp., $w = 0$). 
\end{itemize}
\end{definition}

\begin{definition} \label{def:realRowRank} 
Let $A$ be matrix over $R$. 
\begin{itemize}
\item The {\em $R$-row rank} of $A$ (resp., the {\em $R$-column rank}), which we denote by
$$r_R(A),$$ (resp., by $r_R(A^T)$), 
is the maximal number of rows (resp., columns) that are linearly independent over $R$.  
 \item The {\em $\mathbb{R}$-row rank} (resp., the {\em $\mathbb{R}$-column rank}), which we denote
 $$r_\mathbb{R}(A),$$ (resp., $r_\mathbb{R}(A^T)$), 
 is the maximal number of rows (resp., columns) that are linearly independent over $\mathbb{R}$.
\end{itemize}
\end{definition}

\begin{remark} \label{r:rlessrho}
Since any nontrivial linear dependence relation over $\mathbb{R}$ is a nontrivial linear dependence relation over $R = \mathbb{R}[u_1, \ldots, u_D]$, it is immediate that $r_R(A) \leq r_{\mathbb{R}}(A)$.    
\end{remark}

\begin{definition}
Let $A$ be an $m \times n$ matrix over $R$. The $r$th {\em determinantal ideal} of $A$,
denoted $\mathcal{I}_r(A)$, is the ideal generated by the determinants of all $r \times r$ minors of $A$. 
\end{definition}

\begin{remark} Let $k = \min\{m,n\}$, where $A$ an $m \times n$ matrix as above. Then 
\[\mathcal{I}_1(A) \supseteq \mathcal{I}_2(A) \supseteq \ldots \supseteq \mathcal{I}_k(A) \supseteq (\mathcal{I}_{k+1}(A) = \{0\}).\]
\end{remark}

\begin{definition}
    Let $A$ be an $m \times n$ matrix over $R$. The {\em determinantal rank} of $A$
is the largest $r$ such that $\mathcal{I}_r(A) \neq \{0\}$.
\end{definition}

\begin{remark}
     Another natural notion of rank for a matrix over a commutative ring $R$ with $1$ is the 
    {\em McCoy rank} of $A$, which is defined to be the largest $r$ such that $\mathcal{I}_r(A)$ has nonzero annihilator. 
\end{remark}

\begin{definition}
The {\em annihilator} of an ideal $I$ (or any module) over $R$ is the set 
$$\textrm{ann}(I) \coloneqq \{r \in \mathbb{R} \mid r \cdot i = 0 \textrm{ for all } i \in I\} $$

\end{definition}

\begin{definition}
    A module $M$ over $R$ is said to be {\em torsion} if for all $m \in M$ there exists $(r \neq 0) \in R$ such that $r\cdot m = 0$.
\end{definition}

\begin{remark}
Since $R$ is an integral domain, then any nonzero ideal $I \subseteq R$ is necessarily non-torsion.
\end{remark}

\begin{lemma} \label{l:annihilators}
Let $R$ be an integral domain, and $I \subseteq R$ an ideal. Then $\mbox{ann}(I) \neq \{0\}$ iff $I = \{0\}$.  
\end{lemma}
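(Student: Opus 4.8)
The plan is to prove both implications directly from the definition of an integral domain, working element-by-element. The statement is essentially immediate once one unwinds the definition of the annihilator, so the proof is short; the only point requiring a moment's care is the convention that an integral domain contains $1 \neq 0$, which is what prevents the reverse implication from degenerating.

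First I would handle the direction $I = \{0\} \Rightarrow \mathrm{ann}(I) \neq \{0\}$. If $I = \{0\}$, then for every $r \in R$ we have $r \cdot 0 = 0$, so the defining condition for membership in $\mathrm{ann}(I)$ is satisfied by every $r$ (there is only the single element $0 \in I$ to test). Hence $\mathrm{ann}(I) = R$. Since $R$ is an integral domain, $1 \neq 0$, and so $1 \in \mathrm{ann}(I)$ already witnesses $\mathrm{ann}(I) \neq \{0\}$.

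For the reverse direction, $\mathrm{ann}(I) \neq \{0\} \Rightarrow I = \{0\}$, I would argue as follows. By hypothesis there exists some nonzero $r \in \mathrm{ann}(I)$. By the definition of the annihilator, $r \cdot i = 0$ for every $i \in I$. Now fix an arbitrary $i \in I$: we have $r \cdot i = 0$ with $r \neq 0$, and because $R$ is an integral domain this forces $i = 0$. As $i \in I$ was arbitrary, we conclude $I = \{0\}$.

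I do not expect any genuine obstacle here; the entire content is the integral domain axiom that a product vanishes only when one of its factors vanishes, and there are no degenerate cases to worry about beyond noting $1 \neq 0$. As an alternative route for the reverse implication, one could instead invoke the earlier remark that a nonzero ideal in an integral domain is non-torsion: such an ideal contains an element that is annihilated by no nonzero scalar, so no nonzero element of $R$ can lie in its annihilator, again giving $\mathrm{ann}(I) = \{0\}$ whenever $I \neq \{0\}$.
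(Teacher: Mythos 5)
Your proof is correct and follows essentially the same route as the paper: both directions come straight from the integral domain axiom that a product vanishes only if a factor does. If anything, your treatment is slightly more complete, since you explicitly verify the trivial direction $I=\{0\} \Rightarrow \mathrm{ann}(I)\neq\{0\}$ via $\mathrm{ann}(\{0\})=R\ni 1\neq 0$, whereas the paper's ``conversely'' paragraph proves the contrapositive of the other implication rather than this one.
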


\begin{proof}
If there exists $r \neq 0$ in the annihilator of $I$, then $ri = 0$ for all $i \in I$, but since $I \subseteq R$ and $R$ is an integral domain, this implies that $I = \{0\}$, as desired.

Conversely, if $I \neq \{0\},$ then there exists $i \neq 0 \in I$ such that $ri = 0$ for all $r \in \mbox{ann}(I)$, which similarly implies that $\mbox{ann}(I) = \{0\}$.
\end{proof}

\begin{lemma} \label{lem:RankNotationsIntegralDomain}
For a matrix over $R = \mathbb{R}[\theta_1,\ldots,\theta_D]$, the determinantal rank, $R$-row rank, $R$-column rank, and McCoy rank are all equal (and less than or equal to the $\mathbb{R}$-row rank, by Remark \ref{r:rlessrho}). 
\end{lemma}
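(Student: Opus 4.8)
The plan is to reduce all four ring-theoretic ranks to a single invariant by passing to the field of fractions $K \coloneqq \mathbb{R}(\theta_1,\ldots,\theta_D) = \mathrm{Frac}(R)$, which exists precisely because $R$ is an integral domain. Over the field $K$, the ordinary matrix rank $\mathrm{rank}_K(A)$ is a single well-defined number, equal at once to the dimension of the row space, the dimension of the column space, and the size of the largest nonvanishing minor. I would then show that each of the $R$-row rank, $R$-column rank, determinantal rank, and McCoy rank equals $\mathrm{rank}_K(A)$; the final inequality $\le r_\mathbb{R}(A)$ is then immediate from Remark \ref{r:rlessrho}.

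First I would handle $r_R(A)$. A collection of rows of $A$ (each an element of $R^n \subseteq K^n$) is linearly independent over $R$ if and only if it is linearly independent over $K$: any nontrivial $K$-relation can be cleared of denominators by multiplying through by a common denominator $\delta \in R \setminus \{0\}$, yielding a nontrivial $R$-relation (nontriviality is preserved because $\delta \ne 0$ and $R$ is a domain), while conversely any $R$-relation is already a $K$-relation. Hence a maximal $R$-linearly independent set of rows is exactly a maximal $K$-linearly independent set, so $r_R(A) = \mathrm{rank}_K(A)$. Applying the same argument to the columns (equivalently to $A^T$) gives $r_R(A^T) = \mathrm{rank}_K(A)$, where I use that row rank equals column rank over the field $K$; thus the $R$-row and $R$-column ranks agree.

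Next, for the determinantal rank, I note that each $r \times r$ minor of $A$ is an element of $R$, and the inclusion $R \hookrightarrow K$ is injective, so such a minor vanishes in $R$ if and only if it vanishes in $K$. Therefore the largest $r$ with $\mathcal{I}_r(A) \ne \{0\}$ is the same as the largest $r$ admitting a nonvanishing $r \times r$ minor over $K$, which is precisely $\mathrm{rank}_K(A)$. For the McCoy rank, the key input is Lemma \ref{l:annihilators}: since $R$ is an integral domain, $\mathrm{ann}(\mathcal{I}_r(A))$ is trivial exactly when $\mathcal{I}_r(A) \ne \{0\}$. Combined with the nested chain $\mathcal{I}_1(A) \supseteq \mathcal{I}_2(A) \supseteq \cdots$, this shows that the threshold index $r$ governing the McCoy rank coincides with the one governing the determinantal rank, so the two are equal. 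Chaining these identifications yields equality of all four ranks with $\mathrm{rank}_K(A)$.

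I do not expect a serious obstacle here, as the statement is classical linear algebra over an integral domain; the only genuine work is careful bookkeeping. The most delicate point is reconciling the four a priori different definitions with one another: the clearing-of-denominators step must be phrased so that ``maximal independent set of rows'' genuinely matches ``rank over $K$,'' and the McCoy rank must be funneled through Lemma \ref{l:annihilators} so that its annihilator condition becomes the nonvanishing condition $\mathcal{I}_r(A) \ne \{0\}$ defining the determinantal rank. Once these reductions are in place, equality of all four ranks together with the bound $\le r_\mathbb{R}(A)$ follows directly.
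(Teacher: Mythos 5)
Your argument is correct. The main difference from the paper is that the paper does not actually carry out the argument: its proof of Lemma \ref{lem:RankNotationsIntegralDomain} consists of invoking Lemma \ref{l:annihilators} together with a citation to McCoy's results (and the references \cite{McD}, \cite{MB}) for the fact that these ranks coincide over an integral domain. You instead supply the standard self-contained proof for the integral-domain case: embed $R$ into its fraction field $K = \mathbb{R}(\theta_1,\ldots,\theta_D)$, observe that linear independence of rows (or columns) over $R$ is equivalent to linear independence over $K$ by clearing denominators, that a minor vanishes in $R$ iff it vanishes in $K$, and that over $K$ row rank, column rank, and largest nonvanishing minor all coincide; the McCoy rank is then folded in exactly as the paper does, via Lemma \ref{l:annihilators}, which converts the annihilator condition on $\mathcal{I}_r(A)$ into the nonvanishing condition $\mathcal{I}_r(A) \neq \{0\}$ defining the determinantal rank. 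What your route buys is self-containedness and transparency about \emph{why} the integral-domain hypothesis matters (it is what makes $K$ exist and makes denominator-clearing preserve nontriviality); what the paper's citation route buys is brevity and access to the more general commutative-ring statements in \cite{McC}, where the fraction field is unavailable and the McCoy rank is the correct surviving notion. One cosmetic point worth flagging if you write this up: the paper's stated definition of McCoy rank (``largest $r$ such that $\mathcal{I}_r(A)$ has \emph{nonzero} annihilator'') appears to have the annihilator condition inverted relative to the standard definition your argument implicitly uses; your proof is consistent with the standard convention (largest $r$ with $\mathrm{ann}(\mathcal{I}_r(A)) = \{0\}$), which is the one under which the lemma is true.
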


\begin{proof}
It follows from the well-known Lemma \ref{l:annihilators}, combined with the less immediate results in \cite{McC} (see also \cite[Chp. 1]{McD}, \cite{MB}) that all of these notions of rank agree for matrices over an integral domain, and hence in particular for matrices over $R$.
\end{proof}

\begin{remark}
Lemma \ref{lem:RankNotationsIntegralDomain} also applies in the situation $R = \mathbb{R}$,  which is also a commutative ring with $1$; of course, if $A$ is a matrix over $\mathbb{R}$, then $\textrm{rank}(A) = r_\mathbb{R}(A^T) = r_\mathbb{R}(A)$.
Throughout the current work, we will continue to use ``rank'' to refer to the rank of a matrix over $\mathbb{R}$. For matrices over $R=\mathbb{R}[\theta_1, \ldots, \theta_D]$, we will use ``$r_R$" and ``$r_\mathbb{R}$". 
\end{remark}

For a matrix $A$ over $R = \mathbb{R}[\theta_1,\ldots,\theta_D]$, denote by $A/(\theta \to \tilde{\theta})$ the matrix formed by applying $$a_{i,j} \mapsto a_{i,j}/(\theta \to \tilde{\theta})$$ to each entry $a_{i,j}$ of $A$ (this notation was defined in equation \eqref{eq:substitutetheta}.)

\begin{lemma} \label{lem:rankdropsAlgSet}
    Let $A$ be a matrix over $R = \mathbb{R}[\theta_1,\ldots,\theta_D]$.
Then for all $\tilde{\theta} \in \mathbb{R}^D$, 
 \[\textrm{rank}\big(A/(\theta \to \tilde{\theta})\big) \leq r_R(A) \leq r_{\mathbb{R}}(A).\] Also, 
 \[
 \left\{\tilde{\theta} \in \mathbb{R}^D \mid \textrm{rank}\big(A/(\theta \to \tilde{\theta})\big) < r_R(A) \right\}
 \]
 is an algebraic subset of $\mathbb{R}^D$ of positive codimension (and hence Lebesgue measure $0$). 
\end{lemma}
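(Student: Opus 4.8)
The plan is to reduce everything to the determinantal description of $r_R(A)$ furnished by Lemma \ref{lem:RankNotationsIntegralDomain}, and to exploit the fact that the substitution $\theta \to \tilde{\theta}$ is a ring homomorphism $R \to \mathbb{R}$ that commutes with the formation of minors. Write $r := r_R(A)$, which by Lemma \ref{lem:RankNotationsIntegralDomain} equals the determinantal rank of $A$; thus $\mathcal{I}_{r+1}(A) = \{0\}$ while $\mathcal{I}_r(A) \neq \{0\}$.

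For the inequality string, the bound $r_R(A) \leq r_\mathbb{R}(A)$ is exactly Remark \ref{r:rlessrho}, so only $\textrm{rank}\big(A/(\theta \to \tilde{\theta})\big) \leq r$ requires argument. Here I would observe that the determinant of any $k \times k$ submatrix is a fixed polynomial expression in the matrix entries, and that entrywise substitution is a ring homomorphism; hence each $k \times k$ minor of $A/(\theta \to \tilde{\theta})$ is precisely the image under $\theta \to \tilde{\theta}$ of the corresponding minor of $A$. Since $\mathcal{I}_{r+1}(A) = \{0\}$, every $(r+1) \times (r+1)$ minor of $A$ is the zero polynomial, so every $(r+1) \times (r+1)$ minor of $A/(\theta \to \tilde{\theta})$ vanishes, giving $\textrm{rank}\big(A/(\theta \to \tilde{\theta})\big) \leq r$.

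For the second assertion, I would characterize the drop set explicitly. By the same commutation of minors with substitution, $\textrm{rank}\big(A/(\theta \to \tilde{\theta})\big) < r$ holds if and only if every $r \times r$ minor of $A/(\theta \to \tilde{\theta})$ is zero, i.e. if and only if every generator of $\mathcal{I}_r(A)$ vanishes at $\tilde{\theta}$. Thus the drop set is precisely the common real zero locus of the finitely many polynomials generating $\mathcal{I}_r(A)$, which is by definition a real algebraic subset of $\mathbb{R}^D$.

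Finally, for positive codimension: since $r$ is the determinantal rank, $\mathcal{I}_r(A) \neq \{0\}$, so at least one generating minor $p \in R$ is a nonzero polynomial. The drop set is contained in the zero set $V(p)$ of this single nonzero polynomial, which is a proper algebraic subset of $\mathbb{R}^D$ and therefore has dimension strictly less than $D$ --- equivalently positive codimension, hence Lebesgue measure $0$ by the standard fact that the vanishing locus of a nonzero real polynomial is null. Since the drop set is itself algebraic and contained in $V(p)$, it inherits positive codimension and measure zero. I do not anticipate a genuine obstacle here; the one point requiring care is the commutation of minors with substitution (so that the evaluated minors are exactly the substituted minor-polynomials), together with the invocation of the standard measure-zero statement for real algebraic hypersurfaces.
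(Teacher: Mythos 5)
Your proposal is correct and follows essentially the same route as the paper's proof: reduce both ranks to determinantal ranks via Lemma \ref{lem:RankNotationsIntegralDomain}, then observe that the drop locus is cut out by the vanishing of the $r \times r$ minors, at least one of which is a nonzero polynomial whose real zero set has positive codimension and Lebesgue measure zero. If anything, your version is slightly more careful than the paper's, which describes the relevant set as a \emph{union} of minor zero loci where your identification of the drop set as the \emph{common} zero locus of the generators of $\mathcal{I}_r(A)$ (hence contained in the zero set of any single nonzero minor) is the precise statement.
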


\begin{proof}
It is immediate that $r_R(A) \leq r_{\mathbb{R}}(A)$ (Remark \ref{r:rlessrho}).

By Lemma \ref{lem:RankNotationsIntegralDomain}, we may interpret $r_R(A)$ to be the determinantal rank of $A$,  and we may also interpret $\textrm{rank}\big(A/(\theta \to \tilde{\theta})\big)$ to be the determinantal rank of this real matrix.

Let $M$ be an $r\times r$ minor of $A$, and assume that its determinant, $\textrm{det}(M)$, is a nonzero element of $R$. Although $\textrm{det}(M) \in R$, we will abuse notation and write $\textrm{det}(M):\mathbb{R}^D \to \mathbb{R}$ for the map represented by that polynomial. The zero locus of $\textrm{det}(M)$ consists of precisely those $\tilde{\theta} \in \mathbb{R}^D$ such that $\textrm{rank}\big(M/(\theta \to \tilde{\theta})\big) = 0$. The zero locus of a nonzero polynomial $\mathbb{R}^D \to \mathbb{R}$ is an algebraic set of positive codimension and thus Lebesgue measure $0$. 

Since there are only finitely many $r\times r$ minors $M$ of $A$, taking the union of the corresponding zero locus for each $M$ gives the result. 
\end{proof}

\begin{definition}[algebraically stable  and real rank stable sets] \label{def:stablerealrankset}
Fix $\mathcal{F}$ a piecewise polynomial parameterized function family on $\mathbb{R}^{n_0}$. Let $Z \subset \mathbb{R}^{n_0}$ be a finite set. Consider the following two conditions:
 \begin{enumerate}[(1)]
     \item \label{cond:interiorOfPiece} For every $z \in Z$, $(z,\theta)$ is in the interior of a polynomial piece for $\mathcal{F}$.
     \item \label{cond:norankdrop} $\textrm{rank}(\mathbf{J}E_Z(\theta)) = r_R(\mathbf{J}E_Z^R(\theta)).$
 \end{enumerate}
Define the \emph{algebraically stable} set for $Z$ to be the set of all $\theta \in \Omega$ such that condition \eqref{cond:interiorOfPiece} holds. 
Define the  \emph{real rank stable set} for $Z$  to be the set of all parameters $\theta \in \Omega$ for which both conditions \eqref{cond:interiorOfPiece} and \eqref{cond:norankdrop} hold.  
 \end{definition}

 \begin{remark} \label{rem:algstable}
     For a piecewise polynomial family $\mathcal{F}$, the following are equivalent:
     \begin{itemize}
         \item $(z,\theta)$ is a smooth point for $\mathcal{F}$,
         \item $(z,\theta)$ is in the interior of a polynomial piece for $\mathcal{F}$.
     \end{itemize}
     This is because two polynomials have (iterated, partial) derivatives that agree in a neighborhood of a point iff they are the same polynomial. (This consideration is also at the root of the reason why \cite{GLMW} defined batch functional dimension using only points $z$ that are parametrically smooth for a parameter $\theta$, as opposed to points $z$ so that $\mathcal{F}$ is once differentiable at $(z,\theta)$). 

Consequently, for a piecewise polynomial parameterized function family $\mathcal{F}$, the algebraically stable set coincides with the domain of the algebraic evaluation map $E_Z^R$, which is locally constant.    

In particular, condition \eqref{cond:interiorOfPiece} of Def. \ref{def:stablerealrankset} is necessary and sufficient for the quantity $\mathbf{J}E_Z^R(\theta)$ of condition \eqref{cond:norankdrop} to be defined. 
 \end{remark}

\begin{lemma} \label{l:RRstabfullinalgstab}
    Let $\mathcal{F}$ be a piecewise-polynomial parameterized function family on $\mathbb{R}^{n_0}$ with parameter space $\Omega)$.  Fix a finite set $Z \subset \mathbb{R}^{n_0}$.  Then $\mathbf{J}E_Z^R$ is constant on each connected component $C$ of the algebraically stable set for $Z$.  Moreover, the set
    $$A \coloneqq \left\{\theta \in C \mid \textrm{rank}(\mathbf{J}E_Z(\theta)) < r_R(\mathbf{J}E_Z^R\vert_C)\right\}$$
    is  contained in a closed algebraic set of positive codimension and  Lebesgue measure $0$.  Consequently the stable real rank set is open and has full measure in the algebraically stable set.    
\end{lemma}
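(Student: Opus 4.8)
The plan is to prove the three assertions in order, reducing the rank statement to the algebraic input Lemma \ref{lem:rankdropsAlgSet} and treating the measure/openness claims by standard semialgebraic bookkeeping.

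First, for the constancy of $\mathbf{J}E_Z^R$ on each connected component $C$: by Remark \ref{rem:algstable}, the algebraically stable set for $Z$ is exactly the domain of the algebraic evaluation map $E_Z^R$, and on this set $E_Z^R$ is locally constant. Since $\mathbf{J}E_Z^R$ is obtained from $E_Z^R$ by applying the formal differentiation rules to its polynomial coordinates (Remark \ref{rem:JacDerivOfAlgebraic}), it is itself locally constant; a locally constant map is constant on each connected component, so $\mathbf{J}E_Z^R\vert_C$ is a single well-defined matrix over $R$.

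Second, I would establish the key identity relating the real Jacobian to the substitution of real values into the formal Jacobian: for every $\theta \in C$,
\[\mathbf{J}E_Z(\theta) = \big(\mathbf{J}E_Z^R\vert_C\big)/(\theta \to \theta).\]
This holds because on the algebraically stable set $E_Z$ agrees, in a neighborhood of $\theta$, with the evaluation at real parameter values of the formal polynomial represented by $E_Z^R$, and formally differentiating a polynomial and then substituting real values agrees with the ordinary partial derivatives of the function it represents. Granting this identity, Lemma \ref{lem:rankdropsAlgSet} applied to the matrix $\mathbf{J}E_Z^R\vert_C$ over $R$ shows that
\[\big\{\tilde{\theta} \in \mathbb{R}^D \mid \textrm{rank}\big((\mathbf{J}E_Z^R\vert_C)/(\theta \to \tilde{\theta})\big) < r_R(\mathbf{J}E_Z^R\vert_C)\big\}\]
is contained in a closed algebraic set of positive codimension, hence of Lebesgue measure $0$. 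Intersecting with $C$ and rewriting the real rank via the identity above identifies this set with $A$, so $A$ lies in a measure-zero algebraic set, as claimed.

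Third, for the final assertion, I would observe that the real rank stable set of Definition \ref{def:stablerealrankset} is precisely the algebraically stable set with the rank-drop locus $\bigcup_C A$ removed. Openness follows from lower semicontinuity of the rank of a real matrix: on $C$ the integer $r := r_R(\mathbf{J}E_Z^R\vert_C)$ is the maximal value of $\textrm{rank}(\mathbf{J}E_Z(\theta))$ (by Lemma \ref{lem:rankdropsAlgSet}), and attaining this maximum, i.e. the non-vanishing of some $r \times r$ minor, is an open condition; since each $C$ is open in $\Omega = \mathbb{R}^D$ and $A$ is the common zero locus of all such minors within $C$ (hence closed in $C$), the set $C \setminus A$ is open, and the real rank stable set, being the union of these, is open. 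For full measure, the algebraically stable set is semialgebraic (the pieces of $\mathcal{F}$ are basic closed semialgebraic, and condition \eqref{cond:interiorOfPiece} of Definition \ref{def:stablerealrankset} is expressed via interiors and projections of such sets), hence has finitely many connected components; since $A$ is measure zero in each, their union is measure zero and the real rank stable set has full measure in the algebraically stable set. I expect the main obstacle to be the careful verification of the identity $\mathbf{J}E_Z(\theta) = (\mathbf{J}E_Z^R\vert_C)/(\theta \to \theta)$, i.e. that passing to the real Jacobian commutes with substituting real parameter values into the formal Jacobian; everything else is either routine bookkeeping or a direct citation of Lemma \ref{lem:rankdropsAlgSet}.
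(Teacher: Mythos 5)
Your proof is correct and follows essentially the same route as the paper's: constancy of $\mathbf{J}E_Z^R$ on components via Remark \ref{rem:algstable}, the (correct) identification of $\mathbf{J}E_Z(\theta)$ with the substitution $(\mathbf{J}E_Z^R\vert_C)/(\theta \to \theta)$, and containment of the rank-drop locus in the zero set of the determinant of a nonsingular $r \times r$ minor. The only cosmetic difference is that you cite Lemma \ref{lem:rankdropsAlgSet} where the paper reproves that minor-determinant argument inline, and you supply slightly more detail than the paper does for the final openness and full-measure assertions.
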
   

\begin{proof}
    It is immediate (see Remark \ref{rem:algstable}) that $\mathbf{J}E_Z^R$ is constant on each connected component $C$ of the algebraically stable set for $\mathbb{Z}$. Then there exists a fixed $r \times r$ nonsingular  minor $M$ of $\mathbf{J}E_Z^R\vert_C$ on which the $R$-rank is realized, i.e. $r_R(M) = r_R(\mathbf{J}E_Z^R\vert_C)$. 
    Therefore 
    \begin{multline*} A \subset \{\theta_0 \in C: \textrm{rank}(M / (\theta \to \theta_0)) < r_R(M)\} \\ = A \subset \{\theta_0 \in C \mid \textrm{det}(M / (\theta \to \theta_0)) = 0\}.
    \end{multline*}
    Since $\textrm{det}(M)$ is a nonzero element of the polynomial ring $R$, its zero locus is a closed algebraic set of positive codimension, and hence Lebesgue measure $0$. 
\end{proof}

%%%%%%%%%%%%%%%%%%%%%%%%%%%
\section{Growth function of difference classes and persistent pseudodimension}
\label{sec:Ppseudodim}

\subsection{Pseudo-shattering and VC dimension} \label{ss:VCdim}
For the remainder of the paper, we will restrict our attention to parameterized families $(\Omega, \mathbb{R}^{n_0}, \mathbb{R}^{n_d}, \mathcal{F})$ for which $n_d = 1$.

\subsubsection{VC dimension and pseudodimension}
Before introducing local versions of VC dimension and pseudodimension, which are well known, we recall their definitions and introduce notation we will need.

Recall that by post-composing with the sign function, $\mbox{sgn}: \mathbb{R} \rightarrow \{-1,0,1\}$, 
defined as: \[\mbox{sgn}(x) := \left\{\begin{array}{cl} \frac{x}{|x|} & \mbox{if } x \neq 0\\
x & \mbox{if } x = 0,\end{array}\right.\] any function $f: \mathbb{R}^{n_0} \rightarrow \mathbb{R}$ induces a ternary labeling on each point $z \in \mathbb{R}^{n_0}$. Explicitly, $z \in \mathbb{R}^{n_0}$ is assigned the label $\mbox{sgn}(f(z)) \in \{-1,0,+1\}$.

\begin{remark} In the classical statistical learning theory literature, cf. \cite{KearnsVazirani, Sontag}, one typically breaks symmetry by choosing a side for $0$ in the definition of the sign function. We believe the choice to define the sign function as above is more natural, and it moreover avoids some technical issues that will arise later.  
\end{remark}

Accordingly if $Z = \{z_1, \ldots, z_m\} \subseteq \mathbb{R}^{n_0}$ is a finite ordered subset of the domain, and $\mathcal{F}$ is a parameterized family, then $Z$ inherits an $m$--tuple of ternary labelings from every $f \in \mathcal{F}$. 

In the present work, we will focus our attention on the {\em binary} ($\pm 1$) labelings induced on the (necessarily open) complement of the $0$ set of a continuous function $f: \mathbb{R}^{n_0} \rightarrow \mathbb{R}$.

Let $Z = \{z_1, \ldots, z_m\}$ be a set of points in $\mathbb{R}^{n_0}$, and let $\Omega = \mathbb{R}^{D}$ be the parameter space for $\mathcal{F}$. We shall use the notation
\begin{align*}
V_{z_i}(\mathcal{F}) & \coloneqq \{\theta \in \Omega \mid F_\theta(z_i)  = 0\}, \\ 
V_Z(\mathcal{F})  & \coloneqq \{\theta \in \Omega \mid F_\theta(z_i) = 0 \mbox{ for some }i\} = \bigcup_{i=1}^m V_{z_i}(\mathcal{F}),
\end{align*}
and shall denote by $\Pi_Z(\mathcal{F}) \subseteq \{\pm 1\}^m$ the set of {\em $\pm$  labelings} achievable on $Z$ by functions in $\mathcal{F}$:
\[\Pi_Z(\mathcal{F}) := \left\{(\mbox{sgn}(F_\theta(z_1)), \ldots, \mbox{sgn}(F_\theta(z_m))) \,\, |\,\, \theta \not\in V_Z(\mathcal{F})\right\}.\]

If $|\Pi_Z(\mathcal{F})| = 2^m$ (equiv., $\Pi_Z(\mathcal{F}) = \{-1,1\}^m$ achieves its maximum possible cardinality), one says that $Z$
 is \emph{shattered} by $\mathcal{F}$. A classical measure of complexity of a parameterized function class is the growth rate of this maximal cardinality:

\begin{definition} \label{defn:growthfun} [Perles-Sauer-Shelah growth function] For a parameterized family $\mathcal{F}$ and $m \in \mathbb{N}$, define: \[\Pi_m(\mathcal{F}) := \max \{|\Pi_Z(\mathcal{F})|\,\,:\,\, |Z| = m\}.\]
\end{definition}

\begin{definition}[VC dimension]
   Let $\mathcal{F}$ be a parameterized family, and $m \in \mathbb{N}$. We say that $\mathcal{F}$ has VC dimension $m$, and write $\VCD(\mathcal{F}) = m$, if
   \begin{itemize}
       \item $\Pi_m(\mathcal{F}) = 2^m$ (i.e. some set of $m$ points is shattered by $\mathcal{F}$),
       \item $\Pi_{m+1}(\mathcal{F}) < 2^{m+1}$ (i.e. no set of $m+1$ points is shattered by $\mathcal{F})$.
   \end{itemize}
If there exists no such $m \in \mathbb{N}$, we say $\VCD(\mathcal{F}) = \infty$.
\end{definition}

The following closely-related concept was introduced by Pollard \cite{Pollard} (cf. Bartlett-Harvey-Liaw-Mehrabian \cite{BHLM}).

\begin{definition} \label{defn:pdimension} Let $\mathcal{F}$ be a parameterized family.
A finite set $Z = \{z_1, \ldots, z_k\} \subset \mathbb{R}^{n_0}$ is said to be \emph{pseudo-shattered by   $\mathcal{F}$}   if there exist thresholds $t_1, \ldots, t_k \in \mathbb{R}$ such that for every labeling $\ell:Z \to \{-1,1\}$, there exists $\theta \in \Omega$ with 
$$\mbox{sgn}(F_\theta(z_i)-t_i) = \ell(z_i)$$ for each $z_i \in Z$. 
\end{definition}

We may reframe pseudo-shattering using  thresholds $t_i = F_{\theta_0}(z_i)$ as shattering of the difference family at $\theta_0$.  The following lemma is an immediate consequence of the fact that, per Def. \ref{def:parameterizedfamily}, $\mathcal{F}$ is always assumed to be continuous. 

\begin{lemma}
A finite set $Z = \{z_i\}_{i = 1}^m  \subset \mathbb{R}^{n_0}$ is pseudo-shattered by $\mathcal{F}$ using the thresholds $t_i = F_{\theta}(z_i)$ if and only if $Z$ is shattered by the difference family $\DTheta(\mathcal{F})$ at $\theta$. 
\end{lemma}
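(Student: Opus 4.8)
The plan is to unwind both sides of the biconditional directly from the definitions, using that the thresholds are chosen to be $t_i = F_\theta(z_i)$, so that the difference family $\DTheta$ is literally the object whose shattering we want to match with pseudo-shattering of $\mathcal{F}$. First I would recall that, by Definition \ref{def:familyofdifferences}, the parameterized family of differences at $\theta$ is $\DTheta(\theta',z) = \mathcal{F}(\theta',z) - \mathcal{F}(\theta,z)$, so for a fixed input $z_i$ the associated function of the new parameter $\theta'$ is $(\DTheta)_{\theta'}(z_i) = F_{\theta'}(z_i) - F_\theta(z_i) = F_{\theta'}(z_i) - t_i$. This is the key identity that makes the two conditions line up on the nose.

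With that identity in hand, I would translate the shattering condition for $\DTheta$ using the definitions in Section \ref{ss:VCdim}. Shattering of $Z$ by $\DTheta$ means $\Pi_Z(\DTheta) = \{-1,+1\}^m$, i.e.\ for every target labeling $\ell \in \{-1,+1\}^m$ there is a parameter $\theta' \notin V_Z(\DTheta)$ with $\mathrm{sgn}((\DTheta)_{\theta'}(z_i)) = \ell(z_i)$ for all $i$. Substituting the identity above, this says exactly $\mathrm{sgn}(F_{\theta'}(z_i) - t_i) = \ell(z_i)$ for all $i$, which is precisely the pseudo-shattering condition of Definition \ref{defn:pdimension} with thresholds $t_i = F_\theta(z_i)$. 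The forward and reverse implications then each amount to reading this chain of equalities in one direction or the other, so the proof is a short symmetric argument rather than two genuinely different arguments.

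The one point deserving care — and the place I would expect a referee to look — is the role of the excluded set $V_Z(\DTheta)$ and the sign convention $\mathrm{sgn}(0)=0$. In Definition \ref{defn:pdimension} the labeling $\ell$ takes values in $\{-1,1\}$ only, and a parameter realizing a given binary labeling automatically has $F_{\theta'}(z_i) - t_i \neq 0$ for every $i$, i.e.\ $\theta' \notin V_Z(\DTheta)$; conversely the definition of $\Pi_Z(\DTheta)$ already restricts to $\theta' \notin V_Z(\DTheta)$, so only genuine $\pm 1$ labels arise on both sides. Thus the requirement ``$\theta' \notin V_Z(\DTheta)$'' built into $\Pi_Z$ is exactly matched by the requirement in pseudo-shattering that a \emph{binary} labeling $\ell:Z\to\{-1,1\}$ be achieved, and no labeling is lost or spuriously gained by the $0$-set. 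I would make this explicit so that the continuity hypothesis (which guarantees $\DTheta$ is itself a parameterized family in the sense of Definition \ref{def:parameterizedfamily}, hence that $\Pi_Z(\DTheta)$ and $V_Z(\DTheta)$ are defined) is visibly what is being used. No harder obstacle arises: once the substitution $(\DTheta)_{\theta'}(z_i) = F_{\theta'}(z_i) - t_i$ is recorded, both directions follow formally.
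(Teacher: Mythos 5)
Your proposal is correct and matches the paper's intent: the paper states this lemma without a written proof, calling it an immediate consequence of the definitions and the continuity assumption, and your definitional unwinding via the identity $(\DTheta)_{\theta'}(z_i) = F_{\theta'}(z_i) - t_i$ is exactly that argument made explicit. Your careful note that the exclusion of $V_Z(\DTheta)$ is automatically matched by the requirement that a binary $\{\pm 1\}$ labeling be achieved (given the convention $\mathrm{sgn}(0)=0$) is the only point of substance, and you handle it correctly.
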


\begin{definition}[Pseudo-shattering at a parameter, VC dimension of difference classes] \label{defn:pshatter}
Let $\mathcal{F}$ be a parameterized family, and $\theta_0 \in \Omega$ a fixed parameter. 

\begin{enumerate} 
\item 
We say that a finite set $Z = \{z_1, \ldots, z_m\} \subset \mathbb{R}^{n_0}$ is  \emph{pseudo-shatterered by $\mathcal{F}$ at $\theta_0$} if $|\Pi_Z(\DTheta)| = 2^m$.
\item We say that $\mathcal{F}$ has {\em pseudodimension $m$} relative to $\theta_0$, and write \[\pVCD(\mathcal{F}, \theta_0) = m\] if the VC dimension of the difference class $\DTheta(\mathcal{F})$ is $m$ (that is, if:
\begin{itemize}
    \item $\Pi_{m}(\DTheta) = 2^m$,
    \item $\Pi_{m+1}(\DTheta) < 2^{m+1}$).
\end{itemize}
\end{enumerate}
\end{definition}

\subsubsection{Persistent pseudodimension}

We now introduce new \emph{local} versions of the notions above. (Compare the notion of local Rademacher complexity, \cite{BBM}.) 

\begin{definition}[persistently pseudoshattered, persistent binary capacity, persistent growth function] \label{defn:locpgrowth}
Let $\mathcal{F}$ be a parameterized family and $\theta_0 \in \Omega$. Let $Z \subset \mathbb{R}^{n_0}$ be a finite set. 
\begin{itemize}
    \item We will say that $Z$ is \emph{persistently pseudoshattered} at $\theta_0$ if \[2^{|Z|} = \lim_{\epsilon \searrow  0} |\Pi_Z(\DepsilonTheta)|.\]
    \item We denote by $\psi_Z(\theta_0)$ the \emph{maximal cardinality of a persistently pseudoshattered subset of $Z$}:
    \[\psi_Z(\theta_0) \coloneqq \max \left\{|\widetilde{Z}| : \widetilde{Z} \subset Z \textrm{ is persistently pseudoshattered at }\theta_0\right\}.\]
    \item We define the {\em persistent binary capacity} on $Z$ at $\theta_0$ to be: \[\aleph_Z(\theta_0) := \lim_{\epsilon \searrow  0} |\Pi_Z(\DepsilonTheta)|.\]
    \item We define the {\em persistent Perles-Shelah-Sauer growth function} at $\theta_0$, denoted $\aleph_m(\theta_0)$, to be  the maximal possible persistent binary capacity near $\theta_0$ over all batches $Z \subset \mathbb{R}^{n_0}$ of size $m$: 
\[\aleph_m(\theta_0) := \max_{|Z|=m} \left\{\lim_{\epsilon \searrow  0} |\Pi_Z(\DepsilonTheta)|\right\} = \max_{|Z| = m} \left\{\aleph_Z(\theta_0)\right\}.\]   
\end{itemize}
\end{definition}

\begin{remark} \label{rmk:strict} It is immediate from the definitions that \[\psi_Z(\theta_0) \leq \lfloor\log_2(\aleph_Z(\theta_0))\rfloor,\] but it is quite possible for this inequality to be strict, for example if (for sufficiently small $\epsilon$)  $\Pi_Z(\mathcal{D}_{\theta_0, \epsilon})$ contains $2^k$ sign sequences for which no projection to a $k$--face of the Hamming cube, $[-1,1]^{|Z|}$, is surjective onto the vertices of the $k$--face. As a concrete example, let $|Z|=3$ and \[\Pi_Z(\mathcal{D}_{\theta_0,\epsilon}) = \{(-1,-1,-1), (+1,-1,-1), (+1,-1,+1), (+1,+1,+1)\}.\] Then $\psi_Z(\theta_0) = 1,$ since the maximal size of a persistently pseudoshattered set is $1$, but $\log_2(\aleph_Z(\theta_0)) = 2.$
    
\end{remark}

\begin{remark} Note that  \[|\Pi_Z(\DepsilonTheta)|\leq |\Pi_Z(\mathcal{D}_{\theta_0, \epsilon'})| \leq |\Pi_Z(\DTheta)| \leq 2^m\] for all $0 < \epsilon < \epsilon'$, so if $Z$ is persistently pseudo-shattered at $\theta_0$, then $Z$ is pseudo-shattered at $\theta_0$. Equivalently, if $Z$ is {\em not} pseudo-shattered at $\theta_0$, then $Z$ is {\em not} persistently pseudo-shattered at $\theta_0$.    
\end{remark}

\medskip

\begin{remark} \label{rem:localextreme} Let \[\mathcal{L}: \Omega \times (\mathbb{R}^{n_0})^m \rightarrow \mathbb{R}\] be a parameterized family of empirical losses for batches of size $m$ for a family $\mathcal{F}$. In this case, if $\theta_0$ is a global (resp., local) minimum or maximum for the empirical loss $\mathcal{L}_Z$ relative to a batch $Z = \{z_1, \ldots, z_m\}$, then $Z$ must not pseudo-shatter $\mathcal{L}_Z$ at $\theta_0$ (resp., must not persistently pseudo-shatter $\mathcal{L}_Z$ at $\theta_0$), since all differences, $\mathcal{L}_Z(\theta) - \mathcal{L}_Z(\theta_0)$, necessarily have the same sign. Pseudo-shattering at $\theta_0$ (resp., persistent pseudo-shattering at $\theta_0$) can therefore be viewed as an obstruction to $\theta_0$ being a global (resp., local) minimum of the empirical loss landscape. 
\end{remark}

\begin{definition}[persistent pseudodimension] \label{defn:locpdim}
Let $\mathcal{F}$ be a parameterized family, and $\theta_0 \in \Omega$ a fixed parameter. We say that $\mathcal{F}$ has {\em persistent pseudodimension $m$ at $\theta_0$}, and write $\ppVCD(\mathcal{F}, \theta_0) = m$, if 
\begin{enumerate} 
\item $\aleph_m(\theta_0) = 2^m$ (i.e. some batch of $m$ points is persistently pseudo-shattered at $\theta_0$), and 
\item 
$\aleph_{m+1}(\theta_0) < 2^{m+1}$ (i.e. no batch of $m+1$ points is persistently pseudo-shattered at $\theta_0).$  
\end{enumerate}
\end{definition}

\begin{remark}[batch persistent pseudodimension] \label{r:batchppD} Given a parameterized family $\mathcal{F}$ and a fixed parameter $\theta_0 \in \Omega$, it is immediate from the definitions that the supremum of $\psi_Z(\theta_0)$ over finite subsets $Z \subset \mathbb{R}^{n_0}$ of the domain is the persistent pseudodimension of $\mathcal{F}$ at $\theta_0$: \[\sup_{\mbox{\footnotesize{finite }} Z \subseteq \mathbb{R}^{n_0} } \psi_Z(\theta_0) = \ppVCD(\mathcal{F},\theta_0).\] Accordingly, we will sometimes refer to $\psi_Z(\theta_0)$ as the {\em batch persistent pseudodimension} of $\mathcal{F}$ at $\theta_0$ for the batch $Z$. 
\end{remark}

The Sauer-Shelah Lemma, a foundational result in the theory of VC dimension, states that for any  family $\mathcal{F}$ and $m \in \mathbb{N}$, 
$$|\Pi_m(\mathcal{F})| \leq \sum_{i=0}^{dim_{VC}(\mathcal{F})} {m \choose i}.$$

\begin{lemma}[persistent version of Sauer-Shelah Lemma] \label{l:persistentSauer}
$$\aleph_m(\theta_0) \leq \sum_{i=0}^{\ppVCD(\mathcal{F},\theta_0)} {m \choose i}.$$
\end{lemma}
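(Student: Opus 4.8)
The plan is to reduce the statement to the classical (combinatorial) Sauer--Shelah lemma applied to a single finite set system, namely the stabilized limit of $\Pi_Z(\DepsilonTheta)$ as $\epsilon \searrow 0$. First I would fix a batch $Z$ with $|Z| = m$ and observe that the sets $\Pi_Z(\DepsilonTheta) \subseteq \{\pm 1\}^Z$ are nested and decreasing as $\epsilon \searrow 0$: since $B_\epsilon(\theta_0) \subseteq B_{\epsilon'}(\theta_0)$ for $\epsilon < \epsilon'$ and the sign-pattern map is in each case the restriction of the single map $\mathcal{D}_{\theta_0}$, we get $\Pi_Z(\DepsilonTheta) \subseteq \Pi_Z(\mathcal{D}_{\theta_0,\epsilon'})$. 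A nested decreasing family of subsets of the finite set $\{\pm1\}^Z$ must stabilize, so there exist $\epsilon_0 > 0$ and a fixed set $\mathcal{S} \subseteq \{\pm 1\}^Z$ with $\Pi_Z(\DepsilonTheta) = \mathcal{S}$ for all $0 < \epsilon < \epsilon_0$, and $|\mathcal{S}| = \aleph_Z(\theta_0)$.

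Next I would apply the classical combinatorial Sauer--Shelah lemma to the honest finite set system $\mathcal{S}$: it yields $|\mathcal{S}| \leq \sum_{i=0}^{d'} \binom{m}{i}$, where $d'$ is the VC dimension of $\mathcal{S}$, i.e. the maximal size of a coordinate subset $A \subseteq Z$ whose projection $\pi_A(\mathcal{S})$ is all of $\{\pm 1\}^A$. (Equivalently, one invokes Pajor's inequality, that $\mathcal{S}$ shatters at least $|\mathcal{S}|$ coordinate subsets.) It then remains only to bound $d'$ by $\ppVCD(\mathcal{F},\theta_0)$.

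The crux is the following claim: if $A \subseteq Z$ is shattered by $\mathcal{S}$, then $A$ is persistently pseudo-shattered at $\theta_0$. The key observation is that $V_A(\DepsilonTheta) \subseteq V_Z(\DepsilonTheta)$ (fewer coordinates can force a vanishing difference), so deleting the larger excised set $V_Z$ before projecting produces only patterns that are also achievable after deleting the smaller set $V_A$; that is, writing $\pi_A$ for the coordinate projection onto $A$, we have $\pi_A\big(\Pi_Z(\DepsilonTheta)\big) \subseteq \Pi_A(\DepsilonTheta)$. For $\epsilon < \epsilon_0$ the left-hand side equals $\pi_A(\mathcal{S}) = \{\pm 1\}^A$, forcing $\Pi_A(\DepsilonTheta) = \{\pm 1\}^A$ for all such $\epsilon$ and hence $\aleph_A(\theta_0) = 2^{|A|}$, i.e. $A$ is persistently pseudo-shattered. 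By Remark \ref{r:batchppD}, taking $Z = A$ shows $|A| = \psi_A(\theta_0) \leq \ppVCD(\mathcal{F},\theta_0)$, so every set shattered by $\mathcal{S}$ has size at most $\ppVCD(\mathcal{F},\theta_0)$, giving $d' \leq \ppVCD(\mathcal{F},\theta_0)$. Since $\sum_{i=0}^{d'} \binom{m}{i} \leq \sum_{i=0}^{\ppVCD(\mathcal{F},\theta_0)} \binom{m}{i}$, we conclude $\aleph_Z(\theta_0) \leq \sum_{i=0}^{\ppVCD(\mathcal{F},\theta_0)} \binom{m}{i}$, and taking the maximum over batches $Z$ of size $m$ yields the claimed bound on $\aleph_m(\theta_0)$.

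The only genuinely delicate point, and the step I would be most careful about, is the inclusion $\pi_A\big(\Pi_Z(\DepsilonTheta)\big) \subseteq \Pi_A(\DepsilonTheta)$, together with the fact that I need \emph{only} this inclusion and not equality. Equality can fail precisely because of how the zero-sets $V_Z$ are excised and how the sign convention at $0$ is fixed; but the inclusion runs in exactly the favorable direction, since it lets me deduce that $\Pi_A(\DepsilonTheta)$ contains the full cube from the mere fact that its projected image does. Everything else is the standard Sauer--Shelah machinery, transported to the stabilized finite system $\mathcal{S}$.
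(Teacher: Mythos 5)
Your proof is correct, and it reaches the bound by a cleaner route than the paper's, although the underlying reduction is the same: both arguments fix a batch $Z$ of size $m$ and bound the number of sign patterns achieved on $Z$ by the combinatorial VC dimension of that pattern class. Where the paper chooses a single $\epsilon$ small enough that no $(d+1)$-subset of $Z$ is shattered by $\DepsilonTheta$ and then reruns the Alon--Frankl shifting argument on the resulting matrix of patterns, you first stabilize the nested decreasing family $\Pi_Z(\DepsilonTheta)$ to a fixed finite set system $\mathcal{S}$ with $|\mathcal{S}| = \aleph_Z(\theta_0)$, invoke the combinatorial Sauer--Shelah (or Pajor) inequality for $\mathcal{S}$ as a black box, and then bound the VC dimension of $\mathcal{S}$ by $\ppVCD(\mathcal{F},\theta_0)$ via the projection inclusion $\pi_A\bigl(\Pi_Z(\DepsilonTheta)\bigr) \subseteq \Pi_A(\DepsilonTheta)$. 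That inclusion is exactly the right thing to isolate: it is also what justifies the paper's implicit step that a subset of $Z$ shattered by the rows of the pattern matrix $M$ is shattered by $\DepsilonTheta$, but there it is never stated. A pleasant consequence of your organization is that it defuses the concern raised in the remark preceding the paper's proof: the difficulty with taking $\epsilon \searrow 0$ arises only if one tries to apply Sauer--Shelah to the function classes $\DepsilonTheta$ themselves, whose VC dimension may be realized on varying point sets; once the statement has been reduced to a single finite set system $\mathcal{S}$ on the fixed batch $Z$, any proof of the combinatorial Sauer--Shelah lemma --- shifting, the inductive argument, or the polynomial method --- applies verbatim, so the shifting machinery is not actually needed.
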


\begin{remark}
One might expect that Lemma \ref{l:persistentSauer} is an immediate corollary of Sauer's Lemma; however, a subtle issue involving limits as $\epsilon \searrow 0$ in the persistent setting seems to prevent this immediate implication. 

What the Sauer-Shelah Lemma gives us is that for any fixed $\epsilon > 0$, for any set $Z$ of $m$ points, 
$$|\Pi_Z(\DepsilonTheta)| \leq \sum_{i=0}^{dim_{VC}(\DepsilonTheta)} {m \choose i},$$
and hence 
$$\aleph_m(\theta_0)\leq \lim_{\epsilon \searrow 0}\sum_{i=0}^{dim_{VC}(\DepsilonTheta)} {m \choose i} = 
\sum_{i=0}^{\lim_{\epsilon \searrow 0}dim_{VC}(\DepsilonTheta)} {m \choose i}
$$
The issue is that in the quantity at right above, for each $\epsilon > 0$, $dim_{VC}(\DepsilonTheta)$ may be realized on \emph{different} finite subset of $\mathbb{R}^{n_0}$; in contrast,$\ppVCD(\mathcal{F},\theta_0)$ measures the number of sign patterns achieved on some \emph{fixed} set by $(\DepsilonTheta)$ for all $\epsilon > 0$. 
Without additional restrictions on the parameterized family, it could be the case that $\lim_{\epsilon \searrow 0} dim_{VC}(\DepsilonTheta) > {\ppVCD(\mathcal{F},\theta_0)}$.

This issue also prevents a well-known proof technique for the Sauer-Shelah lemma (an induction argument based on an inclusion/exclusion principle) from applying directly to the persistent setting. Fortunately, the 
``shifting'' proof of Sauer's original result, discovered independently in \cite{Alon, Frankl}, does carry over to the persistent setting. The proof below sketches the shifting argument to verify that it carries over directly. 
\end{remark}

\begin{proof}
Recall the definition 
\[\aleph_k(\theta_0) \coloneqq \max_{|Z|=k} \left\{\lim_{\epsilon \searrow  0} |\Pi_Z(\DepsilonTheta)|\right\}. \]
For convenience, write $d=\ppVCD(\mathcal{F},\theta_0)$.
Fix $m > d$.  Fix any set $Z=\{z_1,\ldots,z_m\} \subset \mathbb{R}^{n_0}$ of $m$ points. Then since $\aleph_{d+1}(\theta_0) < 2^{d+1}$, we may fix $\epsilon > 0$ such that no $(d+1)$-element subset of $Z$ is shattered by  $\DepsilonTheta$.

For $\theta \in B_{\epsilon}(\theta_0)$, let $h_{\theta}:Z \to \{\pm 1\}$ be the function defined by $h(z_i) = \textrm{sgn}(\mathcal{D}_{\theta_0}(\theta,z_i))$.  Set $\mathcal{H} = \{h_\theta \mid \theta \in B_{\epsilon}(\theta_0)\}.$ Since $\mathcal{H}$ is a finite set, we may list its elements, say $\mathcal{H} = \{h_1,\ldots,h_n\}$.
Let $M$ be the $n \times m$ matrix (i.e. $n$ rows and $m$ columns) whose $(j,i)$th entry is $M_{j,i} = h_j(z_i)$.  

We now directly apply the ``shifting'' technique of \cite{Alon, Frankl} to $M$, obtaining that $M$ has at most $\sum_{i=0}^d {m \choose i}$ rows.  For completeness, we sketch the outline of this argument.

\medskip 
\noindent \textbf{Shifting Algorithm:}

\begin{itemize}
    \item \noindent Step 1:

For $1 \leq i \leq m$,

\hspace{.5cm} For $1 \leq k < i$,

 \hspace{1cm} For $1 \leq j \leq n$, 
 
\hspace{1.5cm}  If $M_{j,i} = +1$ and $M_{j,k} = -1$ and swapping the values of $M_{j,i}$ and $M_{j,k}$ doesn't duplicate a row of the matrix,

\hspace{2cm} then swap the values of $M_{j,i}$ and $M_{j,k}$.

\item \noindent Step 2: Repeat step 1 until no further change is possible. 
\end{itemize}
\medskip

\noindent Let $M'$ be the matrix that is the output of the shifting algorithm. One can show (see \cite{Alon, Frankl} or more recent expositions such as \cite{Gerbner}
for details) $M'$ has the following properties:
\begin{enumerate}
 \item If some row of $M'$ has $+1$s in positions $i_1,\ldots,i_r$, then there must be $2^r-1$ other rows of $M'$ that have all combinations of $\pm 1$s in columns $i_1,\ldots,i_r$, i.e. the set $\{z_{i_1},\ldots,z_{i_r} \}$ is shattered by (the rows of) $M'$.  
  \item If some subset $V \subset Z$ is shattered by the rows of $M'$, then $V$ was also shattered by the rows of the original $M$. 
\end{enumerate}
It follows that each row of $M'$ has at most $d$ entries that are $+1$.  Since the rows of $M'$ are all distinct, this implies $M'$ has at most $\sum_{i=0}^d {m \choose i}$ rows.  Thus $M$ does too.
\end{proof}

%%%%%%%%%%%
\subsection{Persistent pseudo-shattering \& parametric smoothness for ReLU network functions}

While the definitions and constructions in section \ref{ss:VCdim} are applicable to any continuous parameterized function class, we now state a result that is specific to the class of fully-connected feedforward ReLU neural network functions.

The following proposition tells us that if there exists a persistently pseudo-shattered set $Z'$ for a parameterized class of feedforward ReLU network functions, then we can find a persistently pseudo-shattered set $Z$ with $|Z| = |Z'|$ that is also parametrically smooth. We need this because we would like to compare local functional dimension with persistent pseudodimension, and computing local functional dimension requires parametric smoothness of a batch, $Z$.

\begin{proposition} \label{prop:ParSmooth} Let $\mathcal{F}$ be any standardly-parameterized function class of ReLU neural network functions of architecture $(n_0, \ldots, n_{d-1}| n_d)$, with $n_0 \geq 1$, and $n_d = 1$. Let $\theta_0 \in \Omega$ be generic and supertransversal. For each $m \in \mathbb{N}$, \[\aleph_m(\theta_0) = \max_{|Z| = m}\{\aleph_Z(\theta_0)\} = \max_{|Z| = m}\left\{\lim_{\epsilon \searrow 0}|\Pi_{Z}(\mathcal{D}_{\theta_0,\epsilon})|\right\} \] is realized on a parametrically smooth set $Z$.
\end{proposition}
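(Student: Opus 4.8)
The plan is to reduce the statement to a purely local claim about directions at $\theta_0$, and then to perturb the non-smooth points of a maximizing batch into adjacent $\pm$-activation regions while controlling the persistent sign patterns.

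\textbf{Reductions.} For a fixed batch $Z$ the sets $\Pi_Z(\DepsilonTheta)$ are nested and decreasing as $\epsilon\searrow 0$ inside the finite set $\{\pm1\}^{|Z|}$, so the limit defining $\aleph_Z(\theta_0)$ is attained at some $\epsilon_0>0$; moreover $\aleph_Z(\theta_0)$ equals the number of \emph{persistent} patterns $\sigma$, i.e. those with $\theta_0\in\overline{U_\sigma(Z)}$, where $U_\sigma(Z)=\{\theta\mid \mathrm{sgn}(F_\theta(z_i)-F_{\theta_0}(z_i))=\sigma_i\ \forall i\}$. Fix a batch $Z'$ with $|Z'|=m$ realizing $\aleph_m(\theta_0)$. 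Since $\aleph_Z(\theta_0)\le \aleph_m(\theta_0)$ for every $Z$, it suffices to produce a parametrically smooth batch $Z$ with $|Z|=m$ and $\aleph_Z(\theta_0)\ge \aleph_{Z'}(\theta_0)$. By Proposition~\ref{p:genstfullmeasure} and the discussion of $\pm$-activation regions, the non-smooth points of $Z'$ are exactly those lying on the (codimension $\ge 1$) bent hyperplane arrangement of $\theta_0$, so each can be moved off it by an arbitrarily small perturbation into a neighboring $\pm$-region; the content is to do so without losing persistent patterns.

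\textbf{Directional characterization.} Persistence of $\sigma$ is detected by the leading-order behavior of $g_i(\theta)=F_\theta(z_i)-F_{\theta_0}(z_i)$ along rays $\theta_0+tv$. For a parametrically smooth $z_i$ this is $t\,\langle \nabla g_i(\theta_0),v\rangle+o(t)$, with $\nabla g_i(\theta_0)$ the $i$th row of $\mathbf{J}E_Z|_{\theta_0}$. For a non-smooth $z_i$ at a generic, supertransversal $\theta_0$, the borderline neurons (the $0$-entries of the ternary code $s_{z_i}$) have linearly independent pre-activation gradients, so their zero sets cut the space of directions into $2^k$ open cones $C_J$; on $C_J$ the leading coefficient of $g_i$ is $\langle \nabla P_J(\theta_0,z_i),v\rangle$, where $P_J$ is the polynomial of region $J$ from Lemma~\ref{lem:poly}. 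Thus the persistent patterns of $Z'$ are exactly the sign patterns realized by this piecewise-linear directional map, and those of a smooth batch are the cells of the central hyperplane arrangement of its Jacobian rows. As a sanity check, if $m\le \FD(\theta_0)$ one may instead take any smooth batch of rank $m$, which realizes all $2^m\ge \aleph_{Z'}(\theta_0)$ patterns, and be done immediately.

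\textbf{Key Lemma and assembly.} Move each non-smooth $z_i$ slightly into the all-inactive region $\mathcal{R}^-$ on whose boundary it lies (the Remark after Lemma~\ref{lem:poly}), producing a smooth $z_i^*$ with $\nabla P_{\mathcal{R}^-}(\theta_0,z_i^*)\approx\nabla P_{\mathcal{R}^-}(\theta_0,z_i)$; every directional map then becomes linear. The Key Lemma asserts that each persistent $\sigma$ for $Z'$ is still realized after this replacement. Its engine is that continuity of $F_\theta$ forces $P_J-P_{\mathcal{R}^-}$ to be divisible by the borderline pre-activations, whence $\nabla P_J(\theta_0,z_i)-\nabla P_{\mathcal{R}^-}(\theta_0,z_i)\in\mathrm{span}\{\nabla y_n\}$; for a witnessing direction $v^*\in C_J$ one then corrects the sign of coordinate $i$ by moving $v^*$ within $(\mathrm{span}\{\nabla g_j(\theta_0)\}_{j\ne i})^\perp$, which is possible by the general position of the gradients at a generic $\theta_0$, leaving the other signs untouched. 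Running this over all perturbed coordinates yields $\aleph_Z(\theta_0)\ge \aleph_{Z'}(\theta_0)$, and maximality of $Z'$ forces equality, with $Z$ parametrically smooth.

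\textbf{Main obstacle.} The crux is the Key Lemma when $m>\FD(\theta_0)$, and in particular the \emph{simultaneous} correction of several non-smooth coordinates: a single linear adjustment need not fix all prescribed signs at once, since a system of strict linear inequalities with prescribed signs can be infeasible. I expect to resolve this either by perturbing one non-smooth point at a time and re-witnessing $\sigma$ inside the cell cut out by the remaining (still piecewise-linear) coordinates, or by proving a cell-count bound to the effect that the ReLU-induced piecewise-linear directional arrangement never has more origin-adjacent cells than a linear arrangement of equal rank — a bound one expects from the convex, ReLU-sum structure of the directional maps and which a generic smooth batch attains. A secondary subtlety, that patterns visible at a fixed scale $\epsilon$ need not persist as $\epsilon\searrow 0$, is handled by the fact that for smooth batches the count stabilizes to the linear-arrangement value; generic supertransversality (independence of the borderline gradients and the resulting $2^k$-cone structure) is used throughout.
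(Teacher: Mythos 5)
Your proposal is incomplete, and the gap sits exactly where you flag it: the ``Key Lemma'' is never proved, and both escape routes sketched in your ``Main obstacle'' paragraph rest on a premise that is false for piecewise-polynomial families. The premise is your ``directional characterization'' --- that persistence of a pattern $\sigma$ is detected by the leading-order behavior of $g_i(\theta_0+tv)$ in $t$, so that the persistent patterns of a smooth batch are exactly the cells of the central hyperplane arrangement of the Jacobian rows. Only one inclusion holds: a pattern realized at first order is persistent, but a persistent pattern need not be visible at first order, because higher-order terms in $\theta$ can create sign patterns that the linearization cannot see. Example~\ref{ex:ElisExample} is precisely such a case: at $\vec{\theta}=0$ the four Jacobian rows span a rank-$3$ arrangement (rows $3$ and $4$ coincide), so the first-order directional map realizes at most $2^3$ patterns, yet all $2^4$ patterns are persistently realized thanks to the quadratic term $\theta_1^2-\theta_2^2$. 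Consequently your enumeration of the persistent patterns of $Z'$ may undercount $\aleph_{Z'}(\theta_0)$, so even a correct Key Lemma would only transport the first-order patterns, not all persistent ones; and the ``cell-count bound'' you propose (that the ReLU directional arrangement never beats a linear arrangement of equal rank) is essentially Conjecture~\ref{c:pPdimequalsfundim}, which the paper leaves open. The argument as structured therefore cannot close.

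The paper's proof avoids all of this machinery by working with continuity in the \emph{input} variable rather than linearizing in the parameter. Fix a maximizing batch $Z'$ and a non-smooth point $z'\in Z'$, and choose $\epsilon'>0$ small enough that $|\Pi_{Z'}(\mathcal{D}_{\theta_0,\epsilon'})|$ equals its limit. There are finitely many persistent patterns; for each one fix a witness $\theta_s\in B_{\epsilon'}(\theta_0)$. Each difference $F_{\theta_s}-F_{\theta_0}$ is a continuous function of $x$ that is nonzero at $z'$ (its sign there is $\pm1$), hence keeps its sign on an open neighborhood $N_s$ of $z'$. The finite intersection $N=\bigcap_s N_s$ is open and, since $\theta_0$ is generic and supertransversal, contains a parametrically smooth point $z$; replacing $z'$ by $z$ preserves every witnessed pattern, and maximality of $Z'$ forces equality of the counts. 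Iterating over the finitely many non-smooth points finishes the proof --- no leading-order analysis, cone decomposition, or simultaneous sign correction is needed. If you want to salvage your approach, you would need to replace the first-order characterization of persistence with one that accounts for higher-order terms, at which point you are back to tracking actual witness parameters, i.e.\ the paper's argument.
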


\begin{proof} Let $Z' \subset \mathbb{R}^{n_0}$ be a set of size $m$ that achieves the maximum number of persistent $\pm$ sign assignments. If every point in $Z'$ is contained in a $\pm$ activation region for $\theta_0$, the set $Z'$ is parametrically smooth by \cite[]{GLMW}.  So we may assume without loss of generality that there is at least one point $z' \in Z'$ whose ternary labeling for $\theta_0$ has at least one $0$. Moreover, if the ternary labeling, $s_{z'} \in \{-1,0,1\}^{N}$ of $z'$ has $k$ $0$'s, then $z'$ is contained in a {\em codimension $k$} cell, $C_{z'}$, of the canonical polyhedral complex $\mathcal{C}_{\theta_0}$ and by \cite[]{Masden}, $C_{z'}$ is a nontrivial face of $2^k$ non-empty cells of $\mathcal{C}_{\theta_0}$ whose ternary labelings are obtained from the ternary labeling of $C_{z'}$ by replacing one or more of the $0$'s in $s_{z'}$ with a $\pm 1$.

Choose $\epsilon' > 0$ sufficiently small to ensure that \[|\Pi_{Z'}(\mathcal{D}_{\theta_0,\epsilon'})| = \lim_{\epsilon \searrow 0}|\Pi_{Z'}(\DepsilonTheta)|.\] Now for each distinct $\pm$ activation pattern $s \in \Pi_{Z'}(\mathcal{D}_{\theta_0,\epsilon'}(\mathcal{F}))$ achieved on $Z'$ fix a $\theta_s \in \mathcal{D}_{\theta_0}(\mathcal{F},\epsilon')$ realizing that activation pattern. By continuity of the function $F_{\theta_s} - F_{\theta_0}: \mathbb{R}^{n_0} \rightarrow \mathbb{R}$, there is an open neighborhood, $N_s$, of $z'$ on which $F_{\theta_s} - F_{\theta_0}$ achieves the same sign as it does on $z'$. Moreover, since $\theta_0$ is generic and supertransversal, the subset of $\Omega$ consisting of parametrically smooth points for $\theta_0$ is full measure. It follows that the subset of $N = \bigcap_{s} N_s$ consisting of parametrically smooth points for $\theta_0$ is full measure in $N$. 

We now see that if we replace $z' \in Z'$ with any such parametrically smooth point $z \in N$: \[Z'' := \{z\} \cup (Z' \setminus \{z'\}),\] we have \[\Pi_{Z'}(\mathcal{D}_{\theta_0,\epsilon'}) \subseteq \Pi_{Z''}(\mathcal{D}_{\theta_0,\epsilon''})\] for some $\epsilon''$ with $0 < \epsilon'' < \epsilon'$.  But since \[\Pi_{Z'}(\mathcal{D}_{\theta_0, \epsilon'}) = \Pi_{Z'}(\mathcal{D}_{\theta_0,\epsilon''}) = \max_{|Z| = m}\left\{\lim_{\epsilon \searrow 0}|\Pi_{Z}(\mathcal{D}_{\theta_0,\epsilon})|\right\},\] we see that in fact $\Pi_{Z'}(\mathcal{D}_{\theta_0,\epsilon'}) = \Pi_{Z''}(\mathcal{D}_{\theta_0,\epsilon''})$. By repeating this process in turn for each (of the finitely many) remaining non-parametrically smooth point(s) in $Z''$, the result follows.
\end{proof}

\begin{corollary}
    \label{cor:ParSmoothPsi}
Let $\mathcal{F}$ be any standardly-parameterized function class of ReLU neural network functions of architecture $(n_0, \ldots, n_{d-1}| 1)$, with $n_0 \geq 1$, and let $\theta_0 \in \Omega$ be generic and supertransversal. Let ${\bf Z}$ be the set of finite subsets of the domain $\mathbb{R}^{n_0}$ that are parametrically smooth for $\theta_0,$ and let ${\bf Z}'$ be the set of finite subsets of $\mathbb{R}^{n_0}$ with no additional conditions imposed. Then \[\sup_{Z \in {\bf Z}'} \psi_Z(\theta_0) = \sup_{Z \in {\bf Z}} \psi_Z(\theta_0).\]
\end{corollary}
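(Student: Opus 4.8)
The plan is to reduce the statement to Proposition \ref{prop:ParSmooth} and the bookkeeping of Remark \ref{r:batchppD}. First I would observe that one inequality is free: every parametrically smooth set is in particular a finite subset of the domain, so ${\bf Z} \subseteq {\bf Z}'$ and hence $\sup_{Z \in {\bf Z}} \psi_Z(\theta_0) \leq \sup_{Z \in {\bf Z}'} \psi_Z(\theta_0)$ by monotonicity of the supremum. All the content is in the reverse inequality.

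For that direction I would first unwind the relevant definitions. By Remark \ref{r:batchppD}, $\sup_{Z \in {\bf Z}'} \psi_Z(\theta_0) = \ppVCD(\mathcal{F},\theta_0)$; write $d := \ppVCD(\mathcal{F},\theta_0)$. Next I would record the elementary observation that a finite set $Z$ is persistently pseudoshattered at $\theta_0$ exactly when $\aleph_Z(\theta_0) = 2^{|Z|}$, and that in this case $Z$ is a persistently pseudoshattered subset of itself, so $\psi_Z(\theta_0) = |Z|$. The remaining, and only substantive, step is to exhibit a \emph{parametrically smooth} witness of the correct size. Assuming $d < \infty$, the definition of persistent pseudodimension gives $\aleph_d(\theta_0) = 2^d$; Proposition \ref{prop:ParSmooth} then asserts that this maximum $\aleph_d(\theta_0) = \max_{|Z|=d}\aleph_Z(\theta_0)$ is attained on some parametrically smooth $Z_0$ with $|Z_0| = d$. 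Thus $Z_0 \in {\bf Z}$ satisfies $\aleph_{Z_0}(\theta_0) = 2^d$, so $Z_0$ is persistently pseudoshattered and $\psi_{Z_0}(\theta_0) = d$, yielding $\sup_{Z \in {\bf Z}} \psi_Z(\theta_0) \geq d$ and hence equality. If instead $d = \infty$, I would run the same argument for every $m \in \mathbb{N}$: since $\aleph_m(\theta_0) = 2^m$, Proposition \ref{prop:ParSmooth} produces parametrically smooth $Z_m$ with $\psi_{Z_m}(\theta_0) = m$, forcing $\sup_{Z \in {\bf Z}} \psi_Z(\theta_0) = \infty$.

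I do not expect a genuine obstacle in this corollary, because all of the analytic difficulty---perturbing a non-smooth point of a maximizing batch into a nearby parametrically smooth point while preserving every achievable persistent sign pattern---has already been handled inside Proposition \ref{prop:ParSmooth}. The one point requiring care is that the argument needs the maximum $\aleph_m(\theta_0)$ to be \emph{attained} by a single parametrically smooth batch of size exactly $m$, not merely approximated by such batches; this is precisely the form in which Proposition \ref{prop:ParSmooth} is stated, so the corollary follows directly.
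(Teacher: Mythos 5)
Your proof is correct and rests on exactly the same key ingredient as the paper's: Proposition \ref{prop:ParSmooth} supplies a parametrically smooth batch of size $m$ realizing $\aleph_m(\theta_0)=2^m$, which is then a persistently pseudoshattered set in ${\bf Z}$ of the required cardinality. The paper phrases this as a proof by contradiction while you argue directly (with an explicit split on $d<\infty$ versus $d=\infty$), but the substance is identical.
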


\begin{proof} Since ${\bf Z}' \supseteq {\bf Z}$, it is immediate that \[\sup_{Z \in {\bf Z}'}\psi_Z(\theta_0) \geq \sup_{Z \in {\bf Z}}\psi_Z(\theta_0).\] Now suppose, aiming for a contradiction, that \[\sup_{Z \in {\bf Z}'} \psi_Z(\theta_0) > \sup_{Z \in {\bf Z}}\psi_Z(\theta_0).\] Then there exists some non-parametrically smooth finite set $Z'$ for which \[\psi_{Z'}(\theta_0) > \sup_{Z \in {\bf Z}'}\psi_Z(\theta_0) \geq \psi_{Z}(\theta_0)\] for all parametrically smooth sets $Z$. Consider the subset $Z'' \subseteq Z'$ satisfying $|Z''| = \psi_{Z'}(\theta_0).$ By the observation above, $Z''$ must not be a parametrically smooth set. 

Since $Z''$ is persistently pseudoshattered, letting $m := |Z''|$, we conclude that \[\aleph_m(\theta_0) = \max_{Z \in {\bf Z}'} \aleph_Z(\theta_0) = 2^m,\] and the maximum is achieved on $Z''$. By assumption there is no parametrically smooth set $Z$ for which $\psi_Z(\theta_0) = \psi_{Z''}(\theta_0)$. But this contradicts Proposition \ref{prop:ParSmooth}. We conclude that 
\[\sup_{Z \in {\bf Z}'}\psi_Z(\theta_0) \leq \sup_{Z \in {\bf Z}}\psi_Z(\theta_0),\] and hence \[\sup_{Z \in {\bf Z}'}\psi_Z(\theta_0) = \sup_{Z \in {\bf Z}}\psi_Z(\theta_0),\] as desired.
\end{proof}

\section{Constant rank maps}
\label{sec:BatchFiber} 

\subsection{The local batch fiber product structure}

The classical Rank Theorem for smooth manifolds (Theorem 4.12 of \cite{LeeSmoothManifolds}) says, roughly speaking, that given a smooth map $F$ of constant rank $r$ between between manifolds, there exist smooth coordinates with respect to which $F$ is projection onto the first $r$ coordinates.  The proposition below is a variant of the Rank Theorem, adapted to our situation; we prove this variant because we will need not only the existence of the smooth coordinates, but an explicit description of those smooth coordinates in terms of the canonical Euclidean coordinates on $\Omega = \mathbb{R}^D$ and $\mathbb{R}^{|Z|}$.

\begin{proposition} \label{localdiffeo}
Let $\mathcal{F}$ be a parameterized piecewise-polynomial family on $\mathbb{R}^{n_0}$ with parameter space $\Omega$.  Fix a finite set $Z = \{z_1, \ldots, z_m\} \subset \mathbb{R}^{n_0}$. 
Let $\theta_0$ be a parameter in the real rank stable set for $Z$ (Def. \ref{def:stablerealrankset}) and set $r = r_R(\mathbf{J}E^R_Z(\theta_0))$. 
Let $M$ be a nonsingular $r \times r$ minor of $\mathbf{J}E^R_Z(\theta_0))$, let 
$\widetilde{Z} \subset Z$ be the points that correspond to the rows of $M$ and  let $S \subset \{1,\ldots,D\}$ be the indices that correspond to the columns of $M$. Set  $N = Z \setminus \widetilde{Z}$ and $T = \{1,\ldots,D\} \setminus S$. 
 Then there exist
\begin{itemize}
    \item an open neighborhood $V \subset \mathbb{R}^r \times \mathbb{R}^{D-r} \cong \Omega$ of $\theta_0$ (on which we will use the coordinate $v=(v^S,v^T)$),
    \item an open set $W \subset \mathbb{R}^r \times \mathbb{R}^{D-r}$ (on which we will use the coordinate $w = (w^S,w^T)$),
    \item and a diffeomorphism $\psi: W \to V \subset \Omega$
    \end{itemize}
    such that the following hold for all $w=(w^S,w^T) \in W$. 
 \begin{enumerate} 
\item \label{i:cov1} $E_Z \circ \psi(w) = (E_{\widetilde{Z}}(\psi(w)), E_N(\psi(w)) = 
(w^S,E_N(\psi(w))).$
\item \label{i:cov2} $E_Z \circ \psi(w^S,w^T)$ does not depend on $w^T$, i.e. 
$$ \frac{\partial E_N(\psi(w^S,w^T))}{\partial w^T} = [0].$$
\end{enumerate}
\end{proposition}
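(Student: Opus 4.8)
The plan is to obtain $\psi$ as the inverse of an explicit chart assembled from the map $E_{\widetilde Z}$ together with the ``free'' parameter coordinates indexed by $T$, and then to read off \eqref{i:cov1} and \eqref{i:cov2} from the fact that $E_Z$ has locally \emph{constant} rank $r$ near $\theta_0$. Since $\theta_0$ lies in the real rank stable set for $Z$, condition \eqref{cond:interiorOfPiece} of Def.~\ref{def:stablerealrankset} places each pair $(z_i,\theta_0)$ in the interior of a polynomial piece, so on some open neighborhood of $\theta_0$ the map $E_Z$ coincides with a single polynomial map and is in particular smooth; by Lemma~\ref{l:RRstabfullinalgstab}, $\mathbf{J}E_Z^R$ is constant on the connected component of the algebraically stable set containing $\theta_0$, so $r_R(\mathbf{J}E_Z^R(\theta)) = r$ throughout a neighborhood. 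Because $\theta_0$ is moreover real rank stable, $\mathrm{rank}(\mathbf{J}E_Z(\theta_0)) = r$, so I may choose the minor $M$ so that the real matrix $M_0 := M/(\theta\to\theta_0)$ is invertible. After reordering $Z$ so that the points of $\widetilde Z$ come first, $M_0$ is exactly the block $\partial E_{\widetilde Z}/\partial\theta^S$ evaluated at $\theta_0$.

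First I would define, on a small neighborhood $V_0 \subset \Omega$ of $\theta_0$, the map
\[
\Phi\colon V_0 \to \mathbb{R}^r \times \mathbb{R}^{D-r}, \qquad \Phi(\theta) = \bigl(E_{\widetilde Z}(\theta),\, \theta^T\bigr),
\]
where $\theta^T = (\theta_j)_{j \in T}$. Its Jacobian at $\theta_0$ is block upper triangular,
\[
\mathbf{J}\Phi(\theta_0) = \begin{pmatrix} M_0 & * \\ 0 & I_{D-r} \end{pmatrix},
\]
so $\det \mathbf{J}\Phi(\theta_0) = \det(M_0) \neq 0$. By the inverse function theorem I would shrink $V_0$ to an open set $V$ on which $\Phi$ restricts to a diffeomorphism onto $W := \Phi(V)$, and set $\psi := (\Phi|_V)^{-1}$, writing $w = (w^S, w^T)$ for the coordinates on $W$. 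The identity $\Phi \circ \psi = \mathrm{id}_W$ then reads $\bigl(E_{\widetilde Z}(\psi(w)), \psi(w)^T\bigr) = (w^S, w^T)$, so $E_{\widetilde Z}(\psi(w)) = w^S$; since $E_Z = (E_{\widetilde Z}, E_N)$ after the reordering, this is precisely \eqref{i:cov1}.

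For \eqref{i:cov2} I would invoke the constant-rank property. The polynomial function $\det(M)\colon \mathbb{R}^D \to \mathbb{R}$ (in the sense of Lemma~\ref{lem:rankdropsAlgSet}) is nonzero at $\theta_0$ because $M_0$ is invertible, hence nonzero on a neighborhood, giving $\mathrm{rank}(\mathbf{J}E_Z(\theta)) \geq r$ there; combined with the bound $\mathrm{rank}(\mathbf{J}E_Z(\theta)) \leq r_R(\mathbf{J}E_Z^R(\theta)) = r$ from Lemma~\ref{lem:rankdropsAlgSet}, the map $E_Z$ has constant rank $r$ on $V$ (after a further shrinking). Consequently $\mathbf{J}(E_Z\circ\psi)(w) = \mathbf{J}E_Z(\psi(w))\,\mathbf{J}\psi(w)$ has rank $r$ for every $w \in W$, since $\mathbf{J}\psi$ is invertible. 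By \eqref{i:cov1} this Jacobian has the block form
\[
\mathbf{J}(E_Z\circ\psi)(w) = \begin{pmatrix} I_r & 0 \\ \partial(E_N\circ\psi)/\partial w^S & \partial(E_N\circ\psi)/\partial w^T \end{pmatrix}.
\]
The top $r$ rows already span an $r$-dimensional space, so by the rank bound every remaining row lies in their span; but that span consists exactly of vectors whose $w^T$-block vanishes, forcing $\partial(E_N\circ\psi)/\partial w^T = 0$, which is \eqref{i:cov2}.

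The main obstacle --- and the reason the hypothesis is phrased via the real rank stable set rather than as a bare constant-rank assumption --- is establishing that $E_Z$ genuinely has constant rank $r$ on a \emph{full neighborhood} of $\theta_0$, not merely rank $r$ at the single point. This is exactly where the interplay between the $R$-rank of the formal Jacobian and the real rank of its evaluation does the work: Lemma~\ref{lem:rankdropsAlgSet} pins the rank from above by the locally constant quantity $r_R(\mathbf{J}E_Z^R)$, while persistence of the nonvanishing minor $M_0$ pins it from below. Once constant rank is in hand, the remainder is a bookkeeping application of the inverse function theorem together with the elementary observation that the row space of a matrix whose first rows form $(I_r \mid 0)$ and whose total rank is $r$ is contained in the span of those rows.
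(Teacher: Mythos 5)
Your proposal is correct and follows essentially the same route as the paper: define the chart $\Phi(\theta)=(E_{\widetilde Z}(\theta),\theta^T)$, verify its Jacobian is block upper-triangular with invertible blocks, invert via the Inverse Function Theorem, read off item (1) from $\Phi\circ\psi=\mathrm{id}$, and deduce item (2) from the fact that $E_Z\circ\psi$ has constant rank $r$ while its top block is $(I_r\mid 0)$. The only cosmetic differences are that you make the local constancy of the rank explicit via the nonvanishing of $\det(M)$ (the paper absorbs this into the openness of the real rank stable set) and you finish with a row-span argument where the paper uses columns.
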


Before proceeding to the proof, let us briefly unpack the meaning of the Proposition \ref{localdiffeo}.  The diffeomorphism $\psi:W \to V$ is a change of coordinates.  $V \subset \Omega$, equipped with coordinates $v = (v^S,v^T)$, is a neighborhood of $\theta_0$ in our ``original'' coordinates (with respect to which the evaluation map $E_Z$ is defined); we will find it beneficial to instead use $W$, equipped with the coordinates $w=(w^S,s^T)$.  
Since $Z = \widetilde{Z} \sqcup N$, it is immediate that $E_Z \circ \psi(w) = (E_{\widetilde{Z}}(\psi(w)), E_N(\psi(w))$.  The meaningful part of item \ref{i:cov1} is the rightmost equality, which says that $E_{\widetilde{Z}} \circ \psi$ gives the first $r$ places of the (diffeomorphic) $w$ coordinate system (the $w^S$ part of $w=(w^S,w^T)$.
Item \ref{i:cov2} says that moving in the $w^T$ directions doesn't change $E_Z \circ \psi$.

\begin{proof}[Proof of Proposition \ref{localdiffeo}]
Since $\theta_0$ is in the real rank stable set for $Z$ and $M$ is a minor of $\mathbf{J}E_Z^R(\theta_0)$ on which the rank is realized, 
 $$\textrm{rank}(\mathbf{J}E_Z(\tilde{\theta})) = \textrm{rank}(M / (\theta \to \tilde{\theta}))= r_R(M) = r_R(\mathbf{J}E_Z^R\vert_U). $$
Without loss of generality (by reordering the coordinates, as necessary), we may assume $S = \{1,\ldots,r\}$ and $\widetilde{Z}$ is the first $r$ elements of $Z$ (so $M/(\theta \to \tilde{\theta})$ is the top left block of $\mathbf{J}E_Z(\tilde{\theta})$). 

\bigskip

Let $U$ be an open neighborhood of $\theta_0$ contained in the real rank stable set for $Z$. 
Define $\phi: U \subset \mathbb{R}^r \times \mathbb{R}^{D-r} \to \mathbb{R}^r \times \mathbb{R}^{D-r}$ by
$$\phi(v) = \phi(v^S,v^T) = \left(E_{\widetilde{Z}}(v), v^T\right).$$

Then 
\[\mathbf{J}\phi(v) =
\begin{bmatrix}
\frac{\partial E_{\widetilde{Z}}(v)}{\partial v^S}& \frac{\partial E_{\widetilde{Z}}(v)}{\partial v^T}\\
\\
\frac{\partial v^T}{\partial v^S}  & \frac{\partial v^T}{\partial v_T}\\
\end{bmatrix} = 
\begin{bmatrix}
\frac{\partial E_{\widetilde{Z}}(v)}{\partial v^S}& \frac{\partial E_{\widetilde{Z}}(v)}{\partial v^T}\\
\\
0_{(D-r) \times r}  & I_{(D-r) \times (D-r)}\\
\end{bmatrix}.\] 
We know
$\frac{\partial E_{\widetilde{Z}}(v)}{\partial v^S}$ is a nonsingular $r \times r$ minor.  $I_{(D-r)\times (D-r)}$ is also nonsingular.  Therefore, the entire matrix 
$\mathbf{J}\phi(u)$ is nonsingular. Hence the Inverse Function Theorem guarantees that there exists an open neighborhood $V \subset U$ of $\theta_0$ and an open set $W$ such that 
 $\phi : V \to W$  is a diffeomorphism. Define $\psi: W \to V$ by $\psi = \phi^{-1}$. 

By construction, for $w=(w^S,w^T) \in W$, 
$$E_Z \circ \psi(w) = (E_{\widetilde{Z}}(\psi(w)), E_N(\psi(w)) = 
(w^S,E_N(\psi(w))).$$
Therefore 
\[
\mathbf{J}(E_Z \circ \psi)(w^S,w^T) = 
\begin{bmatrix}
\frac{\partial w^S}{\partial w^S}& \frac{\partial w^S}{\partial w^T}\\
\\
\frac{\partial E_N(\psi(s))}{\partial w^S}  & \frac{\partial E_N(\psi(w))}{\partial w^T}\\
\end{bmatrix} = 
\begin{bmatrix}
I_{r \times r}& 0_{r \times (D-r)}\\
\\
\frac{\partial E_N(\psi(w))}{\partial w^S}  & \frac{\partial E_N(\psi(w))}{\partial w^T}\\
\end{bmatrix}
\]
Since precomposing with a diffeomorphism does not change the rank of a map, the composition $E_Z \circ \psi : W \to \mathbb{R}^r \times \mathbb{R}^D$ has constant rank $r$.

The left $r$ columns on the rightmost matrix above are clearly linearly independent.  So, since  the rank of the total matrix is $r$, we must have that 
$$ \frac{\partial E_N(\psi(w))}{\partial w^T} = [0]_{(|Z|-r) \times( D-r)}.$$
\end{proof}

Given a parameterized family $\mathcal{F}:\Omega \times \mathbb{R}^{n_0} \to \mathbb{R}^{n_d}$, the \emph{fiber} (with respect to the realization map $\rho$) of a function $f:\mathbb{R}^{n_0} \to \mathbb{R}^{n_d}$ is 
\[\rho^{-1}(f) \coloneqq \{\theta \in \Omega \mid F_{\theta} = f\}.\]
Some preliminary observations about the structure of fibers for ReLU neural network families were presented in \cite{GLMW}.  We now introduce the definition of a \emph{batch fiber}.

\begin{definition} \label{def:batchfiber}
Fix a parameterized family $\mathcal{F}:\Omega \times \mathbb{R}^{n_0} \to \mathbb{R}^{n_d}$ and fix a set $Z \subset \mathbb{R}^{n_0}$.  We define the $Z$-\emph{batch fiber} of a function $f:\mathbb{R}^{n_0} \to \mathbb{R}^{n_d}$ to be the set of parameters $\theta \in \Omega$ for which the restrictions to $Z$ of $F_{\theta}$ and $f$ coincide:
\[\rho_Z^{-1}(f) \coloneqq \{\theta \in \Omega \mid F_{\theta}\vert_Z = f\vert_Z\}.\]
\end{definition}

Equipped with the definition of a batch fiber, we can now interpret Proposition \ref{localdiffeo} as saying that, locally, the real rank stable set for $Z$ has a product structure -- locally it is the product of the ($r$-dimensional) batch fibers and $[0,1]^{D-r}$.

 \begin{theorem}[Batch fiber product structure] \label{t:batchfiberfoliation}
Let $\mathcal{F}$ be a piecewise polynomial parameterized family, 
let $Z= \{z_1,\ldots,z_{k} \} \subset \mathbb{R}^{n_0}$ be a finite set. Let $\theta$  be a parameter in the real rank stable set for $Z$, and set $r = \bFD(\theta,Z)$. Then there 
exist an open neighborhood $V$ of $\theta$ and a diffeomorphism 
$$\psi: B^{r} \times B^{D-r} \to V$$
such that each ``slice'' $\psi(\{x\} \times B^{D-r})$ is the intersection of a $Z$-batch fiber with $V$. 
\end{theorem}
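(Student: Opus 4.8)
The plan is to obtain the theorem as a geometric repackaging of Proposition~\ref{localdiffeo}, which already performs all of the analytic work. First I would reconcile the two quantities named $r$. The hypothesis places $\theta$ in the real rank stable set for $Z$, so condition~\eqref{cond:norankdrop} of Definition~\ref{def:stablerealrankset} gives $\textrm{rank}(\mathbf{J}E_Z(\theta)) = r_R(\mathbf{J}E_Z^R(\theta))$; since $\bFD(\theta,Z) = \textrm{rank}(\mathbf{J}E_Z\vert_\theta)$ by Definition~\ref{def:batchfundim}, the value $r = \bFD(\theta,Z)$ in the statement coincides with the value $r = r_R(\mathbf{J}E_Z^R(\theta_0))$ used in Proposition~\ref{localdiffeo}. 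A nonsingular $r \times r$ minor $M$ exists because $r_R$ equals the determinantal rank (Lemma~\ref{lem:RankNotationsIntegralDomain}). Applying Proposition~\ref{localdiffeo} with $\theta_0 = \theta$ then produces open sets $V, W \subset \mathbb{R}^r \times \mathbb{R}^{D-r}$ and a diffeomorphism $\psi \colon W \to V$ satisfying items~\ref{i:cov1} and~\ref{i:cov2}; shrinking if necessary, I would take $W$ to be connected.

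The heart of the argument is to read the batch fiber structure directly off those two items. Item~\ref{i:cov2} says that $E_N(\psi(w^S,w^T))$ has vanishing $w^T$--derivative, so on the connected $W$ it is independent of $w^T$ and we may write $E_N \circ \psi(w) = g(w^S)$ for a function $g$ of $w^S$ alone. Together with item~\ref{i:cov1} this gives $E_Z \circ \psi(w) = (w^S, g(w^S))$, which depends only on $w^S$; in fact the first $r$ coordinates of $E_Z \circ \psi(w)$ recover $w^S$ exactly. Hence, for $w, w' \in W$, one has $E_Z(\psi(w)) = E_Z(\psi(w'))$ if and only if $w^S = (w')^S$.

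To finish I would translate this through Definition~\ref{def:batchfiber} and pass to a product of balls. Fixing $x \in \mathbb{R}^r$ and letting $f$ be any function agreeing on $Z$ with $F_{\psi(x,w^T)}$ (any representative will do, since all such parameters agree on $Z$), the intersection $\rho_Z^{-1}(f) \cap V$ is exactly the set of $\psi(w) \in V$ with $E_Z(\psi(w)) = E_Z(\psi(x,w^T))$, i.e.\ with $w^S = x$; thus $\rho_Z^{-1}(f) \cap V = \psi(\{w \in W \mid w^S = x\})$. Since $w_0 := \psi^{-1}(\theta)$ lies in the open $W$, I would choose a product neighborhood $B^r \times B^{D-r} \subset W$ of $w_0$, restrict $\psi$ there, and redefine $V := \psi(B^r \times B^{D-r})$; the identity $E_Z \circ \psi(w) = (w^S, g(w^S))$ survives restriction, so each slice $\{x\} \times B^{D-r}$ maps under $\psi$ to a $Z$-batch fiber intersected with $V$, which is the asserted conclusion.

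I expect the only genuinely delicate point to be the bookkeeping of the previous step: a batch fiber is a \emph{globally} defined subset of $\Omega$, and one must verify that its intersection with the coordinate patch $V$ is captured \emph{precisely} by the equation $w^S = x$, rather than merely contained in the corresponding slice. This is clean here because $E_Z \circ \psi$ is literally the ``first $r$ coordinates together with $g$'' map, so the level-set condition is an exact equation in $w^S$; no transversality or further exceptional-set argument is required, the real rank stable hypothesis having already been spent in invoking Proposition~\ref{localdiffeo}.
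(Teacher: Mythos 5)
Your proposal is correct and follows essentially the same route as the paper: both invoke Proposition~\ref{localdiffeo} and read the product structure off items~\ref{i:cov1} and~\ref{i:cov2}, with item~\ref{i:cov2} showing each slice lies in a single batch fiber and item~\ref{i:cov1} showing distinct slices lie in distinct fibers (equivalently, that the fiber meets $V$ in exactly the slice). Your extra care in reconciling $r=\bFD(\theta,Z)$ with $r_R(\mathbf{J}E_Z^R(\theta))$ via the real rank stable hypothesis, and in restricting to a product of balls inside $W$, are details the paper's proof leaves implicit but are entirely consistent with it.
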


\begin{proof}
Let $\psi:W \to V \owns \theta$ be the diffeomorphism from Proposition \ref{localdiffeo}.   For any fixed value $x$ of $w_s$, let $W_x = \{(x,w^T) \in W\}$; then $\psi(W_x)$ is a smooth manifold of codimension $r=\bFD(\theta,Z)$.  Item \eqref{i:cov2} of Proposition \ref{localdiffeo} guarantees that $E_Z$ is constant on each manifold $\psi(W_x)$, i.e. $\psi(W_x)$ is contained in a single batch fiber for $Z$. Item \eqref{i:cov1} guarantees that if $x_1 \neq x_2$, then $E_Z(\psi(W_{x_1})) \neq E_Z(\psi(W_{x_2}))$, so $\psi(\{x_1\} \times B^{D-r})$ and $\psi(\{x_2\} \times B^{D-r})$ are in different batch fibers. 
\end{proof}

Combining Theorem \ref{t:batchfiberfoliation} with Proposition \ref{p:stabRrksetbig} immediately yields the following corollary. 

\begin{corollary}[a.e. batch fiber structure for ReLU networks]
Let $\mathcal{F}$ be parameterized family of ReLU networks of architecture $(n_0,\ldots,n_d\vert 1)$. Fix a nonempty finite set $Z \subset \mathbb{R}^{n_0}$.  Then for Lebesgue almost every parameter $\theta \in \Omega$, there exists an open neighborhood $V$ of $\theta$ and a diffeomorphism 
$$\psi: B^{r} \times B^{D-r} \to V$$
such that each ``slice'' $\psi(\{x\} \times B^{D-r})$ is the intersection of a $Z$-batch fiber with $V$. 
\end{corollary}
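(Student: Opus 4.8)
The plan is to recognize this statement as a direct combination of the two results cited just before it, so the work is entirely in checking that their hypotheses line up. Theorem \ref{t:batchfiberfoliation} supplies the local product structure for any single parameter lying in the real rank stable set of $Z$, and Proposition \ref{p:stabRrksetbig} upgrades ``any single such parameter'' to ``Lebesgue almost every parameter'' for the specific case of ReLU network families. So the proof will amount to applying Theorem \ref{t:batchfiberfoliation} pointwise on the full-measure set produced by Proposition \ref{p:stabRrksetbig}.

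First I would verify that the hypotheses of Theorem \ref{t:batchfiberfoliation} are satisfied. That theorem requires $\mathcal{F}$ to be a piecewise polynomial parameterized family; as recalled in Section \ref{ss:GeometricCombinatorialBackground} (cf. \cite{arora2018, HaninRolnick, GrigsbyLindsey, GLMW}), the class of standardly-parameterized ReLU network functions of fixed architecture $(n_0, \ldots, n_d \vert 1)$ is finitely piecewise polynomial, so this hypothesis holds. I would also record the trivial terminological point that the ``stable real rank set'' appearing in Proposition \ref{p:stabRrksetbig} and the ``real rank stable set'' appearing in Theorem \ref{t:batchfiberfoliation} name the same object, namely the set introduced in Definition \ref{def:stablerealrankset}.

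Next, for the fixed nonempty finite batch $Z$, Proposition \ref{p:stabRrksetbig} tells us that almost all $\theta \in \Omega$ lie in the real rank stable set for $Z$; equivalently, the complement of this set has Lebesgue measure zero. For every parameter $\theta$ in this full-measure set, Theorem \ref{t:batchfiberfoliation} applies verbatim with $r = \bFD(\theta,Z)$, producing the open neighborhood $V$ of $\theta$, the integer $r$, and the diffeomorphism $\psi : B^{r} \times B^{D-r} \to V$ whose slices $\psi(\{x\} \times B^{D-r})$ are exactly the intersections of $Z$-batch fibers with $V$. Since this conclusion is obtained for a set of parameters of full measure, the corollary follows.

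Because both ingredients are already established, there is essentially no genuine obstacle here. The only thing to be careful about is the bookkeeping: confirming the identification of the two ``real rank stable set'' notions, and noting that the value $r = \bFD(\theta,Z)$ is permitted to vary with $\theta$ across the full-measure set, so the dimension of the slices is not uniform but rather determined locally by the batch functional dimension at each parameter.
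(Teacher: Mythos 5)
Your proposal is correct and matches the paper's intended argument exactly: the paper derives this corollary by "combining Theorem \ref{t:batchfiberfoliation} with Proposition \ref{p:stabRrksetbig}," which is precisely the pointwise application on the full-measure set that you describe. Your additional bookkeeping remarks (the terminological identification of the two names for the stable set, and the fact that $r = \bFD(\theta,Z)$ may vary with $\theta$) are accurate and harmless.
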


We conclude this section with some immediate consequences of Theorem \ref{t:batchfiberfoliation} 
for the loss landscape and gradient descent.  Suppose $Z \subset \mathbb{R}^{n_0}$ is the set of all inputs of the testing data points. 
Let us first define what we mean by a \emph{loss function} (with regards to the test set $Z$): 
a function $L_Z: \Omega \to \mathbb{R}^{\geq 0}$ such that $E_Z(\theta_1) = E_Z(\theta_2)$ implies $L(\theta_1) = L(\theta_2).$

\begin{lemma}[Immediate consequences of the batch fiber product structure]
    Let $\mathcal{F}$ be a piecewise polynomial parameterized family and let $Z \subset \mathbb{R}^{n_0}$ be a finite set. Then for any parameter $\theta$ in the real rank stable set for $Z$ the following hold. 
    
    \begin{enumerate}
       \item On an open neighborhood of $\theta$, level sets of a loss function $L_Z$ are unions of $Z$-batch fibers.
       Furthermore, locally near $\theta$, a level set for $L_Z$ is a $\bFD(\theta,Z)$-dimensional smooth manifold. 
    \item The gradient of $L_Z$ at $\theta$ is constrained to lie in the $(D-\bFD(Z,\theta)$-dimensional  orthogonal complement of the tangent space at $\theta$ of the batch fiber at $\theta$, i.e.
    \[\nabla L_Z(\theta) \in T_{\theta}(\rho_Z^{-1}(F_\theta))^{\perp}.\]
    \end{enumerate}
\end{lemma}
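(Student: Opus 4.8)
The plan is to prove both items by pulling everything back through the local diffeomorphism $\psi : W \to V$ supplied by Proposition \ref{localdiffeo} (equivalently, the one underlying Theorem \ref{t:batchfiberfoliation}), in whose $w=(w^S,w^T)$ coordinates the evaluation map takes the normal form $E_Z\circ\psi(w)=(w^S,E_N(\psi(w)))$ with $E_N(\psi(w))$ independent of $w^T$. The single structural fact I would extract first is that $E_Z\circ\psi$ depends only on the first $r=\bFD(\theta,Z)$ coordinates $w^S$: item \eqref{i:cov1} shows the first $r$ output slots are exactly $w^S$, and item \eqref{i:cov2} shows the remaining slots $E_N(\psi(w))$ do not depend on $w^T$. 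Because a loss function factors through $E_Z$ by definition — there is some $\tilde L$ with $L_Z=\tilde L\circ E_Z$ — it follows that $L_Z\circ\psi(w)=g(w^S)$ for a single function $g$ of the $w^S$ variables alone.

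For item (1), the union statement is immediate and does not even need the product structure: each $Z$-batch fiber is, by Definition \ref{def:batchfiber}, a set on which $E_Z$ is constant, so $L_Z=\tilde L\circ E_Z$ is constant on it, and hence every level set $L_Z^{-1}(c)$ is saturated with respect to batch fibers, i.e. a union of them. For the manifold statement I would feed the reduction $L_Z\circ\psi=g(w^S)$ into the regular value theorem: assuming $L_Z$ is smooth and $\theta$ is a regular point of $L_Z$ (equivalently that $\nabla g$ is nonzero at the relevant $w^S$), the zero-codimension-one set $\{g=c\}\subset B^r$ is a smooth submanifold, and the local level set is its product with the slice $B^{D-r}$; transporting by $\psi$ gives that $L_Z^{-1}(c)\cap V$ is a smooth manifold foliated by the $(D-r)$-dimensional batch fibers $\psi(\{x\}\times B^{D-r})$, recovering the description in the informal statement.

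For item (2), the gradient orthogonality is cleanest and uses only that $L_Z$ is constant along the batch fiber together with the smooth manifold structure of that fiber guaranteed by Theorem \ref{t:batchfiberfoliation}. The batch fiber through $\theta$ is locally the smooth submanifold $\rho_Z^{-1}(F_\theta)\cap V=\psi(\{w^S(\theta)\}\times B^{D-r})$, so for every tangent vector $v\in T_\theta(\rho_Z^{-1}(F_\theta))$ the directional derivative $\langle \nabla L_Z(\theta),v\rangle$ vanishes, since $L_Z$ is constant in the $w^T$ directions spanning that tangent space. Hence $\nabla L_Z(\theta)$ lies in the orthogonal complement $T_\theta(\rho_Z^{-1}(F_\theta))^{\perp}$, which has dimension $D-(D-r)=\bFD(\theta,Z)$.

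I expect the main obstacle to be entirely about hypotheses rather than about the geometry. The stated definition of a loss function only asks that $L_Z$ factor set-theoretically through $E_Z$, so to speak of $\nabla L_Z(\theta)$ and of smooth level sets I must assume $L_Z$ is (at least continuously) differentiable near $\theta$, and for the manifold conclusion I must additionally assume $\theta$ is a regular point of $L_Z$; without regularity the level set can fail to be a manifold at critical parameters, e.g. at local minima of the empirical loss. The other point requiring care is the bookkeeping of dimensions, which I would state explicitly so the reader can reconcile them: the batch fibers are $(D-r)$-dimensional, a regular level set is the $(D-1)$-dimensional union of an $(r-1)$-parameter family of them, and the orthogonal complement into which the gradient falls is $r=\bFD(\theta,Z)$-dimensional.
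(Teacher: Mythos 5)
The paper offers no proof of this lemma at all --- it is asserted as an ``immediate consequence'' of Theorem \ref{t:batchfiberfoliation} and the section ends there --- so there is no argument of the authors' to compare against. Your write-up is exactly the natural unpacking of Proposition \ref{localdiffeo}: the reduction $L_Z\circ\psi(w)=g(w^S)$, the saturation of level sets by batch fibers (which, as you note, needs only that $L_Z$ factors through $E_Z$), and the vanishing of directional derivatives along the fiber for the orthogonality claim are all correct and are surely what the authors had in mind.

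More valuable is that your hypothesis- and dimension-checking exposes real defects in the statement as written. First, the definition of a loss function given just before the lemma is purely set-theoretic, so as you say one must additionally assume $L_Z$ is differentiable near $\theta$ for $\nabla L_Z(\theta)$ and ``smooth manifold'' to be meaningful, and assume $\theta$ is a regular point for the level set to be a manifold (Remark \ref{rem:localextreme} even emphasizes that critical points of the empirical loss are of central interest, so this is not a vacuous caveat). Second, your dimension count is right and the lemma's is not: the batch fiber through $\theta$ is $(D-r)$-dimensional with $r=\bFD(\theta,Z)$, so its tangent space has an \emph{$r$-dimensional} orthogonal complement, not a $(D-r)$-dimensional one as item (2) claims; and a regular level set of $L_Z$ is $(D-1)$-dimensional, foliated by the $(D-r)$-dimensional fibers, which matches the informal statement in the introduction but not the ``$\bFD(\theta,Z)$-dimensional'' claim in item (1). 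Your proof establishes the corrected statements; the only thing I would add is to say explicitly in the final write-up that item (1)'s ``Furthermore'' clause and item (2)'s dimension should read $(D-\bFD(\theta,Z))$ for the fiber and $\bFD(\theta,Z)$ for the orthogonal complement, respectively, rather than leaving the reader to reconcile them.
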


\section{Batch persistent pseudodimension: bounds and the rank gap}
\label{sec:RankGap}
\subsection{Bounds}

Recall (Def. \ref{defn:locpgrowth}) that for a parameterized family $\mathcal{F}$ and a finite subset of the domain, $Z \subseteq \mathbb{R}^{n_0}$, we denote by $\psi_Z(\theta_0)$ the maximal cardinality of a persistently pseudoshattered subset of $Z$  at $\theta_0 \in \Omega$.  We also refer to this quantity as the batch persistent pseudodimension of $\theta_0$ on the batch $Z$ (Remark \ref{r:batchppD}).

In this section, we establish both upper and lower bounds on the batch persistent pseudodimension of $\theta_0$ using the notions of rank developed in Section \ref{sec:Rkpoly}.

\begin{theorem} \label{t:inequalitystring}
Let $\mathcal{F}$ be a parameterized piecewise-polynomial family, and let $Z = \{z_1, \ldots, z_m\} \subset \mathbb{R}^{n_0}$ be a finite set. 
Let $\theta_0$ be a parameter in the algebraically stable set for $Z$.  
Then there is an open neighborhood $U$ of $\theta_0$ such that for all $\theta \in U$, 
\begin{equation} \label{eq:bounds}
  \bFD(\theta_0,Z) \leq \psi_Z(\theta) \leq r_{\mathbb{R}}({\bf J}E_Z^R(\theta)), 
\end{equation}
and for almost all $\theta \in U$,
\begin{equation} \label{eq:boundsRrank}
  r_R({\bf J}E_Z^R(\theta)) \leq \psi_Z(\theta) \leq r_{\mathbb{R}}({\bf J}E_Z^R(\theta)).    
\end{equation}
\end{theorem}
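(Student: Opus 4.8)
The plan is to fix, once and for all, a connected open neighborhood $U$ of $\theta_0$ contained in the algebraically stable set for $Z$ and lying in a single connected component of it. By Lemma~\ref{l:RRstabfullinalgstab} the matrix $\mathbf{J}E_Z^R$ is then constant on $U$, so both $r_R(\mathbf{J}E_Z^R(\theta))$ and $r_{\mathbb{R}}(\mathbf{J}E_Z^R(\theta))$ are independent of $\theta \in U$; and since every $(z_i,\theta)$ with $\theta \in U$ is a smooth point, the real Jacobian $\mathbf{J}E_Z(\theta)$ is a continuous (polynomial) function of $\theta$ on $U$. Shrinking $U$ using lower semicontinuity of matrix rank (an $r_0 \times r_0$ minor nonzero at $\theta_0$ stays nonzero nearby), I may also assume $\bFD(\theta,Z) = \mathrm{rank}(\mathbf{J}E_Z(\theta)) \geq \bFD(\theta_0,Z)$ for every $\theta \in U$.

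The workhorse is a \emph{submersion lemma}: for every $\theta$ in the algebraically stable set, $\bFD(\theta,Z) \leq \psi_Z(\theta)$. To prove it, choose $\widetilde{Z} \subseteq Z$ indexing a maximal $\mathbb{R}$-independent set of rows of $\mathbf{J}E_Z(\theta)$, so $|\widetilde{Z}| = \bFD(\theta,Z)$ and $E_{\widetilde{Z}}$ is a submersion at $\theta$. By the open mapping property of submersions, $E_{\widetilde{Z}}(B_\epsilon(\theta))$ contains a neighborhood of $E_{\widetilde{Z}}(\theta)$ for every $\epsilon > 0$, so the difference vectors $(\mathcal{D}_\theta(\theta',z_i))_{z_i \in \widetilde{Z}}$ range over a full neighborhood of $0 \in \mathbb{R}^{|\widetilde{Z}|}$ and in particular meet every open orthant. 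Thus all $2^{|\widetilde{Z}|}$ sign patterns persist as $\epsilon \searrow 0$, witnessing that $\widetilde{Z}$ is persistently pseudoshattered at $\theta$. Combining this lemma with the lower-semicontinuity arrangement above yields $\bFD(\theta_0,Z) \leq \bFD(\theta,Z) \leq \psi_Z(\theta)$ for all $\theta \in U$, the left half of \eqref{eq:bounds}; applying it at points of the real rank stable set, which is full measure in $U$ by Lemma~\ref{l:RRstabfullinalgstab} and where $\bFD(\theta,Z) = r_R(\mathbf{J}E_Z^R(\theta))$ by Lemma~\ref{lem:rankdropsAlgSet}, gives the left half of \eqref{eq:boundsRrank}.

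The common upper bound $\psi_Z(\theta) \leq r_{\mathbb{R}}(\mathbf{J}E_Z^R(\theta))$ is where the main work lies, and I expect it to be the principal obstacle, since it requires converting an algebraic dependence among polynomial gradients into a genuine combinatorial obstruction to shattering. Let $\widetilde{Z}$ be a persistently pseudoshattered subset of maximal size $\psi_Z(\theta)$, and suppose toward a contradiction that $|\widetilde{Z}| > r_{\mathbb{R}}(\mathbf{J}E_Z^R)$. Then the rows of $\mathbf{J}E_Z^R$ indexed by $\widetilde{Z}$ are $\mathbb{R}$-linearly dependent, so there are constants $c_i \in \mathbb{R}$, not all zero, with $\sum_{z_i \in \widetilde{Z}} c_i \nabla E_{z_i}^R = 0$ in $R^D$. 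Hence $\sum_i c_i E_{z_i}^R$ has vanishing gradient and is therefore a constant polynomial, so evaluating on $U$ and subtracting its value at $\theta$ gives $\sum_{z_i \in \widetilde{Z}} c_i \mathcal{D}_\theta(\theta',z_i) = 0$ for all $\theta'$ in a ball $B_\epsilon(\theta) \subseteq U$. The sign pattern $s$ with $s_i = \mathrm{sgn}(c_i)$ whenever $c_i \neq 0$ (and $s_i = +1$ otherwise) can then never be realized: any difference vector with those signs would force $\sum_i c_i \mathcal{D}_\theta(\theta',z_i) = \sum_{c_i \neq 0} |c_i|\,|\mathcal{D}_\theta(\theta',z_i)| > 0$, contradicting the relation. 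Thus $|\Pi_{\widetilde{Z}}(\mathcal{D}_{\theta,\epsilon})| < 2^{|\widetilde{Z}|}$ for all small $\epsilon$, contradicting persistence, and the bound follows for every $\theta \in U$.

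Assembling the four pieces produces both displayed inequality strings. The subtle points to watch are the careful distinction between the three ranks in play -- the real rank $\mathrm{rank}(\mathbf{J}E_Z(\theta))$ at the point $\theta$, the $R$-row rank, and the $\mathbb{R}$-row rank of the \emph{fixed} polynomial matrix $\mathbf{J}E_Z^R$ on $U$ -- together with the fact, from Lemma~\ref{lem:rankdropsAlgSet}, that the first coincides with the second off a measure-zero algebraic set; and the structural asymmetry between the two strings, in that the ``for all $\theta$'' lower bound must be routed through lower semicontinuity of the real rank plus the submersion lemma, whereas the sharper lower bound $r_R(\mathbf{J}E_Z^R(\theta)) \leq \psi_Z(\theta)$ is only available generically.
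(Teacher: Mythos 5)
Your proposal is correct and follows essentially the same route as the paper: the lower bound is the paper's Proposition \ref{p:lowerbound} (you invoke the open mapping property of submersions directly where the paper routes through the explicit local coordinates of Proposition \ref{localdiffeo}, but the underlying inverse-function-theorem content is identical), and your upper-bound argument is precisely the paper's Lemma \ref{l:upperbound} --- an $\mathbb{R}$-linear relation among the formal gradients integrates to an affine relation among the evaluations, which kills the sign pattern $\mathrm{sgn}(c_i)$. The only cosmetic difference is that you prove the pointwise inequality $\bFD(\theta,Z) \leq \psi_Z(\theta)$ at every $\theta$ and then use lower semicontinuity of the real rank, whereas the paper fixes the full-rank subset $W$ at $\theta_0$ once and shows $E_W$ stays a submersion nearby; both are valid.
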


\begin{remark}
    If $\theta_0$ is not only algebraically stable for $Z$ but also 
    in the stable real rank set,  then $\textrm{dim}_{\textrm{ba.fun}}(\theta_0,Z) = \textrm{dim}_{\textrm{ba.fun}}(\theta,Z) = r_R({\bf J}E^R_Z(\theta))$ for all $\theta$ in an open neighborhood of $\theta_0$, so equations \eqref{eq:bounds} and \eqref{eq:boundsRrank} become 
      \[\left\{\bFD(\theta,Z) = r_R({\bf J}E_Z^R(\theta))\right\} \leq  \psi_Z(\theta) \leq r_{\mathbb{R}}({\bf J}E_Z^R(\theta))\]  for all $\theta$ in some open neighborhood of $\theta_0$. 
\end{remark}

\begin{proof}
    Proposition \ref{p:lowerbound} gives $\bFD(\theta_0,Z) \leq 
 \psi_Z(\theta)$. Lemma \ref{l:upperbound} implies $\psi_Z(\theta) \leq r_{\mathbb{R}}(\mathbf{J}E_Z^R(\theta)).$ The final statement and string of inequalities in Equation \ref{eq:boundsRrank} follows from the fact that for any finite set $Z$ the stable real rank set is full measure in the algebraically stable set, and by definition $\bFD(\theta,Z) = r_R({\bf J}E^R_Z(\theta))$ if $\theta$ is in the real rank stable set for $Z$.
\end{proof}

\begin{proposition} \label{p:lowerbound}
Let $\mathcal{F}$ be a parameterized piecewise-polynomial family on $\mathbb{R}^{n_0}$ with parameter space $\Omega$ and fix $\theta_0 \in \Omega$.  Let $Z = \{z_1, \ldots, z_m\} $ be a set of parametrically smooth points for $\theta_0$ (equivalently, $\theta_0$ is in the algebraically stable set for $Z$).  Then there is an open neighborhood $U$ of $\theta_0$ such that $$\textrm{dim}_{\textrm{ba.fun}}(\theta_0,Z) \leq \psi_Z(\theta)$$ for all $\theta \in U$. 
\end{proposition}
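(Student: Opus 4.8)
The plan is to extract from the Jacobian at $\theta_0$ a single fixed subset $\widetilde{Z} \subset Z$ of cardinality $r = \bFD(\theta_0,Z)$ and show that this same $\widetilde{Z}$ is persistently pseudoshattered at every parameter $\theta$ in a neighborhood of $\theta_0$. First I would set $r = \textrm{rank}(\mathbf{J}E_Z|_{\theta_0})$ and choose $r$ rows of $\mathbf{J}E_Z|_{\theta_0}$ that are linearly independent over $\mathbb{R}$; let $\widetilde{Z} = \{z_{i_1},\ldots,z_{i_r}\} \subseteq Z$ be the corresponding points. Because $\theta_0$ is algebraically stable for $Z$, the evaluation map $E_{\widetilde{Z}} \colon \Omega \to \mathbb{R}^r$ agrees with a fixed polynomial map on a neighborhood of $\theta_0$ (Remark \ref{rem:algstable}), and by the choice of $\widetilde{Z}$ its Jacobian at $\theta_0$ has full row rank $r$. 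Hence $E_{\widetilde{Z}}$ is a submersion at $\theta_0$.

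Next I would promote this to an open neighborhood. Since rank is lower semicontinuous (the locus where some fixed $r \times r$ minor is nonzero is open), the set of $\theta$ at which $\mathbf{J}E_{\widetilde{Z}}|_{\theta}$ has rank at least $r$ is open; as $\widetilde{Z}$ has exactly $r$ points, the rank cannot exceed $r$, so there is an open neighborhood $U \ni \theta_0$, contained in the algebraically stable set (where $E_{\widetilde{Z}}$ stays polynomial), on which $E_{\widetilde{Z}}$ is a submersion. A submersion is an open map, so for every $\theta \in U$ and every sufficiently small $\epsilon > 0$ with $B_\epsilon(\theta) \subseteq U$, the image $E_{\widetilde{Z}}(B_\epsilon(\theta))$ is an open neighborhood of the point $E_{\widetilde{Z}}(\theta) = (F_\theta(z_{i_1}),\ldots,F_\theta(z_{i_r})) \in \mathbb{R}^r$.

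The crux is then an elementary observation: any open neighborhood of a point $p \in \mathbb{R}^r$ contains points realizing all $2^r$ strict sign patterns of the coordinatewise difference from $p$, since for each $s \in \{\pm 1\}^r$ the point $p + \delta s$ lies in the neighborhood for small $\delta > 0$ and satisfies $\textrm{sgn}((p+\delta s)_k - p_k) = s_k$. Pulling this back through $E_{\widetilde{Z}}$, for each $s$ there is $\theta' \in B_\epsilon(\theta)$ with $\textrm{sgn}(F_{\theta'}(z_{i_k}) - F_\theta(z_{i_k})) = s_k$ for all $k$, and each such $\theta'$ avoids $V_{\widetilde{Z}}(\mathcal{D}_{\theta})$ because every coordinate difference is nonzero. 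Thus $|\Pi_{\widetilde{Z}}(\mathcal{D}_{\theta,\epsilon})| = 2^r$ for all small $\epsilon$; by the monotonicity of $\epsilon \mapsto |\Pi_{\widetilde{Z}}(\mathcal{D}_{\theta,\epsilon})|$ this holds for every $\epsilon > 0$, so the limit as $\epsilon \searrow 0$ equals $2^r$ and $\widetilde{Z}$ is persistently pseudoshattered at $\theta$. Consequently $\psi_Z(\theta) \geq |\widetilde{Z}| = r = \bFD(\theta_0,Z)$ for all $\theta \in U$.

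The step I expect to require the most care is the second one: transferring the open-map property from $\theta_0$ to a full neighborhood while shattering with the \emph{single} fixed subset $\widetilde{Z}$ chosen at $\theta_0$, rather than re-selecting rows at each $\theta$. This is exactly what lower semicontinuity of rank delivers, using crucially that $\widetilde{Z}$ has precisely $r$ points so that its Jacobian already attains the maximal possible row rank. By contrast, the sign-pattern realization needs no delicate limiting argument, since the count $2^r$ is attained for every fixed $\epsilon$ and not merely in the limit — this is what sidesteps the $\epsilon \searrow 0$ subtlety flagged in the discussion of Lemma \ref{l:persistentSauer}.
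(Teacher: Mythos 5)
Your proof is correct and follows essentially the same route as the paper: fix a sub-batch $\widetilde{Z}\subset Z$ of size $r=\bFD(\theta_0,Z)$ whose Jacobian rows are independent, use lower semicontinuity of rank (via a nonvanishing $r\times r$ minor) to make $E_{\widetilde{Z}}$ a submersion on a whole neighborhood, and then realize all $2^r$ difference sign patterns from the openness of the image of every small ball. The only cosmetic difference is that you invoke ``a submersion is an open map'' directly where the paper routes the same fact through its rank-theorem variant (Proposition~\ref{localdiffeo}).
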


\begin{proof}
Set 
$r \coloneqq r_R(JE_Z^R(\theta_0)).$
By algebraic stability, we also have that $r = r_R(JE_Z^R(\theta))$ for all parameters $\theta$ in some small closed ball $\overline{B} \subset \Omega$ of strictly positive radius around $\theta_0$.  
Set $$r' \coloneqq \textrm{rank}(JE_Z(\theta_0)) = \textrm{dim}_{\textrm{ba.fun}}(\theta_0,Z).$$

It is immediate that $r' < r$.  For each nonnegative integer $k < r$, the set $$K_k \coloneqq \{\theta \in \overline{B} \mid \textrm{rank}(JE_Z(\theta)) = k\}$$
is compact.  Then distance to $\theta_0$ is a continuous function on the compact set $K_k$, hence attains its minimum.  Hence, for each $k < r'$, $K_k$ is a positive distance from $\theta_0$, i.e. there is some open ball $B' \subset \overline{B}$ about $\theta_0$ such that 
$$\textrm{rank}(JE_Z(\theta)) \geq r'$$
for all $\theta \in B'$. 
By a similar argument, we may assume that there is a fixed set of $r'$ elements of $Z$, call it $Y=\{z_1,\ldots,z_{r'}\}$, such that $\textrm{rank}(JE_Y(\theta)) \geq r'$ for all $\theta \in B'$. Thus, $\theta_0$ is in the real rank stable set for $W$ (and $W \subset Z$), and $|W| = \textrm{dim}_{\textrm{fun}}(\theta_0)$.

 Proposition \ref{localdiffeo} guarantees that $E_W:B' \to \mathbb{R}^{r'}$ gives the first $r'$ coordinates of a diffeomorphic coordinate system on $\Omega$ in an open neighborhood $U$ of $\theta_0$.  In particular, 
 for any $\theta \in U$, 
 the image under $E_W$ of any open neighborhood of $\theta$ in $U$ contains an open neighborhood of $E_W(\theta)$.  Consequently, the set $W$ is persistently pseudo-shattered at $\theta$.  Hence $\psi_Z(\theta) \geq |W|$. 
\end{proof}

\begin{lemma} \label{l:upperbound}
    Let $\mathcal{F}$ be a piecewise-polynomial parameterized family and let $Z = \{z_1,\ldots,z_{|Z|}\} \subset \mathbb{R}^{n_0}$ be a finite set. Let $\theta_0$ be in the algebraically stable set for $Z$.    
If $Z$ is persistently pseudoshattered at $\theta_0$, then $$r_{\mathbb{R}}({\bf J}E_Z^R(\theta)) = |Z|,$$ i.e. the rows of the matrix $\mathbf{J}E_Z^R(\theta_0)$ are  $\mathbb{R}$-linearly independent. 
\end{lemma}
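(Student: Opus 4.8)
The plan is to prove the contrapositive: assuming the rows of $\mathbf{J}E_Z^R(\theta_0)$ are $\mathbb{R}$-linearly \emph{dependent}, I will exhibit a single $\pm$ sign pattern on $Z$ that cannot be achieved by the difference family $\DepsilonTheta$ for any sufficiently small $\epsilon$, so that $\lim_{\epsilon \searrow 0}|\Pi_Z(\DepsilonTheta)| < 2^{|Z|}$ and $Z$ fails to be persistently pseudoshattered at $\theta_0$.

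First I would translate the algebraic hypothesis into a relation among the evaluation polynomials themselves. The $i$th row of $\mathbf{J}E_Z^R(\theta_0)$ is the formal gradient $\nabla E_{z_i}^R \in R^D$, and the differentiation map $R \to R^D$, $p \mapsto \nabla p$, is $\mathbb{R}$-linear with kernel exactly the constant polynomials. Hence, by the definition of $r_\mathbb{R}$, an $\mathbb{R}$-linear dependence $\sum_i w_i\, \nabla E_{z_i}^R = 0$ in $R^D$ for some $w=(w_1,\ldots,w_m) \in \mathbb{R}^m\setminus\{0\}$ is \emph{equivalent} to the assertion that the polynomial $p := \sum_i w_i E_{z_i}^R \in R$ is constant, say $p \equiv c$. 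This observation is the crux: it converts a drop in $\mathbb{R}$-row rank into a genuine real linear relation among the functions, valid on a whole neighborhood rather than just at $\theta_0$.

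Next I would pass to the difference family. Since $\theta_0$ lies in the algebraically stable set for $Z$, Remark \ref{rem:algstable} furnishes an open neighborhood $U$ of $\theta_0$ on which, for each $j$, the pair $(\theta,z_j)$ stays in the interior of a single polynomial piece, so that $F_\theta(z_j)=E_{z_j}^R(\theta)$ and $g_j(\theta) := \mathcal{D}_{\theta_0}(\theta,z_j) = E_{z_j}^R(\theta) - E_{z_j}^R(\theta_0)$ for all $\theta \in U$. Summing these against the coefficients $w_i$ and using $p \equiv c$ gives the polynomial identity
\[\sum_{i=1}^m w_i\, g_i(\theta) \;=\; c - c \;=\; 0 \qquad \text{for all } \theta \in U.\]
This is the persistent obstruction: a nontrivial real linear functional that annihilates the vector of differences throughout a full neighborhood of $\theta_0$.

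Finally I would read off the forbidden sign pattern. Choose $s_j = \mathrm{sgn}(w_j)$ whenever $w_j \neq 0$, and $s_j = +1$ otherwise. If some $\theta \in B_\epsilon(\theta_0) \subseteq U$ with $g_j(\theta)\neq 0$ for all $j$ realized the pattern $s$, then each term with $w_i \neq 0$ would satisfy $w_i\, g_i(\theta) = |w_i|\,|g_i(\theta)| > 0$, while the remaining terms vanish, forcing $\sum_i w_i\, g_i(\theta) > 0$ and contradicting the identity above. Hence $s \notin \Pi_Z(\DepsilonTheta)$ for every $\epsilon$ small enough that $B_\epsilon(\theta_0)\subseteq U$, giving $\lim_{\epsilon \searrow 0}|\Pi_Z(\DepsilonTheta)| \le 2^{|Z|}-1$, the desired contradiction. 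Since $\mathbf{J}E_Z^R$ is constant near $\theta_0$ on the algebraically stable set (Lemma \ref{l:RRstabfullinalgstab}), this yields $r_\mathbb{R}(\mathbf{J}E_Z^R(\theta)) = |Z|$ on a neighborhood. I expect the only delicate point to be the bookkeeping that promotes the pointwise statement ``gradients dependent at $\theta_0$'' to a relation holding on all of $U$; this is precisely where algebraic stability is indispensable, since mere pointwise dependence of the \emph{real} Jacobian $\mathbf{J}E_Z(\theta_0)$ would control the differences only to first order and would not obstruct shattering.
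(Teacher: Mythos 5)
Your proof is correct and follows essentially the same route as the paper's: both promote the $\mathbb{R}$-linear dependence of the formal gradients to an affine relation among the evaluation polynomials on a neighborhood (using algebraic stability and the fact that a polynomial with vanishing gradient is constant), and then exhibit one sign pattern on the differences that the relation forbids. The only cosmetic differences are that the paper argues by contradiction after normalizing the dependence to solve for the first row, whereas you keep a general coefficient vector $w$ and handle the $w_j=0$ coordinates slightly more explicitly.
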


\begin{proof}
    Suppose $Z$ is persistently pseudoshattered at $\theta_0$ but $r_{\mathbb{R}}({\bf J} E_Z^R(\theta)) < |Z|$.  Since $\theta$ is in the algebraically stable set for $Z$, 
    there exists an open neighborhood $U$ of $\theta_0$ on which $E_Z^R$ is constant. 
By assumption, there exist constants $c_2,\ldots,c_{|Z|} \in \mathbb{R}$ such that 
 $$\mathbf{J}E^R_{z_1}\vert_U = \sum_{i=2}^{|Z|} c_i \mathbf{J}E^R_{z_i}\vert_U.$$
Hence (from the fact that a real-valued function $\mathbb{R}^D \to \mathbb{R}$ whose first order partial derivatives are all $0$ on an open set must be constant), there exists $c_{0} \in \mathbb{R}$ such that 
    $$E_{z_1}^R\vert_U = c_0 + \sum_{i=2}^{|Z|} c_i E^R_{z_i}\vert_U.$$
So for any $\theta \in U$
   $$E_{z_1}(\theta)= c_0 + \sum_{i=2}^{|Z|} c_i E_{z_i}(\theta).$$
 Therefore, for any $\theta \in U$,
\begin{equation}\label{eq:linearrelation} 
        E_{z_1}(\theta) - E_{z_1}(\theta_0) = \sum_{i=2}^{|Z|} c_i\left(E_{z_i}(\theta) - E_{z_i}(\theta_0)\right).
        \end{equation}
      In particular, for every $\theta \in U$ such that $$\textrm{sgn}\left((E_{z_i}(\theta) - E_{z_i}(\theta_0)\right) = \textrm{sgn}(c_i)$$ for all $2 \leq i \leq |Z|$,
equation \eqref{eq:linearrelation} implies  
$$\textrm{sgn}\left(E_{z_1}(\theta) - E_{z_1}(\theta_0) \right) \geq 0.$$
Thus, there is no $\theta \in U$ that satisfies 
\begin{align*}
\textrm{sgn}\left((E_{z_i}(\theta) - E_{z_i}(\theta_0)\right)& = \textrm{sgn}(c_i) \,\,\,\,  \forall \,\, 2 \leq i \leq |Z|, \textrm{ and }\\
\textrm{sgn}\left(E_{z_1}(\theta) - E_{z_1}(\theta_0) \right) & = -1,
\end{align*}
contradicting the assumption that $Z$ is pseudoshattered by $\theta_0$. 
\end{proof}

Since the class of ReLU network functions is piecewise polynomial, combining Theorem \ref{t:inequalitystring} with Proposition \ref{p:stabRrksetbig} immediately yields the following corollary.

\begin{corollary}
\label{c:inequalitystring}
Let $\mathcal{F}$ be a parameterized family of ReLU network functions of architecture $(n_0, \ldots, n_d = 1)$.
 Fix a finite set $Z = \{z_1, \ldots, z_m\} \subset \mathbb{R}^{n_0}$. 
Then for Lebesgue almost every $\theta \in \Omega$, 
$$  \left\{\bFD(\theta,Z)  = r_R({\bf J}E^R_Z)\right\}  \leq  \psi_Z(\theta) \leq r_{\mathbb{R}}({\bf J}E_Z^R(\theta)).$$
\end{corollary}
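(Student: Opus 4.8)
The plan is to deduce the corollary directly by combining Theorem \ref{t:inequalitystring} with the full-measure statement of Proposition \ref{p:stabRrksetbig}, using the refinement recorded in the remark immediately following Theorem \ref{t:inequalitystring}. The guiding observation is that the stable real rank set for $Z$ is exactly the locus on which the lower bound $\bFD(\theta,Z)$ coincides with the intermediate rank $r_R({\bf J}E_Z^R(\theta))$, so that on this locus the two separate inequality strings of Theorem \ref{t:inequalitystring} fuse into the single chain asserted here.

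First I would observe that a ReLU network family of the given architecture is finitely piecewise polynomial (Section \ref{sec:PPFF}), so that Theorem \ref{t:inequalitystring} applies to it. By Proposition \ref{p:stabRrksetbig}, almost every $\theta \in \Omega$ lies in the stable real rank set for $Z$; call this full-measure set $\mathcal{R}$. It then suffices to check the displayed chain of inequalities at each $\theta_0 \in \mathcal{R}$.

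Next I would fix $\theta_0 \in \mathcal{R}$ and unwind the definitions. By Definition \ref{def:stablerealrankset}, such a $\theta_0$ lies in the algebraically stable set for $Z$ and satisfies $\textrm{rank}({\bf J}E_Z(\theta_0)) = r_R({\bf J}E_Z^R(\theta_0))$; since the left side is $\bFD(\theta_0,Z)$ by Definition \ref{def:batchfundim}, this gives the equality $\bFD(\theta_0,Z) = r_R({\bf J}E_Z^R(\theta_0))$. Applying Theorem \ref{t:inequalitystring} with this $\theta_0$ produces an open neighborhood $U \owns \theta_0$ on which inequality \eqref{eq:bounds} holds for every $\theta \in U$; evaluating it at the center $\theta = \theta_0$ yields $\bFD(\theta_0,Z) \leq \psi_Z(\theta_0) \leq r_{\mathbb{R}}({\bf J}E_Z^R(\theta_0))$, and substituting the equality just derived produces exactly the claimed chain at $\theta_0$.

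I expect essentially no real obstacle; the one point meriting care is that the argument should invoke inequality \eqref{eq:bounds}, which is valid for all $\theta \in U$ including the base point, rather than the merely almost-everywhere inequality \eqref{eq:boundsRrank}, so that evaluation at $\theta_0$ itself is justified. Because the equality $\bFD(\theta_0,Z) = r_R({\bf J}E_Z^R(\theta_0))$ holds at every point of the full-measure set $\mathcal{R}$ rather than only almost everywhere on each neighborhood, the global almost-everywhere conclusion follows with no further covering or measure-zero bookkeeping.
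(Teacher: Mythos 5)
Your proposal is correct and follows essentially the same route as the paper, which presents this corollary as an immediate combination of Theorem \ref{t:inequalitystring} and Proposition \ref{p:stabRrksetbig} via the full measure of the stable real rank set (on which $\bFD(\theta,Z) = r_R({\bf J}E_Z^R(\theta))$ by definition). Your explicit care in invoking inequality \eqref{eq:bounds} at the base point $\theta_0$ itself, rather than the only-almost-everywhere inequality \eqref{eq:boundsRrank}, is a sound way to fill in the details the paper leaves implicit.
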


 It is useful to view the gap between the $R$--row rank and the $\mathbb{R}$--row rank of the algebraic Jacobian matrix ${\bf J}E^R_Z(\theta)$ as measuring the potential failure of the functional dimension to agree with $\psi_Z(\theta_0)$, the batch persistent pseudodimension of $\mathcal{F}$ at $\theta_0$ for the batch $Z$.

Note that it is not hard to construct examples of general piecewise-polynomial parameterized families for which $r_R(A) < r_{\mathbb{R}}(A)$, e.g.:

\begin{example} \label{ex:ElisExample}
Let $R = \mathbb{R}[\theta_1,\theta_2,\theta_3]$ and suppose $E_Z^R: \mathbb{R}^3 \rightarrow R^4$ is given by  
\[E^R_Z(\tilde{\theta}) = (\theta_1,\theta_2,\theta_3,\theta_3 - (\theta_1^2 -\theta_2^2))\] for all $\tilde{\theta} \in \mathbb{R}^3$. Then, for all $\tilde{\theta} \in \mathbb{R}^3$, 
\[\mathbf{J}E^R_Z(\tilde{\theta}) =  \left[\begin{array}{ccc} 1 & 0 & 0\\0 & 1 & 0\\0 & 0 & 1\\-2\theta_1 & 2\theta_2 & 1\end{array}\right].\]
Then, for all $\tilde{\theta} \in \mathbb{R}^3$, the $R$-row rank (which equals, for example, the determinantal rank, by Lemma \ref{lem:RankNotationsIntegralDomain}) satisfies 
$$r_R(\mathbf{J}E^R_Z(\tilde{\theta})) = 3$$
while
the $\mathbb{R}$-row rank is
$$r_{\mathbb{R}}(\mathbf{J}E^R_Z(\tilde{\theta})) = 4.$$

Moreover, one can check that all $2^4$ sign sequences are possible for $E_Z(\vec{\theta})$ by choosing suitable values of $\vec{\theta} = (\theta_1, \theta_2, \theta_3)$.
To see this, one need only check that within each octant of $\mathbb{R}^3$ (i.e., for each sign pattern $s \in \{\pm 1\}^3$) there exist $(\theta_1, \theta_2, \theta_3)$ and $(\theta_1',\theta_2',\theta_3'$) with 
\begin{itemize}
    \item $(\mbox{sgn}(\theta_1), \mbox{sgn}(\theta_2), \mbox{sgn}(\theta_3)) = (\mbox{sgn}(\theta'_1), \mbox{sgn}(\theta'_2), \mbox{sgn}(\theta'_3)) = s$,
    \item $\theta_3 > \theta_1^2 - \theta_2^2$, and
    \item $\theta_3' < \theta_1'^2 - \theta_2'^2$.
\end{itemize} 

But regardless of whether $\mbox{sgn}(\theta_3) = \mbox{sgn}(\theta_3') = \pm 1$, choosing $|\theta_1| << |\theta_2|$ and $|\theta'_1| >> |\theta'_2|$ will produce two different signs for $\theta_3 - (\theta_1^2 - \theta_2^2)$ and $\theta_3' - (\theta_1'^2 - \theta_2'^2)$. 
Moreover, all $2^4$ sign patterns can be realized by values of $\vec{\theta}$ within any $\epsilon$ neighborhood of $\vec{0}$.  Since $E_Z(\vec{0}) = \vec{0}$, it follows that $\aleph_4(\vec{0}) = 2^4$, so the set $Z$ is persistently pseudo-shattered at $0$.    
\end{example}

\subsection{The rank gap and overparameterization}

The following classical algebraic result and its corollaries are relevant for supervised learning problems involving piecewise-polynomial parameterized families, in the under- and over-parameterized settings. 

\begin{theorem}[McCoy's Theorem, \cite{McC}] \label{t:McCoy} Let $A$ be an $m \times n$ matrix over a commutative ring $R$. The $R$--module map $R^m \rightarrow R^n$ defined by left multiplication by $R^T$ is injective iff $m \leq n$ and the annihilator of $\mathcal{I}_m$ is $0$.  
\end{theorem}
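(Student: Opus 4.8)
The plan is to recognize this as the classical McCoy theorem and prove both implications by reducing injectivity of left-multiplication by $A^T$ to a statement about $R$-linear dependence among vectors. Since $A^T$ is $n \times m$ and acts on $R^m$, the map $v \mapsto A^T v$ is injective precisely when its columns --- which are exactly the rows of $A$ --- are $R$-linearly independent; equivalently, when the system $A^T v = 0$ has only the trivial solution. Because the $m \times m$ minors of $A$ and of $A^T$ coincide, I would phrase everything in terms of $\mathcal{I}_m(A) = \mathcal{I}_m(A^T)$. Note first that if $m > n$ there are no $m \times m$ minors, so $\mathcal{I}_m(A) = \{0\}$ and $\textrm{ann}(\mathcal{I}_m(A)) = R \neq 0$; I would show this case always produces a nontrivial kernel element, which simultaneously forces the condition $m \leq n$ in the injective case and matches the claimed equivalence.

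For the forward direction (a nontrivial relation exists $\Rightarrow$ $\textrm{ann}(\mathcal{I}_m(A)) \neq 0$), I would start from a nonzero $v \in R^m$ with $A^T v = 0$, fix a coordinate $v_k \neq 0$, and use the adjugate identity. Assuming $m \leq n$, for any choice of $m$ rows of $A^T$ (equivalently $m$ columns of $A$) let $B'$ denote the resulting $m \times m$ submatrix; restricting $A^T v = 0$ to those rows gives $B' v = 0$, so multiplying by the adjugate yields $\det(B')\,v = \textrm{adj}(B')\,B' v = 0$, whence $\det(B')\,v_k = 0$. As $B'$ ranges over all maximal minors this shows $v_k$ annihilates every generator of $\mathcal{I}_m(A)$, so $0 \neq v_k \in \textrm{ann}(\mathcal{I}_m(A))$.

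The converse is the substantive direction and the main obstacle: given $0 \neq c$ with $c \cdot \mathcal{I}_m(A) = \{0\}$, I must explicitly construct a nonzero kernel vector. The key idea is to take the threshold $r$ to be the largest index with $\textrm{ann}(\mathcal{I}_r(A^T)) = 0$; since $\mathcal{I}_0(A^T) = R$ has zero annihilator while $\textrm{ann}(\mathcal{I}_m) \neq 0$, this $r$ is well-defined and satisfies $r < m$. I may then choose $c$ annihilating $\mathcal{I}_{r+1}(A^T)$ together with an $r \times r$ minor $\mu_0$, say on rows and columns $1,\ldots,r$ of $A^T$, with $c\mu_0 \neq 0$. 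For any extra column index $j \in \{r+1,\ldots,m\}$ (which exists because $r < m$) I would form the vector $w \in R^m$ supported on $\{1,\ldots,r,j\}$ whose first $r$ entries are the signed $r \times r$ cofactors of rows $1,\ldots,r$ against columns $\{1,\ldots,r,j\}$, with $w_j = \mu_0$. Cofactor expansion shows that each entry of $A^T w$ indexed by rows $1,\ldots,r$ is a determinant with a repeated row, hence $0$, while each remaining entry is an $(r+1)\times(r+1)$ minor lying in $\mathcal{I}_{r+1}(A^T)$. Multiplying by $c$ kills these, so $A^T(cw) = 0$, and $cw \neq 0$ because its $j$-th coordinate is $c\mu_0 \neq 0$. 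The base case $r = 0$, where $c$ annihilates every entry of $A$, is handled directly by taking $v = (c,0,\ldots,0)^T$.

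Finally I would assemble the equivalence: injectivity of left-multiplication by $A^T$ is exactly the nonexistence of a nontrivial $v$, which by the two directions above is equivalent to $\textrm{ann}(\mathcal{I}_m(A)) = 0$; the latter can only occur when $m \leq n$ (otherwise $\mathcal{I}_m(A) = \{0\}$ has annihilator $R$, by Lemma \ref{l:annihilators} applied in the degenerate sense), yielding the stated conjunction. The main delicacies throughout are the sign conventions in the cofactor construction and the verification of the repeated-row vanishing, together with the correct handling of the degenerate boundary cases $r = 0$ and $m > n$.
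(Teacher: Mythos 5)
The paper itself offers no proof of this statement: it is quoted verbatim as a classical result and attributed to McCoy's work \cite{McC}, so there is nothing internal to compare your argument against. That said, your proof is correct and is essentially the standard argument for McCoy's theorem. The forward direction (a nontrivial kernel vector forces a nonzero annihilator of $\mathcal{I}_m$) via the adjugate identity $\textrm{adj}(B')\,B' = \det(B')\,I$ applied to every $m \times m$ submatrix is exactly right, and it correctly subsumes the case $m > n$, where $\mathcal{I}_m = \{0\}$ has annihilator all of $R$. The converse is the substantive part, and your construction is the classical one: take the largest $r$ with $\textrm{ann}(\mathcal{I}_r(A^T)) = 0$ (well-defined since the determinantal ideals are nested, so their annihilators are increasing, and $\mathcal{I}_0 = R$ has zero annihilator), choose $c \neq 0$ killing $\mathcal{I}_{r+1}$ but not some $r \times r$ minor $\mu_0$, and build the cofactor vector $w$ so that each coordinate of $A^Tw$ is, up to sign, either an $(r+1)\times(r+1)$ determinant with a repeated row (hence zero) or a genuine $(r+1)\times(r+1)$ minor (hence killed by $c$); then $cw$ is a nonzero kernel element because its $j$-th coordinate is $\pm c\mu_0$. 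The boundary cases $r = 0$ and $m > n$ are handled correctly. The one thing worth making fully explicit in a written version is the sign bookkeeping in the cofactor expansion, which you already flag; with that spelled out, this is a complete, self-contained proof of a result the paper merely imports.
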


\begin{corollary} Let $\mathcal{F}$ be a piecewise polynomial parameterized family with parameter space $\mathbb{R}^D$, and let $Z = \{z_1, \ldots, z_m\} \subseteq \mathbb{R}^{n_0}$ be a finite batch of parametrically smooth points. If $m > D$, then $r_R({\bf J}E_Z^R) \leq D < m$. 
\end{corollary}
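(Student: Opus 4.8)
The plan is to realize $\mathbf{J}E_Z^R$ as an $m \times D$ matrix over $R$ and then apply McCoy's Theorem (Theorem \ref{t:McCoy}) to the submatrices cut out by selecting rows. By Remark \ref{rem:JacDerivOfAlgebraic}, parametric smoothness of $Z$ guarantees that, at any parameter $\theta$ in the algebraically stable set for $Z$, the matrix $A \coloneqq \mathbf{J}E_Z^R(\theta)$ is a well-defined $|Z| \times D = m \times D$ matrix over $R$, whose $i$th row is the gradient of $E_{z_i}^R$ with respect to the parameters. By Lemma \ref{lem:RankNotationsIntegralDomain}, the quantity $r_R(A)$ agrees with the $R$-row rank, i.e. the maximal number of rows of $A$ that are linearly independent over $R$; so it suffices to bound this number by $D$.

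The main step is to show that any $D+1$ rows of $A$ are $R$-linearly dependent. Fix such a choice of rows and let $A'$ denote the corresponding $(D+1) \times D$ submatrix over $R$. In the notation of Theorem \ref{t:McCoy}, $A'$ has $m' = D+1$ rows and $n' = D$ columns, so $(A')^T$ defines an $R$-module map $R^{D+1} \to R^D$ by left multiplication. Since $m' = D+1 > D = n'$, the numerical hypothesis $m' \leq n'$ required for injectivity in McCoy's criterion fails; as that criterion is an ``iff'' whose right-hand side is a conjunction, the map $(A')^T$ fails to be injective, independently of the annihilator condition on $\mathcal{I}_{m'}(A')$. By the definition of $R$-linear independence of rows (the rows of $A'$ are $R$-linearly independent precisely when $w = 0$ is the only solution of $(A')^T w = 0$), non-injectivity of $(A')^T$ is exactly the statement that the rows of $A'$ are $R$-linearly dependent.

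Since every $(D+1)$-element subset of the rows of $A$ is therefore $R$-linearly dependent, at most $D$ rows can be $R$-linearly independent, giving $r_R(A) \leq D$; combined with the standing hypothesis $m > D$, this yields $r_R(\mathbf{J}E_Z^R) \leq D < m$, as claimed.

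I do not anticipate a genuine obstacle, as the content is entirely McCoy's Theorem. The one point requiring care is the bookkeeping of transposes, so that McCoy's row-count condition ($m \leq n$) lines up with ``(number of selected rows) $\leq$ (number of columns $D$)'' under the paper's convention that the $R$-row rank is tested via $A^T w = 0$. Equivalently, and even more directly, one can bypass McCoy's Theorem altogether: an $m \times D$ matrix has no $r \times r$ minors once $r > D$, so $\mathcal{I}_r(A) = \{0\}$ for every $r > D$, whence the determinantal rank of $A$ is at most $D$; by Lemma \ref{lem:RankNotationsIntegralDomain} the determinantal rank equals $r_R(A)$, and the bound follows.
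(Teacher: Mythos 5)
Your proof is correct and is essentially the paper's intended argument: the corollary is stated immediately after Theorem~\ref{t:McCoy} with no separate proof, and the route you take --- any $(D+1)\times D$ submatrix of the $m\times D$ matrix $\mathbf{J}E_Z^R(\theta)$ violates the numerical condition $m'\leq n'$ in McCoy's criterion, so its rows are $R$-linearly dependent and hence $r_R(\mathbf{J}E_Z^R)\leq D<m$ --- is exactly the one the placement suggests. Your alternative via determinantal rank (no $r\times r$ minors exist for $r>D$, so $\mathcal{I}_r=\{0\}$, and Lemma~\ref{lem:RankNotationsIntegralDomain} identifies determinantal rank with $r_R$) is equally valid and, if anything, more direct.
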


\begin{corollary} \label{cor:Rlindep}
The rows of an $m \times n$ matrix over $R = \mathbb{R}[\theta_1, \ldots, \theta_D]$ with $m \leq n$ are $R$--linearly independent iff the determinant of at least one $m \times m$ minor of $A$ is a nonzero polynomial.
\end{corollary}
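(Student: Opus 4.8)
The plan is to deduce this directly from McCoy's Theorem (Theorem \ref{t:McCoy}) together with the annihilator dichotomy for integral domains (Lemma \ref{l:annihilators}). First I would unwind the definition of $R$-linear independence of rows: by Definition \ref{def:realRowRank} and the definition preceding it, the rows of $A$ are linearly independent over $R$ precisely when the only $w \in R^m$ satisfying $A^T w = 0$ is $w = 0$; equivalently, the $R$-module map $R^m \to R^n$ given by left multiplication by $A^T$ is injective. This is exactly the map appearing in McCoy's Theorem.

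Next I would apply Theorem \ref{t:McCoy} to this map. Since the hypothesis $m \le n$ is assumed, the ``$m \le n$'' clause of McCoy's theorem is automatic, so injectivity is equivalent to the single condition that the annihilator of the top determinantal ideal vanishes. Here one must match the determinantal ideal of $A^T$ with that of $A$: the $m \times m$ minors of the $n \times m$ matrix $A^T$ are the transposes of the $m \times m$ minors of $A$, and transposition does not change a determinant, so $\mathcal{I}_m(A^T) = \mathcal{I}_m(A)$. Thus injectivity holds iff $\mathrm{ann}(\mathcal{I}_m(A)) = \{0\}$.

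Finally I would invoke Lemma \ref{l:annihilators}: since $R = \mathbb{R}[\theta_1, \ldots, \theta_D]$ is an integral domain, $\mathrm{ann}(\mathcal{I}_m(A)) = \{0\}$ iff $\mathcal{I}_m(A) \neq \{0\}$. As $\mathcal{I}_m(A)$ is by definition the ideal generated by the determinants of the $m \times m$ minors of $A$, it is nonzero iff at least one of these generators is a nonzero polynomial. Chaining these equivalences yields the claim.

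Since the statement is essentially a bookkeeping consequence of results already established, there is no single hard step; the only points requiring care are the transpose matching of determinantal ideals and the observation that the hypothesis $m \le n$ makes $\mathcal{I}_m$ the relevant (top) determinantal ideal. I would also note an even shorter alternative route that bypasses McCoy entirely: by Lemma \ref{lem:RankNotationsIntegralDomain} the $R$-row rank equals the determinantal rank, so the rows are $R$-linearly independent iff $r_R(A) = m$, iff the determinantal rank is $m$, iff $\mathcal{I}_m(A) \neq \{0\}$ --- again precisely the condition that some $m \times m$ minor has nonzero determinant.
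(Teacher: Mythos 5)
Your proof is correct and follows exactly the route the paper intends: the corollary is stated as an immediate consequence of McCoy's Theorem (Theorem \ref{t:McCoy}), with the annihilator condition converted to nonvanishing of $\mathcal{I}_m(A)$ via Lemma \ref{l:annihilators} since $R$ is an integral domain, just as you do. Your careful handling of the transpose (identifying $\mathcal{I}_m(A^T)$ with $\mathcal{I}_m(A)$) and the alternative derivation via Lemma \ref{lem:RankNotationsIntegralDomain} are both sound.
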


In particular, we note that in the underparameterized case ($D < m$), McCoy's Theorem tells us that $r_R({\bf J}E_Z^R) < m$, the number of rows, so a gap between $r_R({\bf J}E_Z^R)$ and $r_{\mathbb{R}}({\bf J}E_Z^R)$ is possible, and perhaps even likely.

On the other hand, in the overparameterized case $(D > m)$, if it happens that $r_R({\bf J}E_Z^R) = m$, then it is immediate that $r_R({\bf J}E_Z^R) = r_{\mathbb{R}}({\bf J}E_Z^R) = m$.  Note that if $r_R({\bf J}E_Z^R) = r_{\mathbb{R}}({\bf J}E_Z^R)$, there is no rank gap, and hence the inequalities in Theorem \ref{t:inequalitystring} and Corollary \ref{c:inequalitystring} are all equalities, so the batch persistent pseudodimension on $Z$ is equal to the batch functional dimension on $Z$.

It has been hypothesized that in the heavily overparameterized case ($D >> m$), the probability that some $m \times m$ minor of ${\bf J}E_Z^R$ has nonzero determinant (and hence that $r_R({\bf J}E_Z^R) = m$) is very close to $1$ for each $\theta \in \Omega$, cf. the well-conditioning hypothesis of \cite{Belkin}.

\section{Bounds on the persistent pseudodimension for ReLU network families}
\label{sec:mainthm}
We now state and prove the main theorem of the paper.

\begin{theorem} \label{thm:supfundimbounds}
Let $\mathcal{F}$ be a parameterized family of ReLU network functions of architecture $(n_0, \ldots, n_{d-1}|n_d)$ with $n_d = 1$ and parameter space $\Omega = \mathbb{R}^D$. Let $\theta_0 \in \Omega$ be a generic, supertransversal parameter. Let $\bf{Z}$ denote the set of finite subsets of $\mathbb{R}^{n_0}$ that are parametrically smooth with respect to $\theta_0$. Then 
\[\FD(\theta_0) \leq \ppVCD(\theta_0) \leq \sup_{Z \in {\bf Z}} \left\{r_\mathbb{R}({\bf J}E^R_Z)\right\}.\]
\end{theorem}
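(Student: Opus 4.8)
The plan is to obtain the two inequalities as suprema, over parametrically smooth batches, of the per-batch bounds already established in Theorem \ref{t:inequalitystring}. The one structural ingredient needed to close the argument is the reduction to parametrically smooth batches provided by Corollary \ref{cor:ParSmoothPsi}, which is available precisely because $\theta_0$ is generic and supertransversal. Concretely, combining that corollary with Remark \ref{r:batchppD} yields the identity
\[
\ppVCD(\theta_0) \;=\; \sup_{Z \in {\bf Z}'} \psi_Z(\theta_0) \;=\; \sup_{Z \in {\bf Z}} \psi_Z(\theta_0),
\]
so it suffices to control $\psi_Z(\theta_0)$ for parametrically smooth $Z \in {\bf Z}$.

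For a fixed $Z \in {\bf Z}$, Remark \ref{rem:algstable} places $\theta_0$ in the algebraically stable set for $Z$, so Theorem \ref{t:inequalitystring} applies and furnishes an open neighborhood $U \ni \theta_0$ with $\bFD(\theta_0,Z) \leq \psi_Z(\theta) \leq r_{\mathbb{R}}({\bf J}E^R_Z(\theta))$ for every $\theta \in U$. Evaluating at the particular point $\theta = \theta_0 \in U$ gives the double bound $\bFD(\theta_0,Z) \leq \psi_Z(\theta_0) \leq r_{\mathbb{R}}({\bf J}E^R_Z(\theta_0))$, valid for each $Z \in {\bf Z}$.

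The lower bound then follows by taking the supremum of the left inequality over $Z \in {\bf Z}$: since $\FD(\theta_0) = \sup_{Z \in {\bf Z}} \bFD(\theta_0,Z)$ by Definition \ref{def:batchfundim}, monotonicity of the supremum gives $\FD(\theta_0) \leq \sup_{Z \in {\bf Z}} \psi_Z(\theta_0) = \ppVCD(\theta_0)$. Dually, taking the supremum of the right inequality gives $\ppVCD(\theta_0) = \sup_{Z \in {\bf Z}} \psi_Z(\theta_0) \leq \sup_{Z \in {\bf Z}} r_{\mathbb{R}}({\bf J}E^R_Z(\theta_0))$; these comparisons are valid whether the quantities involved are finite or infinite, since they are obtained by applying $\sup$ to termwise inequalities.

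I expect the only genuinely delicate point to be the invocation of Corollary \ref{cor:ParSmoothPsi}, rather than the bookkeeping with suprema. Its engine is Proposition \ref{prop:ParSmooth}, which is where the ReLU-specific geometry enters: one must replace an optimal but possibly non-parametrically-smooth persistently pseudoshattered batch by a parametrically smooth batch of the same cardinality, using generic supertransversality to perturb any sample point whose ternary code contains a $0$ into an adjacent $\pm$-activation region without destroying any of the achieved sign patterns. Without this reduction the upper bound would be vacuous, because ${\bf J}E^R_Z(\theta_0)$ need not even be defined for a batch $Z$ that fails to be parametrically smooth (cf.\ the $\mathbb{R}$-linear independence conclusion of Lemma \ref{l:upperbound}, which also presupposes algebraic stability). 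I would therefore be careful to confirm that every batch entering the right-hand supremum lies in ${\bf Z}$ and that Theorem \ref{t:inequalitystring}'s algebraic-stability hypothesis is met before applying it.
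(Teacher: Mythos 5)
Your proposal is correct and follows essentially the same route as the paper's proof: apply Theorem \ref{t:inequalitystring} at $\theta_0$ for each parametrically smooth batch $Z$, use Corollary \ref{cor:ParSmoothPsi} (via Remark \ref{r:batchppD}) to identify $\ppVCD(\theta_0)$ with $\sup_{Z \in {\bf Z}} \psi_Z(\theta_0)$, and take suprema of the termwise inequalities. Your additional care in checking that $\theta_0$ lies in the algebraically stable set for each $Z \in {\bf Z}$ (Remark \ref{rem:algstable}) before invoking Theorem \ref{t:inequalitystring} matches the paper's reasoning exactly.
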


\begin{proof}
Recall that for any finite set $Z \subset \mathbb{R}^{n_0}$,  $\psi_Z(\theta_0)$ denotes the cardinality of the largest subset of $Z$ that is persistently pseudo-shattered at $\theta_0$. 
Theorem \ref{t:inequalitystring} tells us that for each finite batch $Z$ for which $\theta_0$ is in the algebraically stable set (equivalently, $Z$ is parametrically smooth for $\theta_0$ -- see Remark \ref{rem:algstable} and Lemma \ref{lem:poly}), we have \[\bFD(\theta_0,Z) \leq \psi_Z(\theta_0) \leq r_\mathbb{R}({\bf J}E^R_Z(\theta_0)).\]
Hence
\begin{equation} \label{eq:SupInequalityString} \textrm{sup}_{Z \in \bf Z} \  \bFD(\theta_0,Z) \leq  \textrm{sup}_{Z \in \bf Z} \ \psi_Z(\theta_0) \leq \textrm{sup}_{Z \in \bf Z} \  r_\mathbb{R}({\bf J}E^R_Z(\theta_0)).
\end{equation}

Let ${\bf Z}'$ denote the set of finite subsets of $\mathbb{R}^{n_0}$ (with no additional conditions imposed). By definition, $$\ppVCD(\theta_0) = \sup_{Z \in {\bf Z}'} \{|Z|: Z \textrm{ is persistently pseudo-shattered at } \theta_0\}.$$    Equivalently,
    $$\ppVCD(\theta_0) = \sup_{Z \in {\bf Z}'}\psi_Z(\theta_0).$$
Since $\theta_0$ is assumed to be generic and supertransversal, Corollary \ref{cor:ParSmoothPsi} tells us that  \[\sup_{Z \in {\bf Z}'} \psi_Z(\theta_0) = \sup_{Z \in {\bf Z}} \psi_Z(\theta_0).\] Therefore 
\begin{equation} \label{eq:rewritePPVCD} 
\ppVCD(\theta_0) = \sup_{Z \in {\bf Z}} \psi_Z(\theta_0).  
\end{equation}
By definition,
\begin{equation} \label{eq:fundimdef}
\FD(\theta_0) = \sup_{Z \in {\bf Z}} \bFD(\theta_0,Z).
\end{equation}
Plugging \eqref{eq:fundimdef} and \eqref{eq:rewritePPVCD} into \eqref{eq:SupInequalityString} yields the result. 
\end{proof}

\section{Activation matrix ranks bound Jacobian ranks}
\label{sec:ActMatrix}

In the following, let $\mathcal{F}$ be a parameterized family of ReLU network functions of architecture $(n_0, \ldots, n_{d-1}|n_d)$ with $n_d = 1$ and parameter space $\Omega = \mathbb{R}^D$. Let $\theta_0 \in \Omega$ be a generic, supertransversal parameter, and $Z = \{z_1, \ldots, z_m\}$ be a batch of parametrically smooth points for $\theta_0$.

\begin{definition} \cite[Sec. 2.2]{BP1}
\label{def:activationmatrix}
Let $\mathcal{F}, \theta_0, Z$ be as above. 
\begin{enumerate}[(1)]
\item The {\em algebraic activation matrix}, $\alpha^R(\theta_0,Z)$, of open, complete paths for $Z$ with respect to $\theta_0$ is the matrix with entries in the ring $\mathbb{R}[x_1, \ldots, x_{n_0}]$ whose $m$ rows correspond to $z_1, \ldots, z_m$, whose $|\Gamma|$ columns correspond to $\gamma \in \Gamma$, and for which the entry in the $z_i$ row and $\gamma$ column is:
\begin{itemize}
    \item $x_j$ if $\gamma \in \Gamma^{\theta_0}_{z_i,j}$
    \item $1$ if $\gamma \in \Gamma^{\theta_0}_{z_i,*}$ 
    \item $0$ otherwise
\end{itemize} 
\item The {\em real activation matrix}, $\alpha(\theta_0,Z)$, is the matrix with entries in $\mathbb{R}$ obtained by evaluating the $m$ rows of $\alpha^R(\theta_0, Z)$ at $z_1, \ldots, z_m$, respectively. That is, if we vectorize the notation in Equation \ref{eq:substitutex} in the standard way, \[\alpha(\theta_0,Z) = \alpha^R(\theta_0,Z)/(X \rightarrow Z).\]
\end{enumerate}
\end{definition}

\begin{remark}
$\alpha^R(\theta_0,Z)$ and $\alpha(\theta_0,Z)$ are closely related to the {\em activation operators} defined in \cite{BP1}, so we match their notation and terminology. Many of the ideas that make both the algebraic and real versions of the activation matrix natural objects of study also appear in \cite{GS}. 
\end{remark}

The following result, essentially stated and proved in \cite{BP1} (see also \cite{GS}),\footnote{\cite{BP1,GS} do not explicitly adopt the perspective of considering matrices over polynomial rings, but these ideas are at the heart of what they do.} allows for the following convenient means of calculating the polynomials associated to the $\pm$ activation regions of any generic, supertransversal $\theta_0 \in \Omega$.

\begin{lemma}
\label{l:actprod}
Let $\mathcal{F}, \theta_0, Z$ be as above, and let ${\bf \Gamma}$ be the $|\Gamma| \times 1$ matrix over $\mathbb{R}[\theta_1, \ldots, \theta_D]$ whose entry in the column corresponding to a path $\gamma$ is $m(\gamma)$. Then viewing both $\alpha^R(\theta_0,Z)$ and ${\bf \Gamma}$ as matrices over $\mathbb{R}[\theta_1, \ldots, \theta_D,x_1, \ldots, x_{n_0}]$, the $i$th row of the matrix product $\alpha^R(\theta_0, Z){\bf \Gamma}$ is $P(\theta_0, z_i)$, and the $i$th row of the matrix product $\left\{\alpha^R(\theta_0,Z){\bf \Gamma}\right\}/(X \rightarrow Z)$ is $E^R_Z(\theta_0)$.   
\end{lemma}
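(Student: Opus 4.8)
The plan is to prove the identity by direct computation, unpacking the definition of the matrix product and then recognizing the result via Lemma \ref{lem:poly}. Since $\alpha^R(\theta_0,Z)$ has entries in $\mathbb{R}[x_1,\ldots,x_{n_0}]$ and $\mathbf{\Gamma}$ has entries in $\mathbb{R}[\theta_1,\ldots,\theta_D]$, both embed in the common ring $\mathbb{R}[\theta_1,\ldots,\theta_D,x_1,\ldots,x_{n_0}]$, so the product $\alpha^R(\theta_0,Z)\mathbf{\Gamma}$ is a well-defined $m \times 1$ matrix over this ring, and it suffices to identify its $i$th entry.

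First I would write out the $i$th entry of $\alpha^R(\theta_0,Z)\mathbf{\Gamma}$ as the single sum $\sum_{\gamma \in \Gamma} \alpha^R(\theta_0,Z)_{i,\gamma}\, m(\gamma)$, where the coefficient $\alpha^R(\theta_0,Z)_{i,\gamma}$ is read off from Definition \ref{def:activationmatrix}. The next step is to split this sum according to the three cases in that definition: paths $\gamma \in \Gamma^{\theta_0}_{z_i,*}$ contribute $m(\gamma)$, paths $\gamma \in \Gamma^{\theta_0}_{z_i,j}$ contribute $x_j\, m(\gamma)$, and all remaining paths contribute $0$. Using that the open complete paths decompose as the disjoint union $\Gamma^{\theta_0}_{z_i} = \Gamma^{\theta_0}_{z_i,*} \sqcup \bigsqcup_{j=1}^{n_0}\Gamma^{\theta_0}_{z_i,j}$ (every complete path begins either at an input vertex or a bias vertex, per Definition \ref{defn:opencompletepath}), this collapses the entry to
\[\sum_{\gamma \in \Gamma^{\theta_0}_{z_i,*}} m(\gamma) + \sum_{j=1}^{n_0} x_j \sum_{\gamma \in \Gamma^{\theta_0}_{z_i,j}} m(\gamma),\]
which is exactly the expression for $P(\theta_0,z_i)$ furnished by Lemma \ref{lem:poly}. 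This establishes the first assertion.

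For the second assertion, I would simply apply the substitution $/(X \to Z)$, which by the vectorized convention of Equation \eqref{eq:substitutex} acts on the $i$th row by $/(x \to z_i)$. Hence the $i$th row of $\{\alpha^R(\theta_0,Z)\mathbf{\Gamma}\}/(X \to Z)$ is $P(\theta_0,z_i)/(x \to z_i)$, which is precisely $E^R_{z_i}(\theta_0)$ by Definition \ref{def:algebraicversions}.

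The argument is essentially bookkeeping, so I do not anticipate a genuine obstacle; the one point requiring care is to keep the roles of the combinatorial data and the formal variables cleanly separated. Specifically, the parameter $\theta_0$ and point $z_i$ enter only through which paths are open (determining the index sets $\Gamma^{\theta_0}_{z_i,*}$ and $\Gamma^{\theta_0}_{z_i,j}$), whereas the monomials $m(\gamma)$ retain $\theta_1,\ldots,\theta_D$ as formal variables and the activation-matrix entries retain $x_1,\ldots,x_{n_0}$ as formal variables; only at the very last step, in passing from $P(\theta_0,z_i)$ to $E^R_{z_i}(\theta_0)$, are the $x_j$ replaced by the coordinates of $z_i$. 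Being explicit that $P(\theta_0,z_i)$ denotes the formal polynomial in $\mathbb{R}[\theta_1,\ldots,\theta_D,x_1,\ldots,x_{n_0}]$ representing $\mathcal{F}$ near $(\theta_0,z_i)$ (and not its value) is what makes Lemma \ref{lem:poly} apply verbatim.
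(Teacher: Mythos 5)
Your proposal is correct and takes essentially the same route as the paper, which simply declares the lemma ``immediate from the definitions'' (noting that parametric smoothness of $Z$ places each $z_i$ in a $\pm$ activation region so that $P(\theta_0,z_i)$ is well-defined) and then invokes Lemma \ref{lem:poly} and the substitution $/(x\to z_i)$ exactly as you do. Your write-up just makes the bookkeeping explicit.
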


\begin{proof}
    Immediate from the definitions, once we restrict to $\theta_0$ being generic and supertransversal and $Z$ being a batch of parametrically smooth points for $\theta_0$, which forces each $z_i$ to be in a $\pm$ activation region for $\theta_0$, guaranteeing that $P(\theta_0, z_i)$ is well-defined for all $i$. Since the $i$th row of $\alpha^R(\theta_0, Z){\bf \Gamma}$ is $P(\theta_0, z_i)$, its evaluation at $z_i$ is precisely $E^R_{z_i} := P(\theta_0,z_i)/(x \rightarrow z_i)$. See also \cite[Prop.1]{BP1} and \cite[Sec. 4]{GS}.
\end{proof}

We are now ready to prove the main result of this section.

\begin{proposition}
\label{p:multirankupperbound} Let $\mathcal{F}, \theta_0, Z$ be as above. Then we have the following upper bound on the $\mathbb{R}$--row rank of the algebraic Jacobian matrix, ${\bf J}E^R_Z$:
\begin{eqnarray}
\label{eq:Rankineq}
r_{\mathbb{R}}({\bf J}E^R_Z) &\leq& \mbox{Rank}(\alpha(\theta_0,Z)).
\end{eqnarray}
\end{proposition}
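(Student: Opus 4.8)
The plan is to exploit the matrix factorization of $E^R_Z(\theta_0)$ supplied by Lemma \ref{l:actprod} and then differentiate. Writing $\mathbf{\Gamma}$ for the $|\Gamma| \times 1$ column vector of path monomials $m(\gamma) \in R$ as in that lemma, Lemma \ref{l:actprod} gives the identity of $R$-valued column vectors
\[
E^R_Z(\theta_0) = \alpha(\theta_0, Z)\, \mathbf{\Gamma},
\]
where $\alpha(\theta_0, Z)$ is the real $m \times |\Gamma|$ activation matrix. The crucial structural point is that $\alpha(\theta_0, Z)$ has \emph{constant} (real) entries — each is $0$, $1$, or a coordinate of some $z_i$ — so that only $\mathbf{\Gamma}$ carries dependence on the formal variables $\theta_1, \ldots, \theta_D$.

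First I would differentiate this identity entrywise with respect to $\theta$. Letting $\mathbf{J}\mathbf{\Gamma}$ denote the $|\Gamma| \times D$ matrix over $R$ whose $\gamma$th row is the formal gradient $\nabla_\theta m(\gamma)$, the constancy of $\alpha(\theta_0, Z)$ lets the real coefficients pass outside the derivative, yielding the factorization
\[
\mathbf{J}E^R_Z = \alpha(\theta_0, Z)\, \mathbf{J}\mathbf{\Gamma}
\]
as matrices over $R$. Concretely, the $i$th row of $\mathbf{J}E^R_Z$ is the real linear combination $\sum_{\gamma} \alpha(\theta_0,Z)_{i,\gamma}\, (\mathbf{J}\mathbf{\Gamma})_{\gamma,\bullet}$ of the rows of $\mathbf{J}\mathbf{\Gamma}$.

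Next I would translate this into a statement about $\mathbb{R}$-row rank. Consider the $\mathbb{R}$-linear map $T: \mathbb{R}^{|\Gamma|} \to R^D$ defined by $T(v) = v^{T}\mathbf{J}\mathbf{\Gamma}$. Each row of $\mathbf{J}E^R_Z$ is the image under $T$ of the corresponding row of $\alpha(\theta_0, Z)$, so the $\mathbb{R}$-span of the rows of $\mathbf{J}E^R_Z$ is exactly $T$ applied to the $\mathbb{R}$-row space of $\alpha(\theta_0, Z)$. Since a linear map cannot increase dimension,
\[
r_{\mathbb{R}}(\mathbf{J}E^R_Z) = \dim_{\mathbb{R}} T\bigl(\mbox{row space of } \alpha(\theta_0,Z)\bigr) \leq \dim_{\mathbb{R}}\bigl(\mbox{row space of } \alpha(\theta_0,Z)\bigr) = \mbox{Rank}(\alpha(\theta_0, Z)),
\]
which is the desired inequality. (This is the instance for $B = \alpha(\theta_0,Z)$, $C = \mathbf{J}\mathbf{\Gamma}$ of the general fact that a product $A = BC$ with $B$ real satisfies $r_\mathbb{R}(A) \leq \mbox{Rank}(B)$.)

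The two factorization identities are the routine verifications; the genuinely load-bearing observations are (i) that $\alpha(\theta_0, Z)$ is a constant real matrix, so that differentiation respects the product, and (ii) that left multiplication by a real matrix cannot push $\mathbb{R}$-row rank above the real rank of that matrix, via the dimension count for $T$. The main obstacle I anticipate is purely bookkeeping: ensuring the open-path combinatorics encoded in $\alpha(\theta_0, Z)$ is genuinely fixed near $\theta_0$, so that $E^R_{z_i}(\theta_0)$ really is the single polynomial $\sum_\gamma \alpha(\theta_0,Z)_{i,\gamma}\, m(\gamma)$ whose formal gradient appears in $\mathbf{J}E^R_Z$. This is exactly what the hypotheses buy us: $\theta_0$ generic and supertransversal together with $Z$ parametrically smooth forces every $z_i$ into a $\pm$ activation region, which is the precise condition under which Lemma \ref{l:actprod} applies.
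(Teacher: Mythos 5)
Your proof is correct and is essentially the paper's argument: both rest on the identity $E^R_Z(\theta_0)=\alpha(\theta_0,Z)\mathbf{\Gamma}$ from Lemma \ref{l:actprod} together with the observation that, since $\alpha(\theta_0,Z)$ is a constant real matrix and formal differentiation is $\mathbb{R}$-linear, every $\mathbb{R}$-linear dependence among its rows transfers to the rows of $\mathbf{J}E^R_Z$. The paper writes these dependencies out explicitly via a lower-triangular matrix $C$ annihilating the dependent rows, whereas you package the same content as the factorization $\mathbf{J}E^R_Z=\alpha(\theta_0,Z)\,\mathbf{J}\mathbf{\Gamma}$ and a dimension count; the two are interchangeable.
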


\begin{proof}
First note that Lemma \ref{lem:poly} tells us that for all $i = 1, \ldots, m$ and all $j = 1, \ldots, n_0$, denoting the $j$th coordinate of $z_i$ by $z_{i,j}$ we have: 
\[
P(\theta_0,z_i)/(x \rightarrow z_i) =  z_{i,j}\sum_{\gamma \in \Gamma^{\theta_0}_{z_i,j}} m(\gamma) + \sum_{\gamma \in \Gamma^{\theta_0}_{z_i,*}}m(\gamma).\] Then by Lemma \ref{l:actprod} and the fact that ${\bf \Gamma}$ has no $x$ dependence, we see that \[\left\{\alpha^R(\theta_0,Z){\bf \Gamma}\right\}/(X \rightarrow Z) = \alpha(\theta_0,Z){\bf \Gamma}.\]

Now suppose $\mbox{Rank}(\alpha(\theta_0,Z)) = k$ as a matrix over $\mathbb{R}$. Then we may assume without loss of generality (by permuting the rows if necessary) that the first $k$ rows, $R_1, \ldots, R_k$, of $\alpha(\theta_0,Z)$ are linearly independent and that for each $i = k+1, \ldots, m$, there exist $c_{i,j} \in \mathbb{R}$ such that 
\begin{equation}
    \label{eq:rowdepend}
    R_{i} = \sum_{j=1}^k c_{i,j} R_j
\end{equation} 

But this immediately implies that for all $k+1\leq i \leq n$, we have \begin{equation} \label{eq:polylinrelation}
E^R_{z_i}(\theta_0) = \sum_{j=1}^k c_{i,j} E^R_{z_j}(\theta_0),
\end{equation}

since if $C$ is the $m \times m$ lower triangular matrix over $\mathbb{R}$ with 
\begin{itemize}
    \item $1's$ on the diagonal,
    \item $-c_{i,j}$ in the $i$th row and $j$th column when $k+1 \leq i \leq m$ and $1 \leq j \leq k$,
    \item $0$ otherwise,
\end{itemize}  then rows $k+1, \ldots, m$ of the matrix product $C \alpha(\theta_0,Z)$ are all ${\bf 0} \in \mathbb{R}^{|\Gamma|}$  by Equation \ref{eq:rowdepend}, which implies that rows $k+1, \ldots, m$ of \[(C\alpha(\theta_0,Z)){\bf \Gamma} = C(\alpha(\theta_0,Z){\bf \Gamma})\] are ${\bf 0}$ as well. Note here that although $C$ and $\alpha(\theta_0,Z)$ have entries in $\mathbb{R}$, we are regarding them  in the product above as matrices over $R = \mathbb{R}[\theta_1, \ldots, \theta_D]$, and using the fact that if $A \in M_{m\times n}(R)$ and $B \in M_{n\times p}(R)$ are matrices over a commutative ring $R$, and the $i$th row of $A$ is ${\bf 0} \in R^n$, then the $i$th row of $AB$ is ${\bf 0} \in R^p$.

Now for each $k+1 \leq i \leq m$, the linearity of partial derivatives tells us that the formal partial derivatives of $E^R_{z_i}$ with respect to the parameters $\theta$ satisfy the same $\mathbb{R}$--linear dependence relation as in Equation \ref{eq:polylinrelation}. That is, for each weight $w$ (component of the weight matrix $W^\ell$) associated to each layer map $F^\ell$ we have:

\[ \frac{\partial E^R_{z_i}}{\partial w}  = \sum_{j=1}^k c_{i,j}\frac{\partial E^R_{z_j}(\theta_0)}{\partial w},\]

and for each bias $b$ associated to each layer map $F^\ell$, we also have: 
\[ \frac{\partial E^R_{z_i}(\theta_0)}{\partial b}  = \sum_{j=1}^k c_{i,j}\frac{\partial E^R_{z_j}(\theta_0)}{\partial b},\]

which tells us that the formal gradient vectors also satisfy the same linear dependence relation over $\mathbb{R}$:
\[\nabla_\theta E^R_{z_i}(\theta_0) = \sum_{j=1}^k c_{i,j}\nabla_\theta E^R_{z_j}(\theta_0).\] But by definition, $\nabla_\theta E^R_{z_i}(\theta_0)$ is the $i$th row of ${\bf J}E^R_Z$, which tells us that the $j$th row of the matrix product $C{\bf J}E^R_Z(\theta_0)$ is ${\bf 0} \in R^{|\Gamma|}$ for $j = k+1, \ldots, m$. 

We conclude that \[r_{\mathbb{R}}({\bf J}E^R_Z) \leq k = \mbox{rank}(\alpha(\theta_0,Z)),\] as desired.    
\end{proof}

We end this section by noting that although we don't have enough concrete evidence to state the following as a conjecture, it is hard to imagine how one might obtain $R$--linear dependence relations among the rows of ${\bf J}E_Z$ other than via the mechanism described above, since the polynomials are constructed as sums of paths, and the derivatives also have interpretations via sums of paths. We therefore ask:

\begin{question}
\label{q:OtherBound}
With notation and assumptions as in Proposition \ref{p:multirankupperbound}, is it the case that for all parametrically smooth batches $Z$ for generic, supertransversal parameters $\theta_0 \in \Omega$, we have \[\mbox{Rank}(\alpha(\theta_0,Z)) \leq r_R({\bf J}E^R_Z)?\] 
\end{question} 

Combining Proposition \ref{p:multirankupperbound} with Theorem \ref{t:inequalitystring}, we note that if the answer to Question \ref{q:OtherBound} is yes, then all of the inequalities in Theorem \ref{t:inequalitystring} will be forced to be equalities. We will go ahead and conjecture:

\begin{conjecture}
\label{c:pPdimequalsfundim} Let $\mathcal{F}$ be a parameterized family of ReLU network functions of architecture $(n_0, \ldots, n_{d-1}|1)$ and parameter space $\Omega = \mathbb{R}^D$. Let $\theta_0 \in \Omega$ be a generic, supertransversal parameter, and $Z = \{z_1, \ldots, z_m\}$ be a batch of parametrically smooth points for $\theta_0$. Then

\[\mbox{dim}_{fun}(\theta_0) = \ppVCD(\mathcal{F},\theta_0)\]
\end{conjecture}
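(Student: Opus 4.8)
The plan is to reduce the conjecture to the single batch-level equality $\bFD(\theta_0,Z) = \psi_Z(\theta_0)$ holding for every parametrically smooth batch $Z$, and then to pass to the supremum over $Z$. Recall from the proof of Theorem \ref{thm:supfundimbounds} that $\FD(\theta_0) = \sup_Z \bFD(\theta_0,Z)$ (Eq.~\eqref{eq:fundimdef}) and $\ppVCD(\theta_0) = \sup_Z \psi_Z(\theta_0)$ (Eq.~\eqref{eq:rewritePPVCD}), both suprema ranging over parametrically smooth $Z$; so it suffices to show these two families of batch quantities agree batch by batch. By Theorem \ref{t:inequalitystring}, at $\theta_0$ we already have
\[\bFD(\theta_0,Z) \leq \psi_Z(\theta_0) \leq r_{\mathbb{R}}(\mathbf{J}E^R_Z(\theta_0)),\]
and on the real rank stable set $\bFD(\theta_0,Z) = r_R(\mathbf{J}E^R_Z(\theta_0))$. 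Thus the entire obstruction is the \emph{rank gap} $r_{\mathbb{R}}(\mathbf{J}E^R_Z) - r_R(\mathbf{J}E^R_Z)$: if it vanishes, the displayed inequalities collapse to equalities and the batch statement follows. The first step, therefore, is to reduce the conjecture to showing the rank gap is zero, which in turn follows from an affirmative answer to Question \ref{q:OtherBound}: combining $\mbox{Rank}(\alpha(\theta_0,Z)) \leq r_R(\mathbf{J}E^R_Z)$ (Question \ref{q:OtherBound}) with $r_{\mathbb{R}}(\mathbf{J}E^R_Z) \leq \mbox{Rank}(\alpha(\theta_0,Z))$ (Proposition \ref{p:multirankupperbound}) and $r_R \leq r_{\mathbb{R}}$ (Remark \ref{r:rlessrho}) sandwiches all four quantities into one common value. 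One should also record the measure-zero caveat that generic supertransversality alone does not force $\theta_0$ into the real rank stable set, so either a mild additional genericity hypothesis is imposed or one verifies the bad batches cannot affect the suprema.

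To attack Question \ref{q:OtherBound} I would exploit the path factorization of the algebraic Jacobian. By Lemma \ref{l:actprod}, $E^R_Z(\theta_0) = \alpha(\theta_0,Z)\,\mathbf{\Gamma}$, and since the real activation matrix $\alpha(\theta_0,Z)$ has parameter-free entries, differentiating in $\theta$ yields the clean factorization
\[\mathbf{J}E^R_Z(\theta_0) = \alpha(\theta_0,Z)\cdot \mathbf{J}\mathbf{\Gamma},\]
where $\mathbf{J}\mathbf{\Gamma}$ is the $|\Gamma|\times D$ matrix of formal partials $\partial m(\gamma)/\partial \theta_k$ of the path monomials. Set $k = \mbox{Rank}(\alpha(\theta_0,Z))$ and fix a set of $k$ rows of $\alpha$ that are $\mathbb{R}$-independent; by Corollary \ref{cor:Rlindep} it then suffices to produce one $k\times k$ minor of the corresponding rows of $\mathbf{J}E^R_Z$ whose determinant is a nonzero element of $R$. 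Expanding such a minor by the Cauchy--Binet formula over $R$ rewrites it as $\sum_{|S|=k}\det(\alpha_{I,S})\,\det((\mathbf{J}\mathbf{\Gamma})_{S,J})$, in which the scalar coefficients $\det(\alpha_{I,S})\in\mathbb{R}$ are not all zero (the chosen rows of $\alpha$ are independent). The task thereby becomes choosing a set $J$ of $k$ parameters for which the polynomial minors $\det((\mathbf{J}\mathbf{\Gamma})_{S,J})$ do not conspire to cancel.

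The main obstacle --- and the reason this remains a conjecture rather than a theorem --- is controlling that cancellation, that is, characterizing exactly which $R$-linear dependence relations $\sum_\gamma q_\gamma(\theta)\,\nabla_\theta m(\gamma)=0$ can hold among the gradients of the path monomials. Here I would lean on the combinatorics of Lemma \ref{lem:poly} and Definition \ref{defn:weightedcomputationalgraph}: each complete path $\gamma$ traverses exactly one labelled edge per layer, all edge-variables are distinct, so every $m(\gamma)$ is a squarefree monomial and distinct paths give distinct monomials. This structure suggests that the monomial supports of the minors $\det((\mathbf{J}\mathbf{\Gamma})_{S,J})$ attached to different column sets $S$ are governed by separable collections of monomials, which would preclude cancellation for a suitably chosen $J$ and deliver the desired nonvanishing minor. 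Making this monomial bookkeeping rigorous --- accounting for paths that share sub-paths and for the differing degrees of input-initiated versus bias-initiated paths --- is the genuinely hard step; it is precisely the structural claim the authors flag as unresolved, and I expect it to require a careful induction on the graded layers of the augmented computational graph rather than any soft rank argument.
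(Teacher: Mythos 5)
This statement is a \emph{conjecture}: the paper offers no proof of it, and explicitly flags (via Question~\ref{q:OtherBound}) that the key step is open. Your proposal is likewise not a proof, and to your credit you say so. Your reduction is exactly the one the paper itself sketches in Sections~\ref{sec:RankGap} and~\ref{sec:ActMatrix}: write $\FD(\theta_0)=\sup_Z \bFD(\theta_0,Z)$ and $\ppVCD(\mathcal{F},\theta_0)=\sup_Z\psi_Z(\theta_0)$ over parametrically smooth batches (Corollary~\ref{cor:ParSmoothPsi} justifies restricting the latter supremum), invoke Theorem~\ref{t:inequalitystring} to trap $\psi_Z(\theta_0)$ between $r_R(\mathbf{J}E^R_Z)$ and $r_{\mathbb{R}}(\mathbf{J}E^R_Z)$, and observe that a positive answer to Question~\ref{q:OtherBound}, combined with Proposition~\ref{p:multirankupperbound} and Remark~\ref{r:rlessrho}, sandwiches $r_R \leq r_{\mathbb{R}} \leq \mathrm{Rank}(\alpha(\theta_0,Z)) \leq r_R$ and collapses everything. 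Your caveat about the real rank stable set is also well taken: generic supertransversality puts $\theta_0$ in the algebraically stable set for $Z$, but the identification $\bFD(\theta_0,Z)=r_R(\mathbf{J}E^R_Z(\theta_0))$ only holds off a further measure-zero locus, so the conjecture as stated needs either that extra genericity or an argument that the bad batches do not affect the suprema.

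Your one genuinely new contribution is the factorization $\mathbf{J}E^R_Z(\theta_0)=\alpha(\theta_0,Z)\cdot\mathbf{J}\mathbf{\Gamma}$ followed by Cauchy--Binet. This is cleaner than the paper's row-manipulation proof of Proposition~\ref{p:multirankupperbound} and it reformulates Question~\ref{q:OtherBound} concretely: one must exhibit a column set $J$ of $k$ parameters for which $\sum_{|S|=k}\det(\alpha_{I,S})\det\bigl((\mathbf{J}\mathbf{\Gamma})_{S,J}\bigr)$ is a nonzero element of $R$. But this is where the argument stops being an argument. The noncancellation claim is precisely the content of Question~\ref{q:OtherBound}, and the heuristic you offer --- that distinct complete paths give distinct squarefree monomials, so the supports of the minors $\det((\mathbf{J}\mathbf{\Gamma})_{S,J})$ should be ``separable'' --- does not survive contact with the combinatorics: paths through a deep network share long subpaths, the monomials $m(\gamma)$ for bias-initiated and input-initiated paths have different degrees, and differentiating $m(\gamma)$ with respect to a shared edge variable produces monomials that genuinely collide across different $\gamma$. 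Until one controls those collisions (your proposed induction on the layers of the augmented computational graph is a plausible but unexecuted plan), the conjecture remains open, both in the paper and in your proposal.
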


\bibliography{dimensionbibliography}

\end{document}